\definecolor{Gray}{gray}{0.9}
\crefname{figure}{Fig.}{Figs.}
\crefname{table}{Tab.}{Tabs.}
\newtheorem{definition}{Definition}
\newtheorem{theorem}{Theorem}
\newtheorem{proof}{Proof}[section]
\NewDocumentCommand { \ie } { }
  {
    \textit{i.e.}
    \peek_meaning_ignore_spaces:NTF .
      { \skip_horizontal:n { -.3ex } \use_none:n }
      {
        \peek_meaning_ignore_spaces:NF ,
          { \skip_horizontal:n { -.3ex } }
      }
  }
\NewDocumentCommand { \eg } { }
  {
    \textit{e.g.}
    \peek_meaning_ignore_spaces:NTF .
      { \skip_horizontal:n { -.3ex } \use_none:n }
      {
        \peek_meaning_ignore_spaces:NF ,
          { \skip_horizontal:n { -.3ex } }
      }
  }
\NewDocumentCommand { \etc } { }
  {
    etc.
    \peek_meaning_ignore_spaces:NT .
      { \use_none:n }
  }
\NewDocumentCommand { \cad } { }
  {
    \textit{c-à-d.}
    \peek_meaning_ignore_spaces:NTF .
      { \skip_horizontal:n { -.3ex } \use_none:n }
      {
        \peek_meaning_ignore_spaces:NF ,
          { \skip_horizontal:n { -.3ex } }
      }
  }
\title{Cross-Domain Few-Shot Learning via Multi-View Collaborative Optimization
with Vision-Language Models}
\author {
    Dexia Chen\textsuperscript{\rm 1},
    Wentao Zhang\textsuperscript{\rm 1},
    Qianjie Zhu\textsuperscript{\rm 2},
    Ping Hu\textsuperscript{\rm 3},
    Weibing Li\textsuperscript{\rm 1},
    Tong Zhang\textsuperscript{\rm 4},
    Ruixuan Wang\textsuperscript{\rm 1}
}
\begin{document}

\nocopyright
\maketitle

\begin{abstract}
\label{sec:abs}
Vision-language models (VLMs) pre-trained on natural image and language data, such as CLIP, have exhibited significant potential in few-shot image recognition tasks, leading to development of various efficient transfer learning methods. These methods exploit inherent pre-learned knowledge in VLMs and have achieved strong performance on standard image datasets. 
However, their effectiveness is often limited when confronted with cross-domain tasks where imaging domains differ from natural images.
To address this limitation, we propose \textbf{Co}nsistency-guided \textbf{Mu}lti-view \textbf{Co}llaborative Optimization (\textbf{CoMuCo}), a novel fine-tuning strategy for VLMs. 
This strategy employs two functionally complementary expert modules to extract multi-view features, while incorporating prior knowledge-based consistency constraints and information geometry-based consensus mechanisms to enhance the robustness of feature learning.
Additionally, a new cross-domain few-shot benchmark is established to help comprehensively evaluate methods on imaging domains distinct from natural images.
Extensive empirical evaluations on both existing and newly proposed benchmarks suggest CoMuCo consistently outperforms current methods in few-shot tasks. 
The code and benchmark will be released.
\end{abstract}

\section{Introduction}

Current deep learning methods often require vast amounts of labeled data which may be prohibitively expensive and difficult to obtain in domains such as rare disease diagnosis and industrial defect detection. To address this challenge, various few-shot learning techniques~\cite{metalearning1, metalearning2} have been developed to enable models to learn effectively from limited data. While previous methods are effective in some scenarios, they generally suffer from limited generalization ability. %

In recent years, the emergence of pre-trained vision-language models (VLMs)~\cite{softclip, StructureCLIP, align, wei2024efficient, flip}, especially CLIP~\cite{CLIP}, offer new solutions for few-shot learning.
An image encoder and a text encoder are commonly included in these models to align image features and text embeddings. The alignment is facilitated by enhancing the cosine similarity of the corresponding image-text pairs. 
After being pre-trained on large amounts of data, these models acquire strong semantic understanding and effective zero-shot image recognition abilities. The powerful feature representation ability and open-vocabulary recognition ability effectively alleviate the problems faced by few-shot learning. 
To enable efficient transfer learning of pre-trained VLMs in few-shot scenarios, a series of fine-tuning techniques have been proposed, \eg, methods based on prompt tuning~\cite{CoOp, CoCoOp, plot, maple, progard} or 
adapter tuning~\cite{Clip-adapter, tip-adapter, huang2024lp++}. 

\begin{figure}
    \centering
    \includegraphics[width=0.45\textwidth]{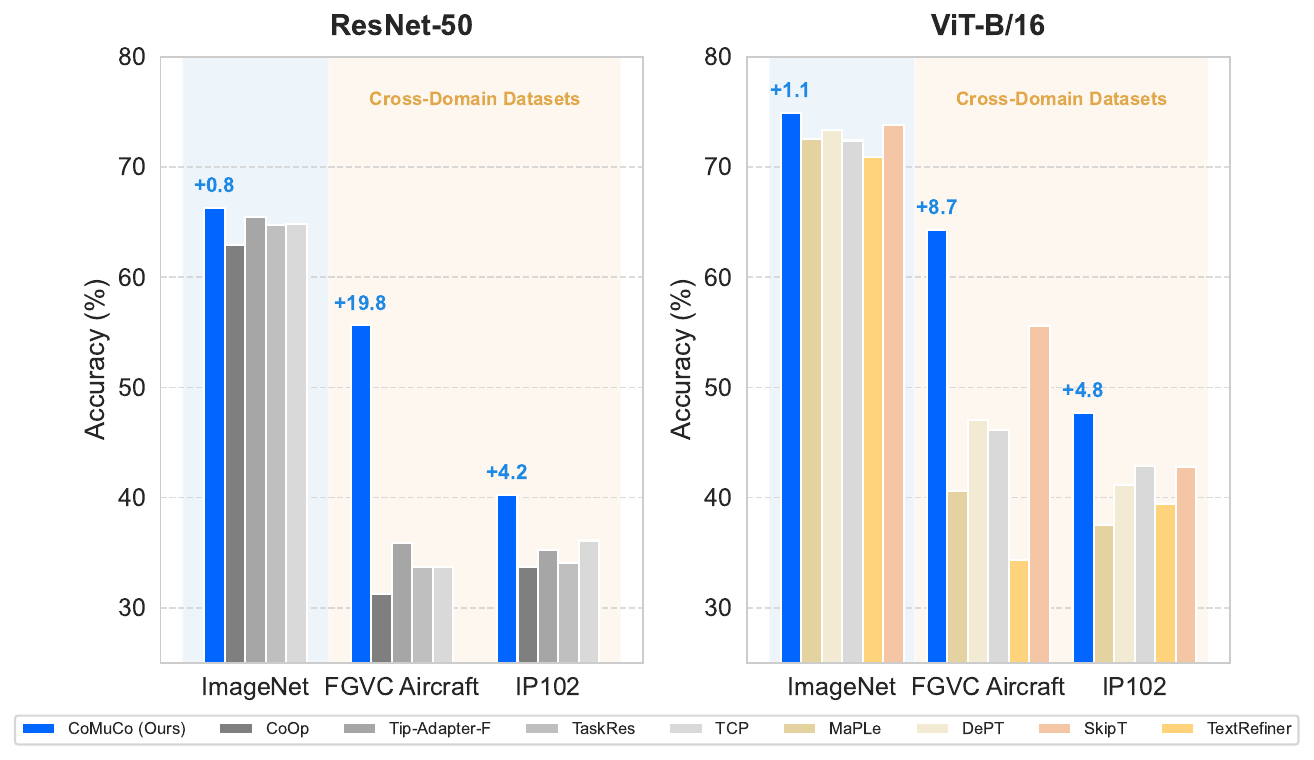}
    \caption{Accuracy (\%) comparison on in-domain (ImageNet) and cross-domain (FGVC Aircraft \& IP102) datasets under the 16-shot setting using ResNet-50 (left) and ViT-B/16 (right). The ‘+’ marks the performance gain of our method over the strongest baseline.
    }
    \label{fig:radar}
\end{figure}

As illustrated in \cref{fig:radar}, since these methods are initially designed to leverage intrinsic knowledge in VLMs, their performance is inherently dependent on the alignment between pre-learned knowledge in VLMs and the to-be-learned knowledge in the downstream task. %
Strong alignment typically results in better performance, whereas in cross-domain settings with substantial domain shifts, the reduced alignment significantly limits their effectiveness.
Furthermore, simple fine-tuning strategies may only assimilate a subset of discriminative features present in the training dataset, while comprehensive discriminative characteristics remain inadequately captured~\cite{ensemble}, thus constraining model performance especially on cross-domain datasets.

To address the challenges of applying VLMs to few-shot learning in cross-domain scenarios and enhance the extraction of discriminative features with VLMs, we propose CoMuCo, a consistency-guided multi-view collaborative optimization framework. 
By establishing diverse learning preferences, this framework effectively facilitates the acquisition of multi-view features.
Specifically, our framework consists of two functionally complementary expert modules, i.e., a Feature Integrator, which extracts and refines knowledge relevant to cross-domain classification from pre-trained models, and a Feature Refiner, which actively learns task-specific features from cross-domain data. 
Both modules are governed by a consensus constraint based on information geometry theory, which promotes the learning of mutually compatible and robust feature representations.
Additionally, a prior consistency constraint is implemented to preserve logits consistency across the fine-tuning process by constraining logits deviations to follow a Laplacian prior distribution, thereby effectively mitigating catastrophic forgetting of general knowledge.

Furthermore, recent efficient transfer learning methods commonly adopt the CLIP Benchmark\footnote{The CLIP Benchmark~\cite{CoOp} consists of 11 widely used datasets for evaluating few-shot learning.} for performance evaluation.
However, many of its datasets have substantial domain overlap with CLIP's pretraining corpus. While datasets such as DTD~\cite{DTD} and EuroSAT~\cite{eurosat} provide some cross-domain variation, their diversity remains limited.
To enable a more comprehensive evaluation across distinct visual domains, we curated a set of datasets that differ significantly from natural images and proposed a new cross-domain few-shot benchmark. Our method was evaluated on both the CLIP Benchmark and our proposed benchmark, consistently achieving state-of-the-art performance.
The main contributions of this study are summarized below. %
\begin{itemize}
\item A novel \textbf{Co}nsistency-guided \textbf{Mu}lti-view \textbf{Co}llaborative Optimization (\textbf{CoMuCo}) 
is proposed to effectively learn knowledge from downstream task data in few-shot scenarios, especially on cross-domain tasks.
\item A prior consistency constraint is proposed, achieving the preservation of prior knowledge by constraining logits drift to satisfy the Laplace distribution.
\item A novel multi-view geodesic consensus mechanism is proposed to facilitate the learning of more robust discriminative representations.
\item Extensive empirical evaluations were performed on both the existing benchmark and the cross-domain benchmark, with SOTA performance achieved by CoMuCo.

\end{itemize}

\section{Related Work}
\begin{figure*}[t]
    \centering
    \includegraphics[width=0.8\textwidth]{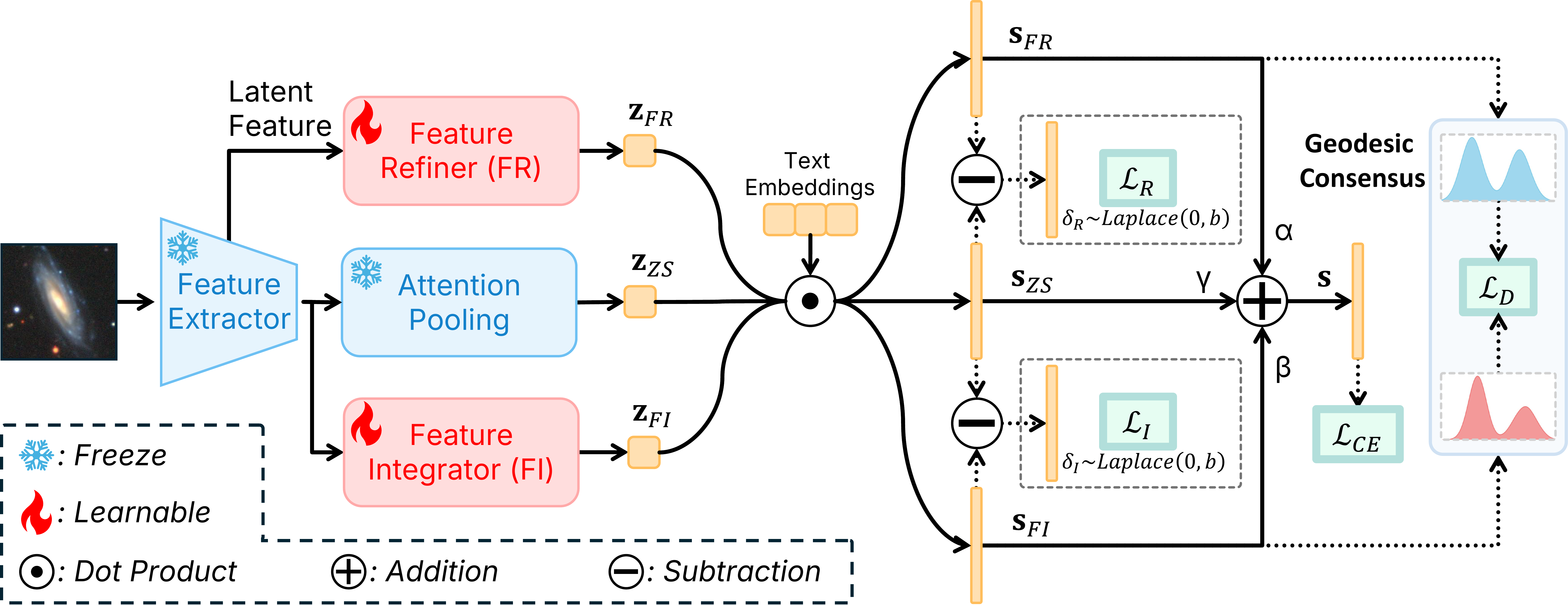}

    \caption{
    Overview of CoMuCo. The proposed framework is constructed around two core modules: the Feature Integrator (FI) and the Feature Refiner (FR). 
    A consensus constraint aligns FI and FR for enhanced feature learning, while a prior consistency constraint regulates logit deviation to preserve zero-shot knowledge. Feature vectors $\textbf{z}_{FI}$, $\textbf{z}_{FR}$, and $\textbf{z}_{ZS}$ are extracted from FI, FR, and frozen CLIP modules, with corresponding logits $\textbf{s}_{FI}$, $\textbf{s}_{FR}$, and $\textbf{s}_{ZS}$ obtained through class text embedding alignment.
    ``Attention Pooling" is configured at the final transformer block in the ViT architecture.
    }
    \label{fig:overview}
\end{figure*}

\noindent\textbf{Vision-Language Models}
The recently developed pre-trained VLMs~\cite{softclip, StructureCLIP, align, wei2024efficient, flip, siglip, siglip2, evaclip, cherti2023reproducible, metaclip} have been widely applied to few-shot learning.
Among these VLMs, CLIP~\cite{CLIP} has garnered significant attention for its 
generalization capability for downstream tasks. 
CLIP is pre-trained on a vast number of image-text pairs, learning the semantic relationships between images and text, which enables it to extract visual features rich in semantic information. This powerful pretraining capability renders CLIP a promising base model for transfer learning. 

CLIP consists of two primary components, \ie, an image encoder and a text encoder. Specifically, for a query image and $C$ class categories, with each category associated with a textual sentence, CLIP firstly extracts the image feature $\mathbf{z}\in \mathbb{R}^{d}$ and text embeddings $\mathbf{t} \in \mathbb{R}^{C\times d}$ via the image encoder and the text encoder, respectively, where $d$ represents the feature dimension. Then, the similarity between the image feature and each text embedding is computed, resulting in a similarity vector $\mathbf{s} = {sim}(\mathbf{z}, \mathbf{t}) \in \mathbb{R}^{C}$, where $sim(\cdot, \cdot)$ represents the cosine similarity measurement. Consequently, the  probability output is given by $\mathbf{p} = {softmax}({\mathbf{s}/\tau})$, where $\tau$ is the temperature coefficient. 
The class with the highest similarity score is selected as the prediction. 

\noindent\textbf{Efficient Transfer Learning}
To fully leverage the pre-trained knowledge of VLMs in few-shot learning scenarios, a series of efficient transfer learning methods have been developed~\cite{CoOp, plot, calip, huang2024lp++, maple, yao2024tcp, zhang2024dept, CoCoOp, tip-adapter, taskres}. These methods can be primarily divided into two groups, prompt-tuning~\cite{plot, maple, yao2024tcp, CoOp, CoCoOp, zhang2024dept} and adapter-tuning~\cite{huang2024lp++, tip-adapter, Clip-adapter, taskres}. 
Prompt-tuning methods like CoOp~\cite{CoOp} use learnable prompts in the text encoder, while adapter-tuning methods such as CLIP-Adapter~\cite{Clip-adapter} add lightweight modules to encoders.
Although these methods demonstrate robust performance, their model adaptations are predominantly constrained to input tokens and output features, thereby limiting their effectiveness in cross-domain scenarios.
By introducing two complementary expert modules alongside prior constraints and information-geometric consensus constraints, our method markedly mitigates this issue and improves the model’s generalization ability.

\section{Method}

\subsection{Overview}
To facilitate the comprehensive learning of discriminative features from downstream task data, CoMuCo, a consistency-guided multi-view collaborative optimization framework, is introduced. 
As illustrated in \cref{fig:overview}, this framework facilitates the learning of features from different perspectives by incorporating two functionally complementary modules: the Feature Integrator (FI) and the Feature Refiner (FR). Specifically,  FI is designed to extract and refine knowledge pre-learned by the VLM that remains relevant to the downstream classification task, whereas FR actively learns novel task-specific knowledge from downstream data. 
To prevent excessive forgetting of the pre-trained model's general knowledge within each module, a prior consistency constraint is enforced in the logit space. Specifically, the deviation between each module's logits and those of zero-shot CLIP is encouraged to follow a zero-mean Laplacian distribution, thereby promoting minimal modifications to the logits and ensuring that pre-learned knowledge is preserved throughout the training process of each module.
Furthermore, to enhance the robustness of feature learning, we propose a multi-view consensus mechanism grounded in information geometry theory, which approximately minimizes the squared geodesic distance between probability distributions from different perspectives on a statistical manifold, thereby fostering compatibility across views and promoting more robust feature learning.

\begin{figure}[t]
    \centering
    \includegraphics[width=0.43\textwidth]{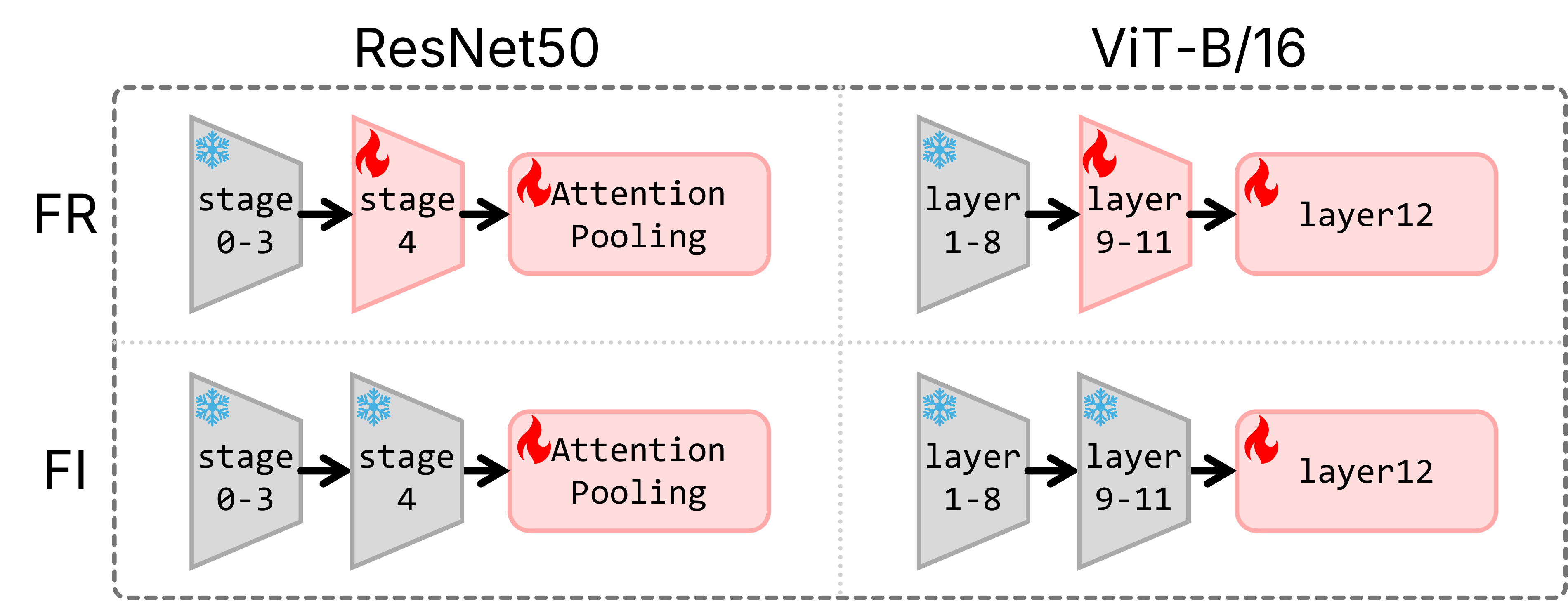}
    \caption{
    Illustrations of FI and FR under different architectures.
    They are initialized with the same architecture and weights as the pre-trained model, with intermediate results from frozen CLIP forward propagation being reused as FI and FR inputs to reduce computation.
    }
    \label{fig:FIFR}
\end{figure}

\subsection{Dual-Expert Framework}
To comprehensively learn discriminative features for downstream tasks, two structurally decoupled and functionally complementary expert modules are introduced. 
As shown in \cref{fig:FIFR}, these modules are designed to implicitly capture features from different perspectives of the downstream task data through distinct fine-tuning strategies:

\begin{itemize}
    \item \textbf{Feature Integrator} (Invariant Expert) is designed to preserve existing knowledge from the VLM through conservative parameter modifications, focusing updates solely on the last module.
    \item \textbf{Feature Refiner} (Adaptive Expert) captures novel patterns induced by downstream task data, employing fine-tuning of deeper network layers to achieve superior adaptation to domain-specific data distributions.
\end{itemize}
By learning features from different perspectives, this dual-expert framework enables effective information extraction.

\subsection{Prior Consistency Constraint}
Since substantial prior knowledge is embedded in pre-trained models during their pre-training phase, its complete erasure during fine-tuning is considered harmful.
To address catastrophic forgetting of prior knowledge triggered by fine-tuning, a prior consistency constraint is implemented to reduce the number of elements modified in the fine-tuning branch's logits when compared to original CLIP logits, thereby necessitating sparsity in the logits offset. 
This sparsity is enforced by requiring that the logits offset follows a sparse prior distribution, modeled using zero-mean Laplacian distribution.

We consider the deviation between the expected logits $\mathbf{s}$ produced by each expert and the zero-shot logits $\mathbf{s}_{\text{ZS}}$ from frozen CLIP.
Let $\bm{\delta} = \mathbf{s} - \mathbf{s}_{ZS}$ denote the deviation vector in the logit space.

\begin{definition}[Laplace Prior on Logits Deviation]
Each component $\delta_c$ of the deviation vector $\bm{\delta}$ is independently drawn from a zero-mean Laplace distribution:
\begin{equation}
p(\delta_c) = \frac{1}{2b} \exp\left(-\frac{|\delta_c|}{b}\right),
\end{equation}
where $b$ is a scale hyper-parameter.
\end{definition}
Then, we demonstrate the negative log-prior can be interpreted as an $L_1$ regularization:
\begin{theorem}[Laplace Prior Equivalence]
Imposing an $L_1$ regularization on the deviation vector $\bm{\delta}$ is equivalent to assuming an independent Laplace prior:
\begin{equation}
-\log p(\bm{\delta}) \propto \frac{1}{b} \|\bm{\delta}\|_1.
\end{equation}
Proof is provided in Appendix A.1
\end{theorem}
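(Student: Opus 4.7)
The plan is to derive the negative log-prior directly from the independence assumption and the closed-form Laplace density given in the preceding definition. First I would invoke independence of the components $\delta_c$ to factorize the joint prior as $p(\bm{\delta}) = \prod_{c=1}^{C} p(\delta_c)$, where $C$ is the number of classes. Taking the logarithm converts this product into a sum, so that $\log p(\bm{\delta}) = \sum_{c=1}^{C} \log p(\delta_c)$.

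Next I would substitute the Laplace density into each term, giving $\log p(\delta_c) = -\log(2b) - |\delta_c|/b$. Summing over $c$ yields $\log p(\bm{\delta}) = -C \log(2b) - \frac{1}{b} \sum_{c=1}^{C} |\delta_c|$, and recognizing the sum of absolute values as the $L_1$ norm $\|\bm{\delta}\|_1$ produces $-\log p(\bm{\delta}) = C \log(2b) + \frac{1}{b}\|\bm{\delta}\|_1$.

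Finally, since $C \log(2b)$ is constant with respect to $\bm{\delta}$ (the scale $b$ being a fixed hyperparameter), it contributes no gradient and can be discarded in any MAP-style objective. Absorbing this additive constant into the proportionality symbol gives the claimed equivalence $-\log p(\bm{\delta}) \propto \frac{1}{b}\|\bm{\delta}\|_1$, showing that the Laplace prior on logit deviations acts exactly as an $L_1$ penalty that encourages sparsity of $\bm{\delta}$.

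There is no substantive obstacle here; the result is a standard Bayesian derivation and the only point worth being careful about is the interpretation of ``$\propto$''. In this context it must be read as \emph{equal up to an additive constant independent of $\bm{\delta}$}, which is the conventional usage when translating a prior into a regularizer via negative log-likelihood. With that reading understood, the chain of equalities above completes the argument.
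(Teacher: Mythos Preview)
Your proposal is correct and follows essentially the same route as the paper's own proof: factorize the joint prior via independence, take logarithms, substitute the Laplace density, collect the sum of absolute values as $\|\bm{\delta}\|_1$, and discard the additive constant $C\log(2b)$. Your explicit remark that ``$\propto$'' here means equality up to an additive constant is a useful clarification that the paper leaves implicit.
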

Based on this theory, the resulting prior consistency losses are defined as:
\begin{equation}
\mathcal{L}_{R} = \|\mathbf{s}_{FR} - \mathbf{s}_{ZS}\|_1, \quad \mathcal{L}_{I} = \|\mathbf{s}_{FI} - \mathbf{s}_{ZS}\|_1.
\end{equation}
This Laplace-based regularization enforces sparse adaptation during fine-tuning. The model is encouraged to modify its predictions only for a small set of classes while maintaining consistency with the powerful prior for most categories. This mechanism enables local adaptation without erasing general visual knowledge.

\subsection{Multi-view Consensus Constraint}

As the features derived from different perspectives are intended to collaboratively address the same classification task, a consensus between their predictions is expected rather than mutual contradiction. 
To facilitate this, the expected predictive distributions generated by the two expert branches are aligned by minimizing Jeffreys divergence on the statistical manifold $\mathcal{M}_P$ of output probabilities.

\begin{definition}[Jeffreys Divergence]
The Jeffreys divergence is defined as the symmetric form of KL divergence:
\begin{equation}
D_J(\mathbf{p} \| \mathbf{q}) = D_{KL}(\mathbf{p} \| \mathbf{q}) + D_{KL}(\mathbf{q} \| \mathbf{p}).
\end{equation}
\end{definition}
The consensus loss is defined via the Jeffreys divergence:
\begin{equation}
\mathcal{L}_D = \frac{1}{2} D_J(\mathbf{p}_{FR}, \mathbf{p}_{FI}).
\end{equation}
This formulation is grounded in the intrinsic geometry of statistical manifolds. Specifically, we demonstrate that the Jeffreys divergence admits a higher-order approximation to the squared geodesic distance between two probability distributions on such a manifold:
\begin{theorem}[Geodesic Divergence Approximation]
Let $\mathcal{M}$ be a statistical manifold, where points on $\mathcal{M}$ are parameterized by a local coordinate system $\pi$. For any two points $P$ and $Q$ on $\mathcal{M}$, with coordinates $\pi_P$ and $\pi_Q$ respectively, the squared geodesic distance $d^2(P,Q)$ connecting them can be approximated to fourth order through the Jeffreys divergence:
\begin{equation}
D_J(P, Q) = d^2(P, Q) + O(\|\pi_Q - \pi_P\|^4),
\end{equation}
where $\|\pi_Q - \pi_P\|$ represents the norm of the parametric coordinate difference between the two points. (Proof is provided in Appendix A.2)
\end{theorem}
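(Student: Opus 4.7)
The plan is to match Taylor expansions of both sides around the midpoint $\pi_m := (\pi_P + \pi_Q)/2$ and exploit the $P \leftrightarrow Q$ symmetry of $D_J$ to cancel odd-order contributions. First I would write $\pi_P = \pi_m - \Delta/2$ and $\pi_Q = \pi_m + \Delta/2$ with $\Delta = \pi_Q - \pi_P$, and Taylor-expand $D_{KL}(p_{\pi_P} \| p_{\pi_Q})$ by substituting the local expansion of $\log p_\theta$ into the identity $D_{KL}(p\|q) = \mathbb{E}_p[\log(p/q)]$. The score identity $\mathbb{E}_{p_\theta}[\partial_i \log p_\theta] = 0$ annihilates the linear contribution, and the Fisher information $g_{ij}(\theta) = -\mathbb{E}_{p_\theta}[\partial_i \partial_j \log p_\theta]$ identifies the quadratic coefficient, while the cubic coefficient is expressible through third log-likelihood moments.

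Next I would observe that $D_J$ is invariant under the swap $P \leftrightarrow Q$, which at fixed $\pi_m$ corresponds to $\Delta \mapsto -\Delta$. Hence, as a power series in $\Delta$ around $\pi_m$, the expansion of $D_J$ contains only even-order terms, forcing the cubic contributions (involving the third log-likelihood moments and the first derivatives of the metric) to cancel identically between the two KL components. Collecting the surviving quadratic term yields
\[ D_J(P,Q) = g_{ij}(\pi_m)\Delta^i\Delta^j + O(\|\Delta\|^4). \]
The same parity argument applied to $d^2(P,Q) = d^2(Q,P)$, together with the defining property of the Fisher metric as the infinitesimal quadratic form of $d^2$, gives
\[ d^2(P,Q) = g_{ij}(\pi_m)\Delta^i\Delta^j + O(\|\Delta\|^4). \]
Subtracting these two expansions cancels the matching quadratic terms, and the absence of cubic terms in either series forces the residual to be at most $O(\|\Delta\|^4)$.

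The hard part will be rigorously certifying that the quadratic coefficient of the $D_{KL}$ expansion coincides with the Fisher information metric that defines the geodesic distance, since this requires interchanging differentiation and expectation under suitable regularity hypotheses on the parametric family. A secondary subtlety is controlling the Taylor remainder uniformly in a neighbourhood of $\pi_m$ so that the $O(\|\Delta\|^4)$ estimate is genuine rather than merely pointwise; this can be handled by a Lagrange-form remainder once the third partial derivatives of $\log p_\theta$ are locally integrable. Once these identifications and bounds are in place, the parity argument disposes of the cubic contributions essentially for free, and the stated fourth-order approximation follows.
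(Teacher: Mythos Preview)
Your proposal is correct and takes a genuinely different route from the paper. The paper expands both KL divergences about the endpoint $\pi_P$ rather than the midpoint, explicitly computes the cubic contribution $-\tfrac{1}{6}T_{ijk}\,\Delta\pi^i\Delta\pi^j\Delta\pi^k$ in $D_{KL}(P\|Q)$ with $T_{ijk}=E_P[\partial_i\partial_j\partial_k\log p]$, obtains the analogous term with the opposite sign in $D_{KL}(Q\|P)$, and then cancels them by direct inspection when summing. You instead centre the expansion at $\pi_m$ and let the $\Delta\mapsto-\Delta$ parity of both $D_J$ and $d^2$ annihilate all odd-order terms without ever writing them down. Your route is cleaner and, notably, treats the geodesic-distance side on the same footing as the divergence side: the paper simply identifies $g_{ij}(\pi_P)\Delta\pi^i\Delta\pi^j$ with $d^2(P,Q)$ in its Step~2 without separately tracking the cubic correction to that identification, so your midpoint-parity argument is in this respect more careful. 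Conversely, the paper's approach makes the cancelling cubic tensor explicit, which would be the natural starting point if one later wanted a quantitative estimate of the $O(\|\Delta\|^4)$ remainder or a comparison with non-symmetrized divergences.
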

This geometric perspective suggests that minimizing $\mathcal{L}_D$ effectively reduces the geodesic distance between the prediction distributions of the two experts on the statistical manifold, thereby encouraging predictive consensus and enhances the robustness of each expert.

\subsection{Training and Inference}

For each image $\mathbf{x}_i$, the fused logits, used for both training and inference, are obtained by aggregating the logits from FR, FI, and the frozen CLIP, which are denoted as $\mathbf{s}_{FR}(\mathbf{x}_i)$, $\mathbf{s}_{FI}(\mathbf{x}_i)$, and $\mathbf{s}_{ZS}(\mathbf{x}_i)$, respectively:
\begin{equation}
\textbf{s}_i = \alpha \cdot \mathbf{s}_{FR}(\mathbf{x}_i) + \beta \cdot \mathbf{s}_{FI}(\mathbf{x}_i) + \gamma \cdot \mathbf{s}_{ZS}(\mathbf{x}_i),
\end{equation}
The weights $\alpha$, $\beta$, and $\gamma$ are expert coefficients, with $\gamma = 1 - \alpha - \beta$. The cross-entropy loss over the training set serves as the expected likelihood objective:
\begin{equation}
\mathcal{L}_{\text{CE}} = -\frac{1}{N} \sum_{i=1}^N \log p(\mathbf{y}_i | \mathbf{s}_i).
\end{equation}
where $p(y_i | \mathbf{s}_i)$ represents the predicted probability for the ground-truth label $y_i$ given the fused logits $\mathbf{s}_i$.

The complete training objective integrates multiple regularization terms:
\begin{equation}
\mathcal{L} = \underbrace{\mathcal{L}_{\text{CE}}}_{\text{Joint Likelihood}} + \underbrace{\lambda_1 \mathcal{L}_{R} + \lambda_2 \mathcal{L}_{I}}_{\text{Prior Regularization}} + \underbrace{\lambda_3 \mathcal{L}_{D}}_{\text{Consensus Regularization}}.
\end{equation}

\section{Experiments}

\begin{figure*}
    \centering
    \includegraphics[width=0.9\textwidth]{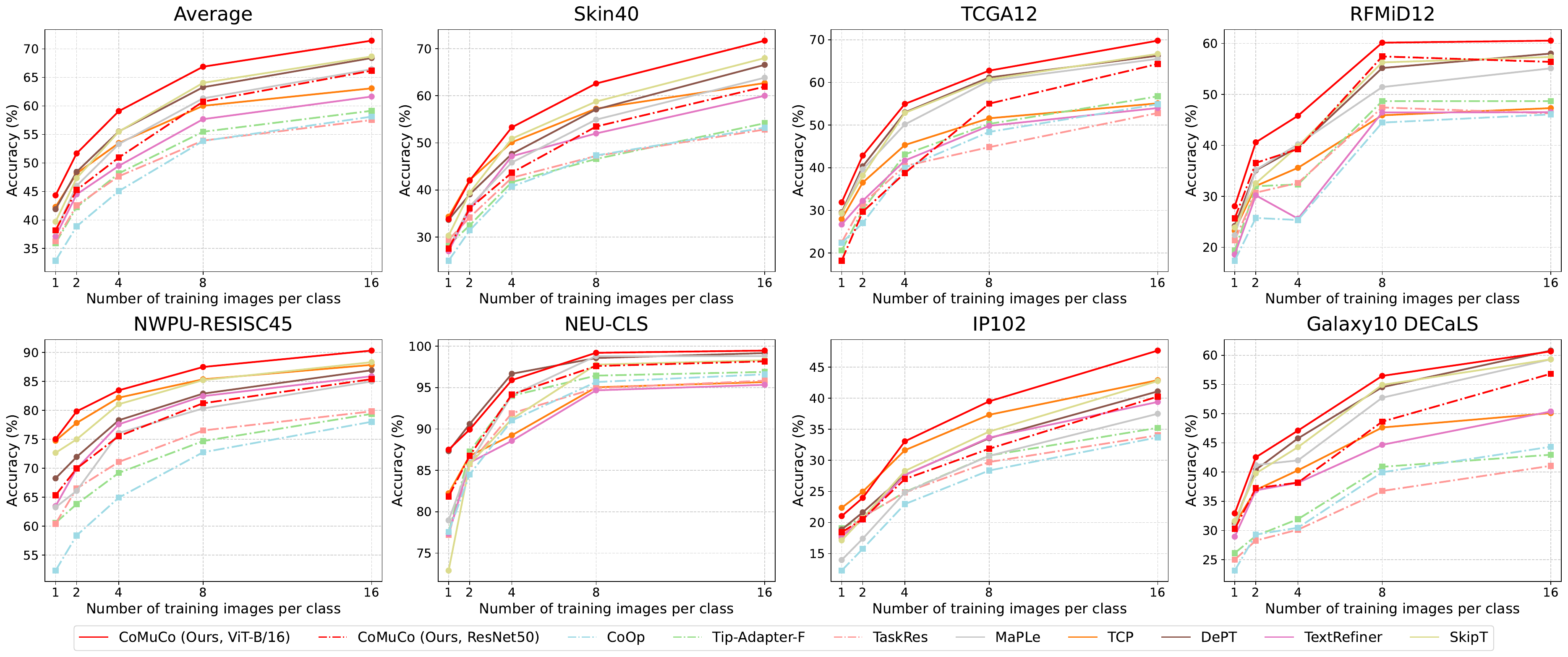}
    \caption{Performance comparison on the cross-domain benchmark. Dashed lines for ResNet50 and solid lines for ViT-B/16.}
    \label{fig:cross_domain_results}
\end{figure*}

\subsection{Experimental Settings}
\noindent\textbf{Datasets} 
To address the CLIP Benchmark’s limitations in evaluating model performance across various visual domains, a novel benchmark is introduced, which incorporates seven diverse image datasets, \ie , 
Skin40~\cite{skin40} with 40 classes of skin disease, TCGA12~\cite{tcga} with 12 classes of tissue pathology, RFMiD12~\cite{RFMiD} with 12 classes of fundus, NWPU-RESISC45~\cite{nwpu} with 45 classes of remote sensing, NEU-CLS~\cite{neucls} with 6 classes of hot-rolled steel defect, IP102~\cite{IP102} with 102 classes of crop pest and disease, and Galaxy10 DECaLS~\cite{galaxy10} with 10 classes of galaxy. 
Collectively, these datasets cover a wide range of fields, enabling more comprehensive assessment of model adaptability.

In addition, the CLIP Benchmark was still used to thoroughly evaluate the model's performance, which includes 11 datasets, \ie, ImageNet~\cite{ImageNet}, Caltech101~\cite{Caltech}, Food101~\cite{food101}, DTD~\cite{DTD}, EuroSAT~\cite{eurosat}, FGVCAircraft~\cite{FGVCAircraft}, Flowers102~\cite{flower102}, OxfordPets~\cite{oxfordpets}, StanfordCars~\cite{stanfordcars}, SUN397~\cite{sun397}, and UCF101~\cite{ucf101}. 
Moreover, ImageNet-Sketch~\cite{imagenetsketch} and ImageNet-V2~\cite{imagenetV2} are incorporated to assess the model's domain generalization capability.

\noindent\textbf{Implementation} 
Following previous studies~\cite{taskres, CoOp},
we trained models under $K$-shot settings ($K=1, 2, 4, 8, 16$) with $K$ images per class and evaluated on the full test set. 
Unless otherwise stated, ResNet-50 was used as the visual backbone, and pre-defined text templates~\cite{taskres} were used for text encoding. 
Training was conducted using SGD with cosine learning rate decay for 50 epochs (300 for cross-domain settings), starting with a warm-up from $1\mathrm{e}{-5}$ to 0.002 in the first epoch. The default batch size was 32. Data augmentations from CoOp~\cite{CoOp} (random crop and flip) were applied. Hyperparameters were fixed as $\alpha=\beta=0.2$, and $\lambda_1=\lambda_2=\lambda_3=0.1$. All results were averaged over three runs with different seeds.

\noindent\textbf{Baselines} To validate the effectiveness of our method, comparisons were made with 
SOTA
efficient transfer learning methods, including CoOp~\cite{CoOp}, Tip-Adapter-F~\cite{tip-adapter}, TaskRes~\cite{taskres}, MaPLe~\cite{maple}, TCP~\cite{yao2024tcp}, DePT~\cite{zhang2024dept}, TextRefiner~\cite{xie2024textrefiner} and SkipT~\cite{wu2025skip}.

\begin{table}[t]
\centering
\small
\setlength{\tabcolsep}{1mm}
\begin{tabular}{cccccccccc}
\toprule
\multicolumn{7}{c}{\textbf{Components}} & \multicolumn{3}{c}{\textbf{Datasets}} \\
\cmidrule(r){1-7} \cmidrule(r){8-10}
$\mathcal{L}_{CE}$ & FI & FR & $\mathcal{L}_{I}$ & $\mathcal{L}_{R}$ & $\mathcal{L}_{D}$ & & ImageNet & Stanford Cars & Galaxy \\
\midrule
 &  &  &  &  &  & & 58.18 & 55.61 & 13.90 \\
\checkmark & \checkmark &  &  &  &  & & 65.40 & 79.27 & 51.23 \\
\checkmark & \checkmark &  & \checkmark &  &  & & 65.50 & 79.53 & 50.10 \\
\checkmark &  & \checkmark &  &  &  & & 64.33 & 81.27 & 53.30 \\
\checkmark &  & \checkmark &  & \checkmark &  & & 65.10 & 83.53 & 56.23 \\
\checkmark & \checkmark & \checkmark &  &  &  & & 64.47 & 81.23 & 53.26 \\
\checkmark & \checkmark & \checkmark &  &  & \checkmark & & 65.73 & 80.23 & 54.43 \\
\checkmark & \checkmark & \checkmark & \checkmark & \checkmark &  & & 65.63 & 83.87 & 55.67 \\
\midrule
\checkmark & \checkmark & \checkmark & \checkmark & \checkmark & \checkmark & & \textbf{66.27} & \textbf{85.07} & \textbf{56.83} \\
\bottomrule
\end{tabular}%
\caption{Ablation study of our method on 3 representative datasets under the 16-shot setting. %
}
\label{tab:ablation}
\end{table}

\subsection{Efficacy of the Proposed Method}

\begin{figure*}
    \centering
    \includegraphics[width=0.9\textwidth]{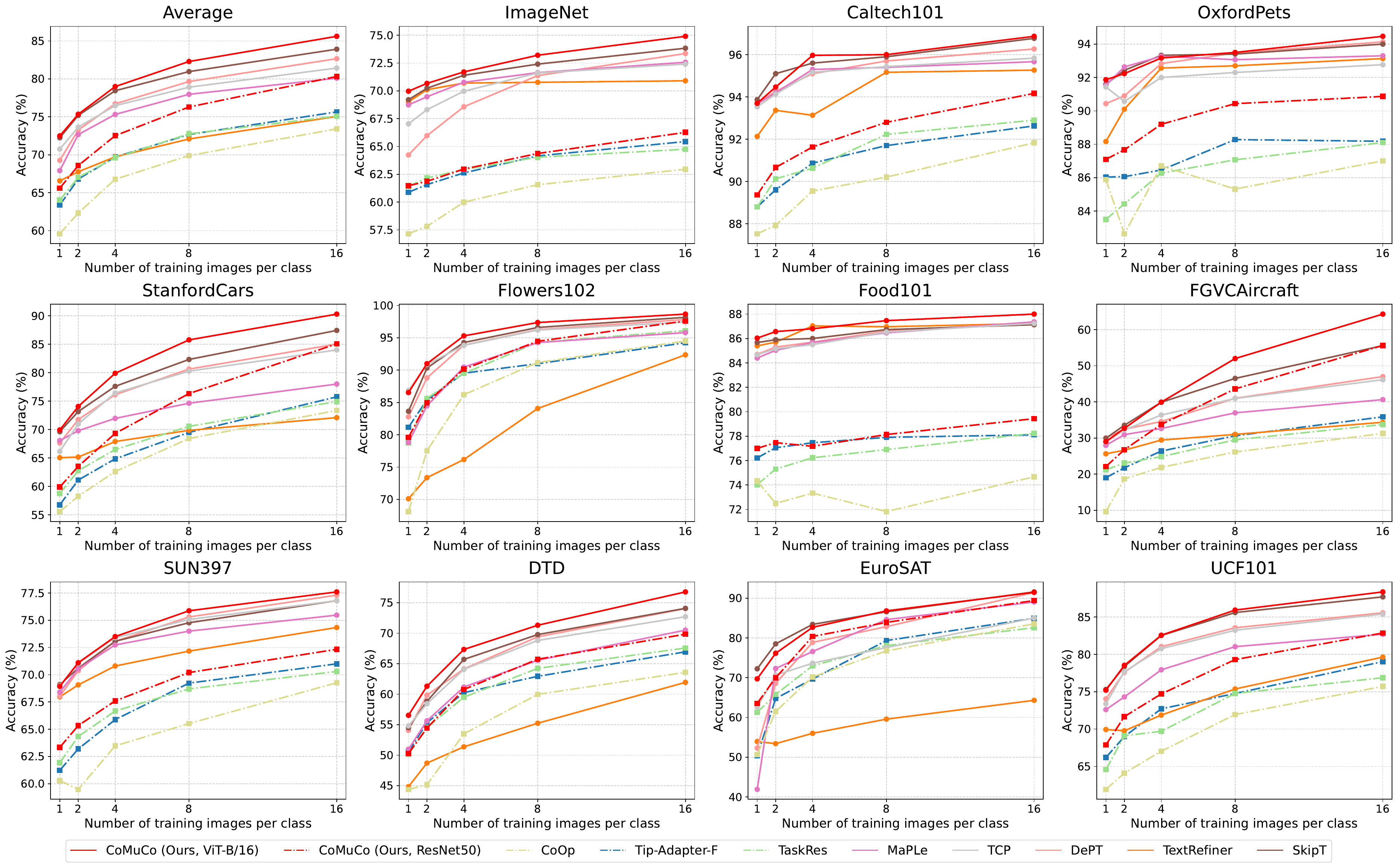}
    \caption{Performance comparison on the CLIP Benchmark. Dashed lines for ResNet50 and solid lines for ViT-B/16.}
    \label{fig:CLIP_benchmark_results}
\end{figure*}

\noindent\textbf{Results on Cross-Domain Few-Shot Benchmark}
Our method was first assessed on the cross-domain few-shot benchmark. 
To confirm the challenges for cross-domain few-shot recognition, zero-shot CLIP as the naive baseline was evaluated on the benchamrk, which revealed that the frozen pre-trained CLIP model is unable to perform effective classification in cross-domain tasks (Appendix C.2).

As shown in \cref{fig:cross_domain_results}, our method CoMuCo consistently outperforms all baselines. With ResNet50 as the visual encoder, it achieves superior results, particularly with a slightly larger number of training images, surpassing the strongest baseline by 5.27\% and 7.03\% under the [8, 16]-shot settings. When ViT-B/16 is used, CoMuCo exhibits superior performance across all competing approaches, yielding improvements of 2.03\%, 3.23\%, 3.59\%, 2.83\%, and 2.78\% over the best baseline under the [1, 2, 4, 8, 16]-shot settings. Notably, even with ResNet50, the performance of CoMuCo remains comparable to certain ViT-B/16-based baselines, exceeding TextRefiner and TCP while matching MaPLe under [8, 16]-shot settings.
These results support that our method can more effectively learn knowledge from the limited training data when the imaging modality of the downstream task is significantly different from those used in CLIP pre-training.

\noindent\textbf{Results on CLIP Benchmark}
As shown in \cref{fig:CLIP_benchmark_results} (top-left subfigure), on the widely used CLIP Benchmark for few-shot learning, our method consistently achieves superior average performance compared to SOTA methods across all the few-shot settings. 
As the number of training samples increases, the performance gap progressively widens. Under the [1, 4, 16]-shot settings, our method outperforms the best baseline by 1.48\%, 2.77\%, and 4.65\% on ResNet50, and by 0.25\%, 0.54\%, and 1.70\% on ViT-B/16, respectively.
Notably, our method excels on the two fine-grained datasets StanfordCars and FGVCAircraft, and on the texture dataset DTD. 
Classification of fine-grained classes often requires more specialized knowledge~\cite{Clip-adapter}, as does classification in the texture classes, which is less frequently encountered during CLIP's pre-training phase.
In this case, our method enables the model to effectively learn from new data, resulting in superior performance. 
Conversely, on the ImageNet, Flowers102, and Food101 datasets, our method performs on par with competing methods, as CLIP's pre-trained knowledge is sufficient to acquire substantial task-specific knowledge with minimal additional learning. 
To conclude, our method exhibits marked superiority in standard few-shot learning tasks, particularly in the domains of fine-grained classification and cross-domain data recognition.

\begin{figure}[tb]
    \centering
    \includegraphics[width=0.9\linewidth]{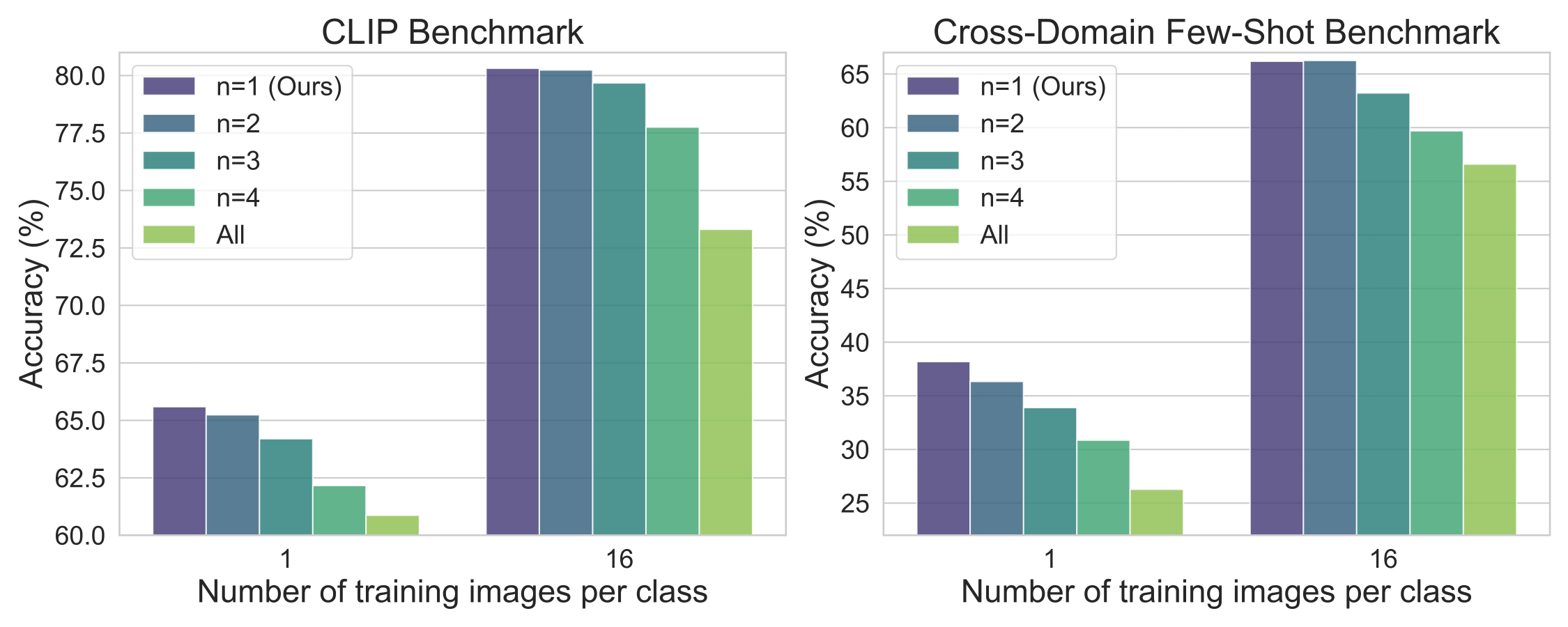}
    \caption{
    Average results on CLIP and cross-domain benchmarks for different FR fine-tuning strategies. Here, `n' indicates the count of fine-tuned layers closest to output.
    }
    \label{fig:clip_fap}
\end{figure}

\subsection{Ablation Study}
\noindent\textbf{Ablation Study on Model Components} 
Ablation studies were performed on ImageNet, Stanford Cars, and Galaxy10 DECaLS (`Galaxy') under the 16-shot setting to evaluate the impact of key components in the proposed framework.
These datasets respectively represent natural image classification, fine-grained classification, and cross-domain classification tasks, allowing for a comprehensive evaluation.
As shown in \cref{tab:ablation}, FR is more advantageous for fine-grained and cross-domain classification, whereas FI excels in natural image classification (row 2 vs. 4). 
This discrepancy arises because FI retains most pre-learned knowledge and efficient learning from natural image data is achieved through the correction of attention pooling.
However, in tasks involving fine-grained distinctions or large domain shifts, FR demonstrates superior results by enabling more comprehensive feature refinement and enhanced knowledge adaptation.
Combining FR and FI (row 6) yields intermediate performance.

Adding Prior Consistency Constraint improves results by 1.99\% for FR (row 4 vs. 5) and 2.07\% for the dual-expert integration (row 6 vs. 8), 
which supports that the prior constraint alleviates the overfitting issue by preventing excessive forgetting of pre-learned knowledge in CLIP. 
Additionally, inclusion of the multi-view consensus constraint enhances the performance of dual-expert integration (row 6 vs. row7).
When prior constraint was employed, the consensus mechanism demonstrated enhanced performance gains (rows 7 \& 8 vs. row 9), indicating that prior constraints assist the consensus mechanism in 
improving the model's generalization capability. 
These ablation results confirm the effectiveness of all CoMuCo components.

\begin{figure*}[tb]
    \centering
    \includegraphics[width=0.9\linewidth]{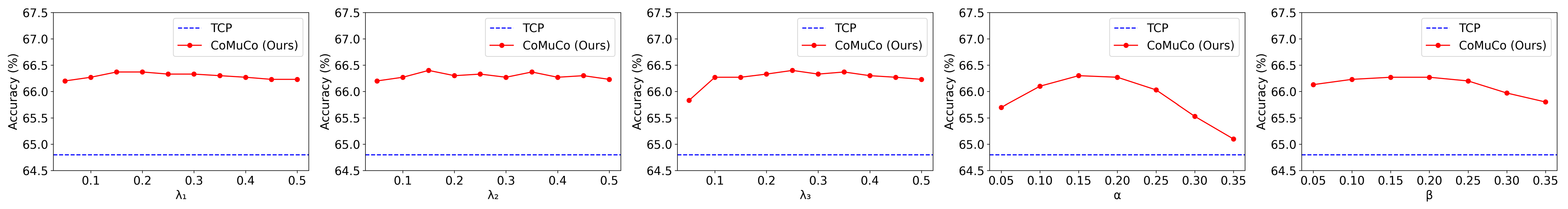}
    \caption{Sensitivity study on ImageNet. TCP is a representative strong baseline.}
    \label{fig:sensitivity}
\end{figure*}

\noindent\textbf{Impact of Fine-tuning Layer Configurations in FR} 
The impact of fine-tuning depths in the FR was evaluated.
\cref{fig:clip_fap} shows that, performance generally declines with more fine-tuned layers, especially with fewer or cross-domain samples.
Under the 16-shot setting, fine-tuning 3 layers or the entire visual encoder reduced average performance by 0.64\% and 7.01\% on the CLIP benchmark and by 3.08\% and 9.18\% on the cross-domain benchmark, relative to tuning the last layer only. In the 1-shot cross-domain case, declines extended to 3.67\% and 11.73\%.
These findings indicate that early layers are more prone to overfitting when data is scarce, and that limiting fine-tuning to the final layer helps preserve generalizable representations learned during pre-training.

\begin{table}[tb]
    \centering
    \small
    \setlength{\tabcolsep}{1mm}
    \begin{tabular}{lcccc}
        \toprule
        \multirow{2}{*}{Method} & \multirow{2}{*}{Visual Backbone} & Source & \multicolumn{2}{c}{Target} \\
        \cmidrule(lr){3-3} \cmidrule(lr){4-5}
        & & ImageNet & -V2 & -Sketch\\
        \midrule
        Zero-Shot CLIP  & \multirow{5}{*}{ResNet-50} & 58.18 & 51.34 & 33.32  \\
        Linear Probe CLIP  & & 55.87 & 45.97 & 19.07 \\
        CoOp  & & 62.95 & 55.11 & 32.74 \\
        TCP & & 64.8 & 56.27 & 34.23 \\
        \rowcolor{Gray}
        CoMuCo & & \textbf{66.27} & \textbf{57.13} & \textbf{35.03}\\
        \midrule
        Zero-Shot CLIP  & \multirow{5}{*}{ResNet-101} & 61.62 & 54.81 & 38.71 \\
        Linear Probe CLIP  & & 59.75 & 50.05 & 26.80 \\
        CoOp  & & 66.60 & 58.66 & 39.08 \\
        TCP & & 67.53 & 59.10 & 40.37 \\
        \rowcolor{Gray}
        CoMuCo & & \textbf{69.30} & \textbf{60.60} & \textbf{41.93} \\
        \midrule
        Zero-Shot CLIP  & \multirow{5}{*}{ViT-B/32} & 62.05 & 54.79 & 40.82 \\
        Linear Probe CLIP  & & 59.58 & 49.73 & 28.06 \\
        CoOp  & & 66.85 & 58.08 & 40.44 \\
        TCP & & 67.73 & 58.50 & 41.50 \\
        \rowcolor{Gray}
        CoMuCo & & \textbf{69.60} & \textbf{60.17} & \textbf{42.57} \\
        \midrule
        Zero-Shot CLIP  & \multirow{5}{*}{ViT-B/16} & 66.73 & 60.83 & 46.15 \\
        Linear Probe CLIP  & & 65.85 & 56.26 & 34.77 \\
        CoOp  & & 71.92 & 64.18 & 46.71 \\
        TCP & & 72.40 & 64.83 & 48.17 \\
        \rowcolor{Gray}
        CoMuCo & & \textbf{74.90} & \textbf{66.80} & \textbf{49.47} \\
        \bottomrule
    \end{tabular}
    \caption{Performance in domain adaption and with different CLIP visual backbones.
    }
    \label{tab:DG}
\end{table}

\begin{figure}[tb]
    \centering
    \includegraphics[width=0.9\linewidth]{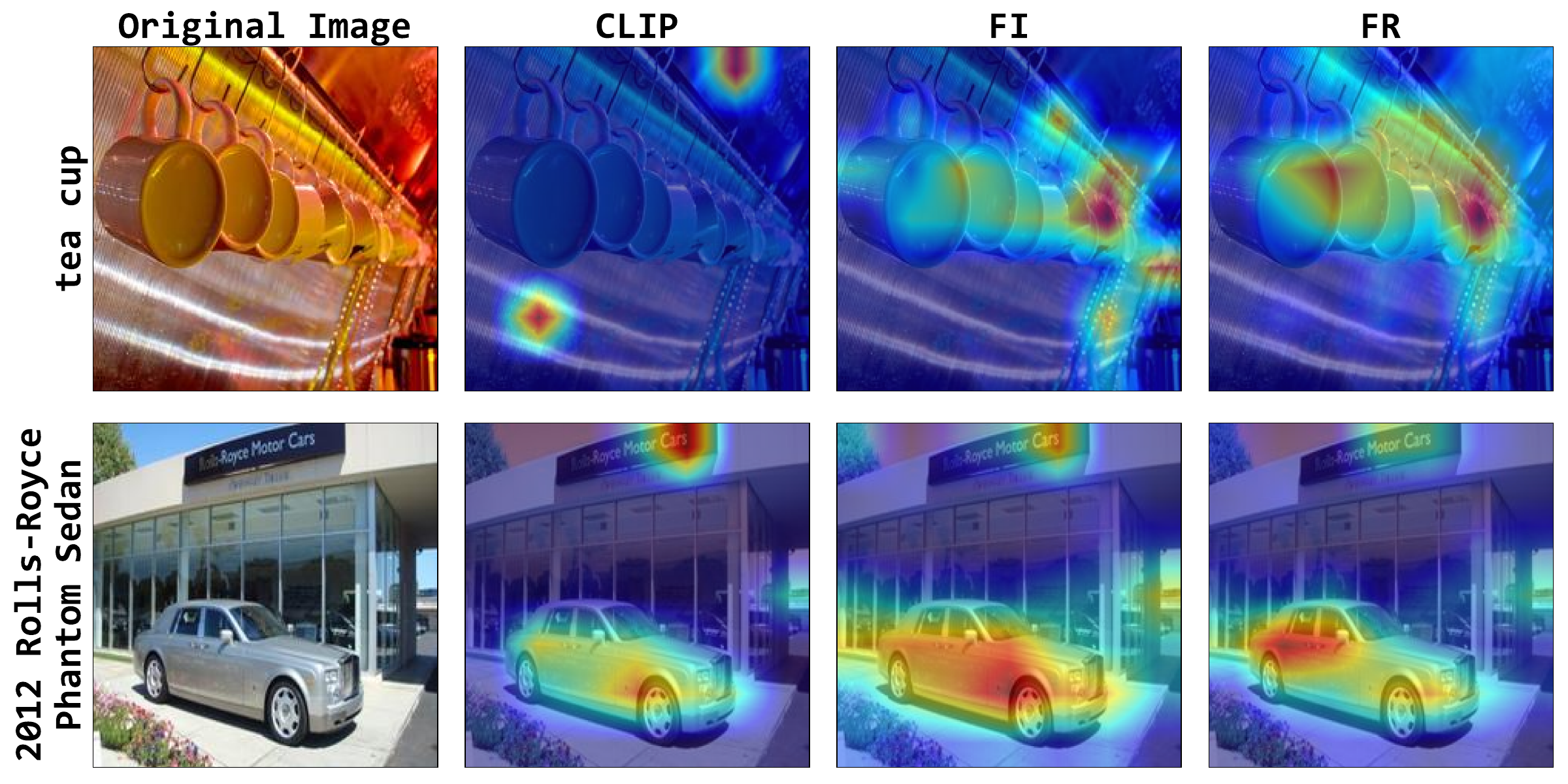}
    \caption{
    GradCAM visualization on exemplar images. Columns show: (left) original images, (center-left to right) GradCAM heatmaps from CLIP visual encoder, FI, and FR respectively. Warmer colors indicate higher attention.
    }
    \label{fig:teaacup}
\end{figure}

\subsection{Sensitivity Study}
Our method contains five hyperparameters, including the logit weights $\alpha$ and $\beta$, the consistency constraint weights $\lambda_1$ and $\lambda_2$, and the consensus constraint weight $\lambda_3$. The sensitivity study (\cref{fig:sensitivity}) of these parameters on ImageNet under the 16-shot setting shows that our method's performance remains stable when each hyperprameter varies within certain range, 
demonstrating its robustness to hyperparameters.

\subsection{Generalization Study}
\noindent\textbf{Domain Adaption}
A domain adaptation study was performed to assess the adaptability of CoMuCo to new domains during inference.
The model was trained on ImageNet with 16-shot samples and evaluated on ImageNet-V2 and ImageNet-Sketch. As presented in \cref{tab:DG}, CoMuCo achieves up to 1.97\% and 1.56\% higher accuracy than the best baseline on the two target datasets, demonstrating solid generalization across domains.

\noindent\textbf{Backbone Generalization}
We evaluate CoMuCo on various visual backbones, including ResNet-50, ResNet-101, ViT/B-32, and ViT/B-16. As shown in \cref{tab:DG}, CoMuCo consistently outperforms all baselines, with 
an average gain of 
1.53\% across all three datasets.
These results confirm its robustness across different architectures.

\subsection{Visualization Analysis}
To further elucidate CoMuCo, a visual analysis of its dual modules was performed. 
Specifically, GradCAM~\cite{selvaraju2017grad} was employed to visualize the model's attention regions when presented with category text and query images. \cref{fig:teaacup} reveals that while the original CLIP model fails to properly focus on the target object, the adapted FR and FI successfully identify it. Moreover, when the CLIP model successfully detected the targets, enhanced comprehensive attention to target objects is achieved through the adapted FR and FI. 
Refer to Appendix D for more results.

\section{Conclusion}
In this study, we propose CoMuCo, a Consistency-guided Multi-view Collaborative Optimization framework, for few-shot learning especially in cross-domain scenarios. 
The method employs dual expert modules with prior consistency constraint and multi-view consensus mechanism to enhance learning capacity.
Additionally, we establish a novel cross-domain benchmark for thorough performance assessment across various imaging domains.
Extensive experiments support that CoMuCo substantially boosts model performance.
This study offers a new perspective on efficient transfer learning with vision-language models, and CoMuCo is expected to work well under more scenarios.

{
    \small
    \bibliography{aaai2026}

\begin{thebibliography}{51}
\providecommand{\natexlab}[1]{#1}

\bibitem[{Allen{-}Zhu and Li(2023)}]{ensemble}
Allen{-}Zhu, Z.; and Li, Y. 2023.
\newblock Towards Understanding Ensemble, Knowledge Distillation and Self-Distillation in Deep Learning.
\newblock In \emph{ICLR}.

\bibitem[{Bossard, Guillaumin, and Gool(2014)}]{food101}
Bossard, L.; Guillaumin, M.; and Gool, L.~V. 2014.
\newblock Food-101 - Mining Discriminative Components with Random Forests.
\newblock In \emph{ECCV}.

\bibitem[{Chen et~al.(2023)Chen, Yao, Song, Li, Rao, and Zhang}]{plot}
Chen, G.; Yao, W.; Song, X.; Li, X.; Rao, Y.; and Zhang, K. 2023.
\newblock {PLOT:} Prompt Learning with Optimal Transport for Vision-Language Models.
\newblock In \emph{ICLR}.

\bibitem[{Chen et~al.(2022)Chen, Chen, Li, Chen, Trister, Krishnan, and Mahmood}]{tcga}
Chen, R.~J.; Chen, C.; Li, Y.; Chen, T.~Y.; Trister, A.~D.; Krishnan, R.~G.; and Mahmood, F. 2022.
\newblock Scaling vision transformers to gigapixel images via hierarchical self-supervised learning.
\newblock In \emph{CVPR}.

\bibitem[{Cheng, Han, and Lu(2017)}]{nwpu}
Cheng, G.; Han, J.; and Lu, X. 2017.
\newblock Remote sensing image scene classification: Benchmark and state of the art.
\newblock \emph{Proceedings of the IEEE}, 105(10): 1865--1883.

\bibitem[{Cherti et~al.(2023)Cherti, Beaumont, Wightman, Wortsman, Ilharco, Gordon, Schuhmann, Schmidt, and Jitsev}]{cherti2023reproducible}
Cherti, M.; Beaumont, R.; Wightman, R.; Wortsman, M.; Ilharco, G.; Gordon, C.; Schuhmann, C.; Schmidt, L.; and Jitsev, J. 2023.
\newblock Reproducible scaling laws for contrastive language-image learning.
\newblock In \emph{CVPR}, 2818--2829.

\bibitem[{Cimpoi et~al.(2014)Cimpoi, Maji, Kokkinos, Mohamed, and Vedaldi}]{DTD}
Cimpoi, M.; Maji, S.; Kokkinos, I.; Mohamed, S.; and Vedaldi, A. 2014.
\newblock Describing Textures in the Wild.
\newblock In \emph{CVPR}.

\bibitem[{Deng et~al.(2009)Deng, Dong, Socher, Li, Li, and Fei{-}Fei}]{ImageNet}
Deng, J.; Dong, W.; Socher, R.; Li, L.; Li, K.; and Fei{-}Fei, L. 2009.
\newblock ImageNet: {A} large-scale hierarchical image database.
\newblock In \emph{CVPR}.

\bibitem[{Fei{-}Fei, Fergus, and Perona(2007)}]{Caltech}
Fei{-}Fei, L.; Fergus, R.; and Perona, P. 2007.
\newblock Learning generative visual models from few training examples: An incremental Bayesian approach tested on 101 object categories.
\newblock \emph{Comput. Vis. Image Underst.}, 106(1): 59--70.

\bibitem[{Gao et~al.(2024{\natexlab{a}})Gao, Geng, Zhang, Ma, Fang, Zhang, Li, and Qiao}]{Clip-adapter}
Gao, P.; Geng, S.; Zhang, R.; Ma, T.; Fang, R.; Zhang, Y.; Li, H.; and Qiao, Y. 2024{\natexlab{a}}.
\newblock Clip-adapter: Better vision-language models with feature adapters.
\newblock \emph{International Journal of Computer Vision}, 132(2): 581--595.

\bibitem[{Gao et~al.(2024{\natexlab{b}})Gao, Liu, Xu, Wu, Zhang, Li, Yang, Liu, and Sun}]{softclip}
Gao, Y.; Liu, J.; Xu, Z.; Wu, T.; Zhang, E.; Li, K.; Yang, J.; Liu, W.; and Sun, X. 2024{\natexlab{b}}.
\newblock Softclip: Softer cross-modal alignment makes clip stronger.
\newblock In \emph{AAAI}.

\bibitem[{Gharoun et~al.(2024)Gharoun, Momenifar, Chen, and Gandomi}]{metalearning1}
Gharoun, H.; Momenifar, F.; Chen, F.; and Gandomi, A. 2024.
\newblock Meta-learning approaches for few-shot learning: A survey of recent advances.
\newblock \emph{ACM Computing Surveys}, 56(12): 1--41.

\bibitem[{Guo et~al.(2023)Guo, Zhang, Qiu, Ma, Miao, He, and Cui}]{calip}
Guo, Z.; Zhang, R.; Qiu, L.; Ma, X.; Miao, X.; He, X.; and Cui, B. 2023.
\newblock {CALIP:} Zero-Shot Enhancement of {CLIP} with Parameter-Free Attention.
\newblock In \emph{AAAI}.

\bibitem[{Helber et~al.(2019)Helber, Bischke, Dengel, and Borth}]{eurosat}
Helber, P.; Bischke, B.; Dengel, A.; and Borth, D. 2019.
\newblock {EuroSAT}: {A} Novel Dataset and Deep Learning Benchmark for Land Use and Land Cover Classification.
\newblock \emph{{IEEE} J. Sel. Top. Appl. Earth Obs. Remote. Sens.}, 12(7): 2217--2226.

\bibitem[{Huang et~al.(2024{\natexlab{a}})Huang, Shakeri, Dolz, Boudiaf, Bahig, and Ben~Ayed}]{huang2024lp++}
Huang, Y.; Shakeri, F.; Dolz, J.; Boudiaf, M.; Bahig, H.; and Ben~Ayed, I. 2024{\natexlab{a}}.
\newblock LP++: A Surprisingly Strong Linear Probe for Few-Shot CLIP.
\newblock In \emph{CVPR}.

\bibitem[{Huang et~al.(2024{\natexlab{b}})Huang, Tang, Chen, Zhang, Zhang, Chen, Zhao, Zhao, Lv, Hu, and Zhang}]{StructureCLIP}
Huang, Y.; Tang, J.; Chen, Z.; Zhang, R.; Zhang, X.; Chen, W.; Zhao, Z.; Zhao, Z.; Lv, T.; Hu, Z.; and Zhang, W. 2024{\natexlab{b}}.
\newblock Structure-CLIP: Towards Scene Graph Knowledge to Enhance Multi-modal Structured Representations.
\newblock In \emph{{AAAI}}.

\bibitem[{Jia et~al.(2021)Jia, Yang, Xia, Chen, Parekh, Pham, Le, Sung, Li, and Duerig}]{align}
Jia, C.; Yang, Y.; Xia, Y.; Chen, Y.-T.; Parekh, Z.; Pham, H.; Le, Q.; Sung, Y.-H.; Li, Z.; and Duerig, T. 2021.
\newblock Scaling up visual and vision-language representation learning with noisy text supervision.
\newblock In \emph{ICML}.

\bibitem[{Khattak et~al.(2023)Khattak, Rasheed, Maaz, Khan, and Khan}]{maple}
Khattak, M.~U.; Rasheed, H.~A.; Maaz, M.; Khan, S.~H.; and Khan, F.~S. 2023.
\newblock MaPLe: Multi-modal Prompt Learning.
\newblock In \emph{CVPR}.

\bibitem[{Krause et~al.(2013)Krause, Stark, Deng, and Fei{-}Fei}]{stanfordcars}
Krause, J.; Stark, M.; Deng, J.; and Fei{-}Fei, L. 2013.
\newblock 3D Object Representations for Fine-Grained Categorization.
\newblock In \emph{ICCV}.

\bibitem[{Leung and Bovy(2019)}]{galaxy10}
Leung, H.~W.; and Bovy, J. 2019.
\newblock Deep learning of multi-element abundances from high-resolution spectroscopic data.
\newblock \emph{Monthly Notices of the Royal Astronomical Society}, 483(3): 3255--3277.

\bibitem[{Li et~al.(2023)Li, Fan, Hu, Feichtenhofer, and He}]{flip}
Li, Y.; Fan, H.; Hu, R.; Feichtenhofer, C.; and He, K. 2023.
\newblock Scaling language-image pre-training via masking.
\newblock In \emph{CVPR}.

\bibitem[{Maji et~al.(2013)Maji, Rahtu, Kannala, Blaschko, and Vedaldi}]{FGVCAircraft}
Maji, S.; Rahtu, E.; Kannala, J.; Blaschko, M.~B.; and Vedaldi, A. 2013.
\newblock Fine-Grained Visual Classification of Aircraft.
\newblock \emph{CoRR}, abs/1306.5151.

\bibitem[{Nilsback and Zisserman(2008)}]{flower102}
Nilsback, M.; and Zisserman, A. 2008.
\newblock Automated Flower Classification over a Large Number of Classes.
\newblock In \emph{Sixth Indian Conference on Computer Vision, Graphics {\&} Image Processing, {ICVGIP} 2008, Bhubaneswar, India, 16-19 December 2008}, 722--729.

\bibitem[{Panchal et~al.(2023)Panchal, Naik, Kokare, Pachade, Naigaonkar, Phadnis, and Bhange}]{RFMiD}
Panchal, S.; Naik, A.; Kokare, M.; Pachade, S.; Naigaonkar, R.; Phadnis, P.; and Bhange, A. 2023.
\newblock Retinal Fundus Multi-Disease Image Dataset (RFMiD) 2.0: a dataset of frequently and rarely identified diseases.
\newblock \emph{Data}, 8(2): 29.

\bibitem[{Parkhi et~al.(2012)Parkhi, Vedaldi, Zisserman, and Jawahar}]{oxfordpets}
Parkhi, O.~M.; Vedaldi, A.; Zisserman, A.; and Jawahar, C.~V. 2012.
\newblock Cats and dogs.
\newblock In \emph{CVPR}.

\bibitem[{Radford et~al.(2021)Radford, Kim, Hallacy, Ramesh, Goh, Agarwal, Sastry, Askell, Mishkin, Clark, Krueger, and Sutskever}]{CLIP}
Radford, A.; Kim, J.~W.; Hallacy, C.; Ramesh, A.; Goh, G.; Agarwal, S.; Sastry, G.; Askell, A.; Mishkin, P.; Clark, J.; Krueger, G.; and Sutskever, I. 2021.
\newblock Learning Transferable Visual Models From Natural Language Supervision.
\newblock In \emph{ICML}.

\bibitem[{Recht et~al.(2019)Recht, Roelofs, Schmidt, and Shankar}]{imagenetV2}
Recht, B.; Roelofs, R.; Schmidt, L.; and Shankar, V. 2019.
\newblock Do ImageNet Classifiers Generalize to ImageNet?
\newblock In \emph{ICML}.

\bibitem[{Schuhmann et~al.(2021)Schuhmann, Vencu, Beaumont, Kaczmarczyk, Mullis, Katta, Coombes, Jitsev, and Komatsuzaki}]{laion400m}
Schuhmann, C.; Vencu, R.; Beaumont, R.; Kaczmarczyk, R.; Mullis, C.; Katta, A.; Coombes, T.; Jitsev, J.; and Komatsuzaki, A. 2021.
\newblock {LAION-400M:} Open Dataset of CLIP-Filtered 400 Million Image-Text Pairs.
\newblock \emph{CoRR}, abs/2111.02114.

\bibitem[{Selvaraju et~al.(2017)Selvaraju, Cogswell, Das, Vedantam, Parikh, and Batra}]{selvaraju2017grad}
Selvaraju, R.~R.; Cogswell, M.; Das, A.; Vedantam, R.; Parikh, D.; and Batra, D. 2017.
\newblock Grad-cam: Visual explanations from deep networks via gradient-based localization.
\newblock In \emph{ICCV}, 618--626.

\bibitem[{Song and Yan(2013)}]{neucls}
Song, K.; and Yan, Y. 2013.
\newblock A noise robust method based on completed local binary patterns for hot-rolled steel strip surface defects.
\newblock \emph{Applied Surface Science}, 285: 858--864.

\bibitem[{Soomro, Zamir, and Shah(2012)}]{ucf101}
Soomro, K.; Zamir, A.~R.; and Shah, M. 2012.
\newblock {UCF101:} {A} Dataset of 101 Human Actions Classes From Videos in The Wild.
\newblock \emph{CoRR}, abs/1212.0402.

\bibitem[{Sun et~al.(2023)Sun, Fang, Wu, Wang, and Cao}]{evaclip}
Sun, Q.; Fang, Y.; Wu, L.~Y.; Wang, X.; and Cao, Y. 2023.
\newblock EVA-CLIP: Improved Training Techniques for CLIP at Scale.
\newblock \emph{ArXiv}.

\bibitem[{Sun et~al.(2016)Sun, Yang, Sun, and Wang}]{sd198}
Sun, X.; Yang, J.; Sun, M.; and Wang, K. 2016.
\newblock A benchmark for automatic visual classification of clinical skin disease images.
\newblock In \emph{ECCV}.

\bibitem[{Tschannen et~al.(2025)Tschannen, Gritsenko, Wang, Naeem, Alabdulmohsin, Parthasarathy, Evans, Beyer, Xia, Mustafa et~al.}]{siglip2}
Tschannen, M.; Gritsenko, A.; Wang, X.; Naeem, M.~F.; Alabdulmohsin, I.; Parthasarathy, N.; Evans, T.; Beyer, L.; Xia, Y.; Mustafa, B.; et~al. 2025.
\newblock SigLIP 2: Multilingual Vision-Language Encoders with Improved Semantic Understanding, Localization, and Dense Features.
\newblock \emph{arXiv preprint arXiv:2502.14786}.

\bibitem[{Vettoruzzo et~al.(2024)Vettoruzzo, Bouguelia, Vanschoren, R{\"{o}}gnvaldsson, and Santosh}]{metalearning2}
Vettoruzzo, A.; Bouguelia, M.; Vanschoren, J.; R{\"{o}}gnvaldsson, T.~S.; and Santosh, K. 2024.
\newblock Advances and Challenges in Meta-Learning: {A} Technical Review.
\newblock \emph{{IEEE} Trans. Pattern Anal. Mach. Intell.}, 46(7): 4763--4779.

\bibitem[{Wang et~al.(2019)Wang, Ge, Lipton, and Xing}]{imagenetsketch}
Wang, H.; Ge, S.; Lipton, Z.~C.; and Xing, E.~P. 2019.
\newblock Learning Robust Global Representations by Penalizing Local Predictive Power.
\newblock In \emph{NeurIPS}.

\bibitem[{Wei, Pan, and Owens(2024)}]{wei2024efficient}
Wei, Z.; Pan, Z.; and Owens, A. 2024.
\newblock Efficient Vision-Language Pre-training by Cluster Masking.
\newblock In \emph{CVPR}.

\bibitem[{Wu et~al.(2025)Wu, Zhang, Zeng, Gao, Song, and Shen}]{wu2025skip}
Wu, S.; Zhang, J.; Zeng, P.; Gao, L.; Song, J.; and Shen, H.~T. 2025.
\newblock Skip tuning: Pre-trained vision-language models are effective and efficient adapters themselves.
\newblock In \emph{CVPR}.

\bibitem[{Wu et~al.(2019)Wu, Zhan, Lai, Cheng, and Yang}]{IP102}
Wu, X.; Zhan, C.; Lai, Y.; Cheng, M.-M.; and Yang, J. 2019.
\newblock IP102: A Large-Scale Benchmark Dataset for Insect Pest Recognition.
\newblock In \emph{CVPR}.

\bibitem[{Xiao et~al.(2010)Xiao, Hays, Ehinger, Oliva, and Torralba}]{sun397}
Xiao, J.; Hays, J.; Ehinger, K.~A.; Oliva, A.; and Torralba, A. 2010.
\newblock {SUN} database: Large-scale scene recognition from abbey to zoo.
\newblock In \emph{CVPR}.

\bibitem[{Xie et~al.(2024)Xie, Zhang, Peng, Huang, and Cao}]{xie2024textrefiner}
Xie, J.; Zhang, Y.; Peng, J.; Huang, Z.; and Cao, L. 2024.
\newblock TextRefiner: Internal Visual Feature as Efficient Refiner for Vision-Language Models Prompt Tuning.
\newblock \emph{arXiv preprint arXiv:2412.08176}.

\bibitem[{Xu et~al.(2024)Xu, Xie, Tan, Huang, Howes, Sharma, Li, Ghosh, Zettlemoyer, and Feichtenhofer}]{metaclip}
Xu, H.; Xie, S.; Tan, X.~E.; Huang, P.; Howes, R.; Sharma, V.; Li, S.; Ghosh, G.; Zettlemoyer, L.; and Feichtenhofer, C. 2024.
\newblock Demystifying {CLIP} Data.
\newblock In \emph{The Twelfth International Conference on Learning Representations, {ICLR} 2024, Vienna, Austria, May 7-11, 2024}.

\bibitem[{Yang et~al.(2023)Yang, Cui, Xu, Zhong, Zheng, and Wang}]{skin40}
Yang, Y.; Cui, Z.; Xu, J.; Zhong, C.; Zheng, W.-S.; and Wang, R. 2023.
\newblock Continual learning with Bayesian model based on a fixed pre-trained feature extractor.
\newblock \emph{Visual Intelligence}, 1(1): 5.

\bibitem[{Yao, Zhang, and Xu(2024)}]{yao2024tcp}
Yao, H.; Zhang, R.; and Xu, C. 2024.
\newblock TCP: Textual-based Class-aware Prompt tuning for Visual-Language Model.
\newblock In \emph{CVPR}.

\bibitem[{Yu et~al.(2023)Yu, Lu, Jin, Chen, and Wang}]{taskres}
Yu, T.; Lu, Z.; Jin, X.; Chen, Z.; and Wang, X. 2023.
\newblock Task Residual for Tuning Vision-Language Models.
\newblock In \emph{CVPR}.

\bibitem[{Zhai et~al.(2023)Zhai, Mustafa, Kolesnikov, and Beyer}]{siglip}
Zhai, X.; Mustafa, B.; Kolesnikov, A.; and Beyer, L. 2023.
\newblock Sigmoid Loss for Language Image Pre-Training.
\newblock In \emph{ICCV}.

\bibitem[{Zhang et~al.(2024)Zhang, Wu, Gao, Shen, and Song}]{zhang2024dept}
Zhang, J.; Wu, S.; Gao, L.; Shen, H.~T.; and Song, J. 2024.
\newblock Dept: Decoupled prompt tuning.
\newblock In \emph{Proceedings of the IEEE/CVF Conference on Computer Vision and Pattern Recognition}.

\bibitem[{Zhang et~al.(2022)Zhang, Zhang, Fang, Gao, Li, Dai, Qiao, and Li}]{tip-adapter}
Zhang, R.; Zhang, W.; Fang, R.; Gao, P.; Li, K.; Dai, J.; Qiao, Y.; and Li, H. 2022.
\newblock Tip-Adapter: Training-Free Adaption of CLIP for Few-Shot Classification.
\newblock In \emph{ECCV}.

\bibitem[{Zhou et~al.(2022{\natexlab{a}})Zhou, Yang, Loy, and Liu}]{CoCoOp}
Zhou, K.; Yang, J.; Loy, C.~C.; and Liu, Z. 2022{\natexlab{a}}.
\newblock Conditional Prompt Learning for Vision-Language Models.
\newblock In \emph{CVPR}.

\bibitem[{Zhou et~al.(2022{\natexlab{b}})Zhou, Yang, Loy, and Liu}]{CoOp}
Zhou, K.; Yang, J.; Loy, C.~C.; and Liu, Z. 2022{\natexlab{b}}.
\newblock Learning to Prompt for Vision-Language Models.
\newblock \emph{International Journal of Computer Vision}, 130(9): 2337--2348.

\bibitem[{Zhu et~al.(2023)Zhu, Niu, Han, Wu, and Zhang}]{progard}
Zhu, B.; Niu, Y.; Han, Y.; Wu, Y.; and Zhang, H. 2023.
\newblock Prompt-aligned gradient for prompt tuning.
\newblock In \emph{ICCV}.

\end{thebibliography}
}

\clearpage

\appendix

\twocolumn[
  \begin{center}
    \LARGE\bfseries Cross-Domain Few-Shot Learning via Multi-View Collaborative Optimization
with Vision-Language Models \par 
    \vspace{1.5em} %
    \large Appendix \par
    \vspace{2em} %
  \end{center}
]

\section{A. Detailed Proofs of Theoretical Claims}
In this appendix, we provide the complete mathematical derivations for the theoretical claims made in the main body of the paper regarding the L1 consistency constraint and the Jeffreys divergence.
\subsection{A.1 Proof of Theorem 1: Equivalence of L1 Regularization and Laplace Prior}
\begin{theorem}
Imposing an $L_1$ regularization on the deviation vector $\bm{\delta}$ is equivalent to assuming an independent Laplace prior in a Maximum a Posteriori (MAP) estimation framework, such that:
\begin{equation}
-\log p(\bm{\delta}) \propto \frac{1}{b} \|\bm{\delta}\|_1.
\end{equation}
\end{theorem}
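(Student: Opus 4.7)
The plan is to recognize this as the standard Bayesian equivalence between a Laplace prior and an $L_1$ penalty in a MAP estimation framework. Since the statement only asserts proportionality (up to an additive constant in $-\log p$), I would not need to carry any normalization through to the end; the crux is simply to multiply the per-component densities, take the negative logarithm, and absorb the data-independent pieces into the proportionality symbol.

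Concretely, I would first invoke the independence assumption stated in the definition of the Laplace prior on logits deviation: writing $\bm{\delta} = (\delta_1,\dots,\delta_C)$, I would form the joint density as the product
\begin{equation}
p(\bm{\delta}) \;=\; \prod_{c=1}^{C} p(\delta_c) \;=\; \prod_{c=1}^{C} \frac{1}{2b}\exp\!\left(-\frac{|\delta_c|}{b}\right).
\end{equation}
Next I would take the negative logarithm, turning the product into a sum and the exponential into a linear term in $|\delta_c|$, obtaining
\begin{equation}
-\log p(\bm{\delta}) \;=\; C\log(2b) \;+\; \frac{1}{b}\sum_{c=1}^{C} |\delta_c|.
\end{equation}
Finally, I would identify $\sum_c |\delta_c| = \|\bm{\delta}\|_1$ by definition of the $L_1$ norm, and observe that the $C\log(2b)$ term is independent of $\bm{\delta}$, hence a constant with respect to the optimization variables. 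Dropping this constant under the proportionality relation yields $-\log p(\bm{\delta}) \propto \tfrac{1}{b}\|\bm{\delta}\|_1$, which is the claim.

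Honestly there is no serious obstacle here: the result is a textbook calculation once independence is assumed, and the scale hyperparameter $b$ simply becomes the inverse of the regularization strength. The only things worth being explicit about are (i) flagging that ``equivalence'' is meant in the MAP sense, i.e. minimizing $-\log p(\bm{\delta})$ over $\bm{\delta}$ gives the same minimizer as minimizing $\tfrac{1}{b}\|\bm{\delta}\|_1$, and (ii) noting that the normalization constant $C\log(2b)$ is harmless precisely because $b$ is a fixed hyperparameter rather than a learned quantity; otherwise one would also need to retain the $\log b$ term.
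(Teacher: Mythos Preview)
Your proposal is correct and follows essentially the same route as the paper: form the joint density as a product of the per-component Laplace densities under the independence assumption, take the negative logarithm to obtain $C\log(2b) + \tfrac{1}{b}\sum_c |\delta_c|$, identify the sum with $\|\bm{\delta}\|_1$, and drop the $\bm{\delta}$-independent constant. Your added remarks on the MAP interpretation and on why the $C\log(2b)$ term is harmless (because $b$ is a fixed hyperparameter) are apt and slightly more explicit than the paper's own treatment.
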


\begin{proof}
The proof follows directly from the definition of an independent Laplace prior and the properties of logarithms.

Let the deviation vector be $\bm{\delta} = (\delta_1, \delta_2, \dots, \delta_C) \in \mathbb{R}^C$. We start with the assumption stated in Definition 1: each component $\delta_c$ is independently drawn from a zero-mean Laplace distribution with scale parameter $b > 0$. The probability density function (PDF) for a single component is:
\begin{equation}
    p(\delta_c) = \frac{1}{2b} \exp\left(-\frac{|\delta_c|}{b}\right).
\end{equation}
Due to the independence assumption, the joint probability density for the entire vector $\bm{\delta}$ is the product of the individual component densities:
\begin{equation}
    p(\bm{\delta}) = \prod_{c=1}^{C} p(\delta_c) = \prod_{c=1}^{C} \left[ \frac{1}{2b} \exp\left(-\frac{|\delta_c|}{b}\right) \right].
\end{equation}
In a MAP estimation framework, the regularization term corresponds to the negative log-prior, $-\log p(\bm{\delta})$. We now compute this term:
\begin{align}
    -\log p(\bm{\delta}) &= -\log \left( \prod_{c=1}^{C} \left[ \frac{1}{2b} \exp\left(-\frac{|\delta_c|}{b}\right) \right] \right) \nonumber\\
    &= -\sum_{c=1}^{C} \log \left( \frac{1}{2b} \exp\left(-\frac{|\delta_c|}{b}\right) \right) 
    \nonumber\\
    &= -\sum_{c=1}^{C} \left[ \log\left(\frac{1}{2b}\right) - \frac{|\delta_c|}{b} \right] 
    \nonumber\\
    &= -\sum_{c=1}^{C} \left(-\log(2b)\right) + \sum_{c=1}^{C} \frac{|\delta_c|}{b} \nonumber\\
    &= C \log(2b) + \frac{1}{b} \sum_{c=1}^{C} |\delta_c|. \label{eq:proof1_step3}
\end{align}
By definition, the $L_1$-norm of the vector $\bm{\delta}$ is $\|\bm{\delta}\|_1 = \sum_{c=1}^{C} |\delta_c|$. Substituting this into Equation~\ref{eq:proof1_step3}, we get:
\begin{equation}
    -\log p(\bm{\delta}) = \frac{1}{b} \|\bm{\delta}\|_1 + C \log(2b).
\end{equation}
In the context of an optimization objective, the term $C \log(2b)$ is a constant with respect to $\bm{\delta}$ and can be absorbed into the overall loss function or ignored. Therefore, the part of the negative log-prior that depends on $\bm{\delta}$ is directly proportional to its $L_1$-norm:
\begin{equation}
    -\log p(\bm{\delta}) \propto \frac{1}{b} \|\bm{\delta}\|_1.
\end{equation}
This completes the proof. Minimizing a term proportional to $\|\bm{\delta}\|_1$ is thus equivalent to maximizing the log-likelihood of a model that assumes the deviation $\bm{\delta}$ follows an independent Laplace distribution.
\end{proof}

\subsection{A.2 Proof of Theorem 2: Jeffreys Divergence as a Fourth-Order Approximation to Squared Geodesic Distance}

\begin{theorem}
Let $\mathcal{M}$ be a statistical manifold equipped with the Fisher-Rao Riemannian metric. For any two sufficiently close distributions $P$ and $Q$ on $\mathcal{M}$, with local coordinates $\pi_P$ and $\pi_Q$ respectively, the Jeffreys divergence between $P$ and $Q$ satisfies:
\begin{equation}
D_J(P, Q) = d^2(P, Q) + O(\|\pi_Q - \pi_P\|^4),
\end{equation}
where $d^2(P, Q)$ is the squared geodesic distance under the Fisher information metric.
\end{theorem}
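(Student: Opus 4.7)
The plan is to Taylor expand both $D_J(P,Q)$ and the squared Fisher--Rao distance $d^2(P,Q)$ in powers of $\Delta := \pi_Q - \pi_P$ and to show that the two expansions coincide through cubic order, so that their difference is $O(\|\Delta\|^4)$. Writing $\ell(\pi;x) := \log p(x;\pi)$, I would first expand $D_{KL}(P\|Q) = \mathbb{E}_P[\ell(\pi_P;\cdot) - \ell(\pi_Q;\cdot)]$ by Taylor-expanding $\ell(\pi_Q;x)$ about $\pi_P$. The Bartlett identities $\mathbb{E}_P[\partial_i \ell] = 0$ and $\mathbb{E}_P[\partial_i \partial_j \ell] = -g_{ij}$, obtained by differentiating the normalization $\int p(x;\pi)\,dx = 1$, make the first-order term vanish and turn the quadratic term into $\tfrac{1}{2} g_{ij}\Delta^i\Delta^j$, with a cubic coefficient expressible in $P$-expectations of products of first- through third-order derivatives of $\ell$.

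Second, I would expand $D_{KL}(Q\|P)$ via the change-of-measure reformulation $D_{KL}(Q\|P) = \mathbb{E}_P[e^r r]$ with $r = \ell(\pi_Q;\cdot) - \ell(\pi_P;\cdot)$, so that $e^r r = r + r^2 + \tfrac{1}{2}r^3 + \cdots$ can be expanded in powers of $\Delta$ directly under the $P$-expectation without having to re-expand around $\pi_Q$. Using the same Bartlett identities, the quadratic coefficient is again $\tfrac{1}{2} g_{ij}$, and the cubic coefficient differs from the one produced in the previous step by a sign on the fully symmetric Amari--Chentsov tensor $T_{ijk} := \mathbb{E}_P[\partial_i\ell\,\partial_j\ell\,\partial_k\ell]$. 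Summing the two divergences therefore cancels the $T_{ijk}$ contribution and yields $D_J(P,Q) = g_{ij}\Delta^i\Delta^j + A_{ijk}\Delta^i\Delta^j\Delta^k + O(\|\Delta\|^4)$, where the residual cubic tensor $A_{ijk}$ involves only second-order log-density derivatives and can be related to $\partial_k g_{ij}$ through the third Bartlett identity $\partial_k g_{ij} = T_{ijk} + \mathbb{E}_P[\partial_i\partial_k\ell\,\partial_j\ell] + \mathbb{E}_P[\partial_i\ell\,\partial_j\partial_k\ell]$.

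For the geometric side, I would work in Riemann normal coordinates $y$ centered at $\pi_P$, where $d^2(P,Q) = g_{ij}(\pi_P)\,y^i y^j$ holds \emph{exactly} by the defining property of the Riemannian exponential map. Transforming back to the original chart through the leading behaviour of the geodesic equation, $y^i = \Delta^i + \tfrac{1}{2}\Gamma^i_{jk}(\pi_P)\Delta^j\Delta^k + O(\|\Delta\|^3)$, produces $d^2(P,Q) = g_{ij}\Delta^i\Delta^j + B_{ijk}\Delta^i\Delta^j\Delta^k + O(\|\Delta\|^4)$ with $B_{ijk}$ the fully symmetrized Christoffel symbol of the first kind for the Fisher metric. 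Matching $A_{ijk}$ with $B_{ijk}$ using the third Bartlett identity then closes the cubic-order agreement and yields the claimed bound.

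The main obstacle I anticipate is exactly this algebraic matching of the cubic coefficients: the two Taylor expansions are individually routine, but verifying $A_{ijk} = B_{ijk}$ requires careful symmetrization over three tensor indices while juggling the higher-order Bartlett identities against the Levi--Civita formula for the Christoffel symbols. A cleaner route I would try first is to carry out the entire argument in normal coordinates at $\pi_P$, where $\partial_k g_{ij}(\pi_P) = 0$ drastically simplifies both $A_{ijk}$ and $B_{ijk}$; the coordinate-invariance of the scalar quantities $D_J$ and $d^2$ would then propagate the fourth-order bound back to an arbitrary local chart.
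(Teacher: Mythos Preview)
Your plan is sound and in fact more careful than the paper's own argument. The paper Taylor-expands $D_{KL}(P\|Q)$ around $\pi_P$ just as you do, but then handles $D_{KL}(Q\|P)$ by asserting that ``by inversion of arguments'' its cubic term simply flips sign in $E_P[\partial_i\partial_j\partial_k\ell]$, and it \emph{identifies} $\tfrac12 g_{ij}\Delta^i\Delta^j$ with $\tfrac12 d^2(P,Q)$ outright, never expanding the geodesic distance separately. Both shortcuts silently drop the $\tfrac12\partial_k g_{ij}\,\Delta^i\Delta^j\Delta^k$ correction that appears when one shifts the base point of the reverse KL from $\pi_Q$ back to $\pi_P$ and, respectively, when one expands $d^2$ in a non-normal chart; the two omissions happen to cancel, so the paper's conclusion is right even though its intermediate identities are only accurate to $O(\|\Delta\|^3)$. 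Your change-of-measure device $D_{KL}(Q\|P)=E_P[e^{r}r]$ together with the explicit normal-coordinate expansion of $d^2$ is what actually tracks those cubic terms honestly and justifies the claimed $O(\|\Delta\|^4)$ remainder.

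One small correction to your anticipated intermediate shape: the Amari--Chentsov tensor does \emph{not} cancel completely upon summing the two KLs. After full symmetrization the surviving cubic coefficient is $\tfrac12 T_{ijk}+\mathrm{Sym}\,E_P[\partial_i\ell\,\partial_j\partial_k\ell]$, and it is precisely your third Bartlett identity that collapses this to $\tfrac12\partial_k g_{ij}$, i.e.\ the symmetrized Christoffel symbol of the first kind that you call $B_{ijk}$ on the geodesic side. So the matching still closes, but expect $A_{ijk}$ to retain first-derivative pieces rather than ``only second-order log-density derivatives.'' Your fallback of carrying out the whole computation in Riemann normal coordinates at $\pi_P$, where $\partial_k g_{ij}=0$ kills both $A_{ijk}$ and $B_{ijk}$ simultaneously, is indeed the cleanest way to avoid that bookkeeping.
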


\begin{proof}
Let $\pi$ denote the coordinate of $P$, and $\pi + \Delta\pi$ denote the coordinate of $Q$. Let $p(x|\pi)$ and $p(x|\pi + \Delta\pi)$ be the corresponding probability densities.

\paragraph{Step 1: Taylor expansion of $\log p(x|\pi+\Delta\pi)$}
We expand $\log p(x|\pi + \Delta\pi)$ at $\pi$ using multivariate Taylor series:
\begin{align}
\log p(x|\pi + \Delta\pi) =& \log p(x|\pi) \nonumber \\
&+ \sum_i \Delta\pi^i \partial_i \log p \nonumber \\
&+ \frac{1}{2} \sum_{i,j} \Delta\pi^i \Delta\pi^j \partial_i \partial_j \log p \nonumber \\
&+ \frac{1}{6} \sum_{i,j,k} \Delta\pi^i \Delta\pi^j \Delta\pi^k \partial_i \partial_j \partial_k \log p \nonumber \\ 
&+ O(\|\Delta\pi\|^4),
\end{align}
where $\partial_i = \frac{\partial}{\partial \pi^i}$ and all derivatives are evaluated at $\pi$.

\noindent \textbf{Step 2: Taylor expansion of $D_{\mathrm{KL}}(P \| Q)$}
Using the definition of KL divergence:
\begin{equation}
D_{\mathrm{KL}}(P \| Q) = \int p(x|\pi) \left[\log p(x|\pi) - \log p(x|\pi + \Delta\pi)\right] dx,    
\end{equation}
we substitute the Taylor expansion:
\begin{align}
D_{\mathrm{KL}}(P \| Q) =& - \sum_i \Delta\pi^i E_P[\partial_i \log p] \nonumber \\
&- \frac{1}{2} \sum_{i,j} \Delta\pi^i \Delta\pi^j E_P[\partial_i \partial_j \log p] \nonumber \\
&- \frac{1}{6} \sum_{i,j,k} \Delta\pi^i \Delta\pi^j \Delta\pi^k E_P[\partial_i \partial_j \partial_k \log p] \nonumber \\
&+ O(\|\Delta\pi\|^4).
\end{align}

We evaluate each expectation term:

\begin{itemize}
    \item \textbf{First-order term:} By the definition of the score function,
    \begin{equation}
        E_P[\partial_i \log p] = \int p \cdot \frac{\partial_i p}{p} dx = \partial_i \int p(x|\pi) dx = 0.
    \end{equation}
    
    \item \textbf{Second-order term:} The Fisher information matrix is defined by
    \begin{equation}
    g_{ij}(\pi) = E_P[\partial_i \log p \cdot \partial_j \log p] = -E_P[\partial_i \partial_j \log p],
    \end{equation}
    so the second-order contribution becomes:
    \begin{align}
    -\frac{1}{2} \sum_{i,j} \Delta\pi^i \Delta\pi^j E_P[\partial_i \partial_j \log p] \nonumber
    =& \frac{1}{2} \sum_{i,j} g_{ij}(\pi) \Delta\pi^i \Delta\pi^j \nonumber \\
    =& \frac{1}{2} d^2(P, Q).
    \end{align}
    
    \item \textbf{Third-order term:} Define the third-order tensor:
    \begin{equation}
    T_{ijk} = E_P[\partial_i \partial_j \partial_k \log p],
    \end{equation}
    so the third-order contribution is:
    \begin{equation}
    -\frac{1}{6} \sum_{i,j,k} T_{ijk} \Delta\pi^i \Delta\pi^j \Delta\pi^k.
    \end{equation}
\end{itemize}

Thus, we obtain the local expansion:
\begin{align}
\label{eq:KL_forward}
D_{\mathrm{KL}}(P \| Q) =& \frac{1}{2} d^2(P, Q) \nonumber \\
&- \frac{1}{6} \sum_{i,j,k} T_{ijk} \Delta\pi^i \Delta\pi^j \Delta\pi^k \nonumber \\
&+ O(\|\Delta\pi\|^4).
\end{align}

\paragraph{Step 3: Expansion of $D_{\mathrm{KL}}(Q \| P)$}
Similarly, we expand $D_{\mathrm{KL}}(Q \| P)$:
\begin{align}
\label{eq:KL_reverse}
D_{\mathrm{KL}}(Q \| P) =& \frac{1}{2} d^2(Q, P) \nonumber\\
&+ \frac{1}{6} \sum_{i,j,k} T_{ijk} \Delta\pi^i \Delta\pi^j \Delta\pi^k \nonumber \\
&+ O(\|\Delta\pi\|^4),
\end{align}
where the sign of the third-order term is reversed due to the inversion of arguments. Note that $d^2(Q,P) = d^2(P,Q)$ since geodesic distance is symmetric.

\paragraph{Step 4: Jeffreys divergence and cancellation of third-order terms}
The Jeffreys divergence is the sum of forward and reverse KL divergences:
\begin{align}
D_J(P, Q) &= D_{\mathrm{KL}}(P \| Q) + D_{\mathrm{KL}}(Q \| P) \nonumber \\
&= \left[\frac{1}{2} d^2(P, Q) - \frac{1}{6} \sum T_{ijk} \Delta\pi^i \Delta\pi^j \Delta\pi^k \right] \nonumber \\
&\quad + \left[\frac{1}{2} d^2(P, Q) + \frac{1}{6} \sum T_{ijk} \Delta\pi^i \Delta\pi^j \Delta\pi^k \right] \nonumber \\
&\quad + O(\|\Delta\pi\|^4) \nonumber \\
&= d^2(P, Q) + O(\|\Delta\pi\|^4).
\end{align}

\paragraph{Conclusion:}
The third-order terms cancel exactly due to symmetry. Thus, Jeffreys divergence approximates the squared geodesic distance to third-order accuracy, with a fourth-order error in the parameter difference $\Delta\pi$.
\end{proof}

\begin{table*}[!htbp]
    \centering
    \begin{adjustbox}{max width = 1\textwidth}
    \begin{tabular}{lcccc}
        \toprule
        Name & Number of Classes & Size (Train / Val / Test) & Description & Template\\
        \midrule
        Skin40 \cite{skin40} & 40 & 2000 / - / 400 &  Recognition of skin diseases & ``a photo of a [class].'' \\
        TCGA12 \cite{tcga} & 12 & 22468 / - / 5557 & Recognition of pathological tissues & ``a photo of a [class].'' \\
        RFMiD 2.0 \cite{RFMiD} & 15 & 2256 / - / 2256 & Recognition of retinal fundus diseases & ``a fundus image of [class].'' \\
        NWPU-RESISC45 \cite{nwpu} & 45 & 27000 / - / 4500 & Recognition of remote sensing scenes & ``a centered satellite photo of [class].'' \\
        NEU-CLS \cite{neucls} & 6 & 1200 / - / 600 & Recognition of surface defects of the hot-rolled steel & ``a photo of a hot-rolled steel plate with [class].'' \\
        IP102 \cite{IP102} & 102 & 45095 / 7508 / 22619 & Recognition of crop pests and diseases & ``a photo of a [class].'' \\
        Galaxy10 DECaLS \cite{galaxy10} & 10 & 16736 / - / 1000 & Recognition of the morphology of galaxies &``a photo of a [class].'' \\
        \bottomrule
    \end{tabular}
    \end{adjustbox}
    \caption{Summary of 7 datasets for cross-domain few-shot learning.
    }
    \label{tab:dataset_summary_cross}
\end{table*}

\begin{table*}[!t]
    \centering
    \begin{adjustbox}{max width = 1\textwidth}
    \begin{tabular}{lcccc}
        \toprule
        Name & Number of Classes & Size (Train / Val / Test) & Description & Template\\
        \midrule
        ImageNet \cite{ImageNet} & 1000 & 1.28M / - /50000 &  Recognition of generic objects & \multirow{3}{*}{Ensemble of 7 selected templates} \\
        ImageNet-V2 \cite{imagenetV2} & 1000 & - / - / 10000 & New test data for ImageNet & \\
        ImageNet-Sketch \cite{imagenetsketch} & 1000 & - / - / 50889 & Sketch-style images of ImageNet classes & \\
        \midrule
        Caltech101 \cite{Caltech} & 100 & 4128 / 1649 / 2465 & Recognition of generic objects & ``a photo of a [class].'' \\
        OxfordPets \cite{oxfordpets} & 37 & 2944 / 736 / 3669 & Fine-grained classification of pets & ``a photo of a [class], a type of pet.'' \\
        StanfordCars \cite{stanfordcars} & 196 & 6509 / 1635 / 8041 & Fine-grained classification of cars & ``a photo of a [class].'' \\
        Flowers102 \cite{flower102} & 102 & 4093 / 1633 / 2463 & Fine-grained classification of flowers & ``a photo of a [class], a type of flower.'' \\
        Food101 \cite{food101} & 101 & 50500 / 20200 / 30300 & Fine-grained classification of foods & ``a photo of a [class], a type of food.'' \\
        FGVCAircraft \cite{FGVCAircraft} & 100 & 3334 / 3333 / 3333 & Fine-grained classification of aircrafts &``a photo of a [class], a type of aircraft.'' \\
        SUN397 \cite{sun397} & 397 & 15880 / 3970 / 19850 & Scene classification & ``a photo of a [class].'' \\
        DTD \cite{DTD} & 47 & 2820 / 1128 / 1692 & Texture classification & ``[class] texture.'' \\
        EuroSAT \cite{eurosat} & 10 & 13500 / 5400 / 8100 & Land use \& cover classification with satellite images & ``a centered satellite photo of [class].'' \\
        UCF101 \cite{ucf101} & 101 & 7639 / 1898 / 3783 & Action recognition & ``a photo of a person doing [class].'' \\
        \bottomrule
    \end{tabular}
    \end{adjustbox}
    \caption{Summary of 11 datasets for few-shot learning and 2 target datasets of domain generalization. The 7 selected templates \cite{tip-adapter} for ImageNet series datasets are ``itap of a [class].'', ``a bad photo of the [class].'', ``a origami [class].'', ``a photo of the large [class].'', ``a [class] in a video game.'', ``art of the [class].'' and ``a photo of the small [class].''
    }
    \label{tab:dataset_summary}
\end{table*}

\begin{table*}[!ht]
\centering
\begin{adjustbox}{max width = 0.84\textwidth}
\begin{tabularx}{\textwidth}{>{\centering\arraybackslash}m{2cm} | >{\centering\arraybackslash}X}
\toprule
\textbf{Dataset} & \textbf{Example Images} \\
\midrule

Skin40 &
\adjustimage{valign=c, width=0.18\linewidth}{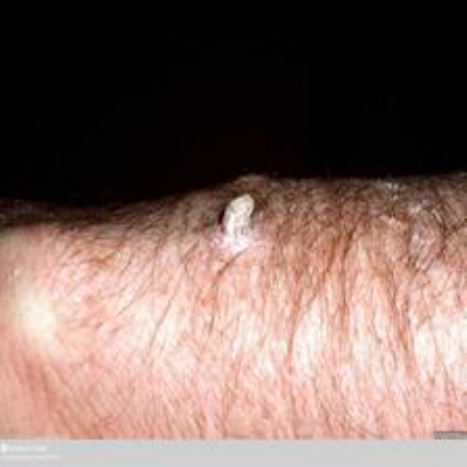} \hfill
\adjustimage{valign=c, width=0.18\linewidth}{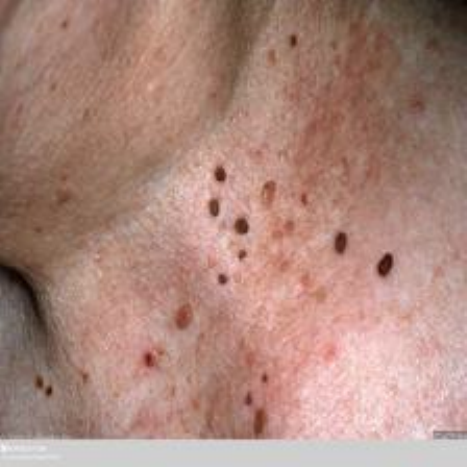} \hfill
\adjustimage{valign=c, width=0.18\linewidth}{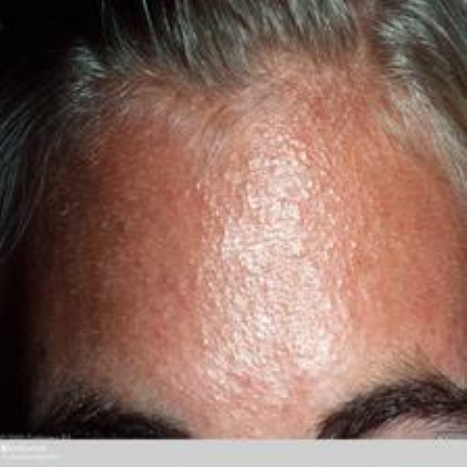} \hfill
\adjustimage{valign=c, width=0.18\linewidth}{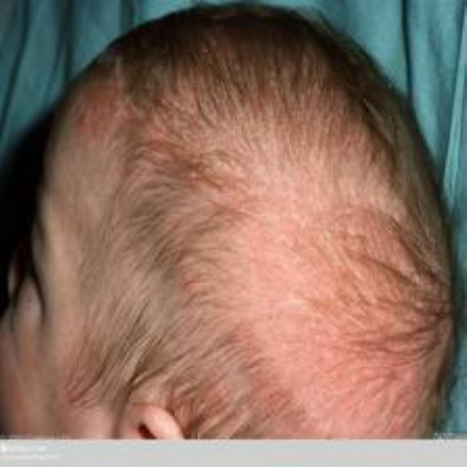} \hfill
\adjustimage{valign=c, width=0.18\linewidth}{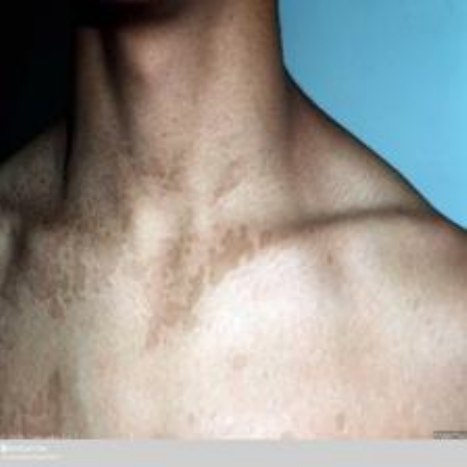} \\
\addlinespace
\midrule

TCGA12 &
\adjustimage{valign=c, width=0.18\linewidth}{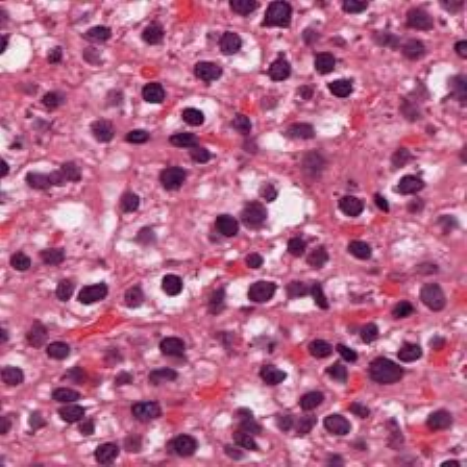} \hfill
\adjustimage{valign=c, width=0.18\linewidth}{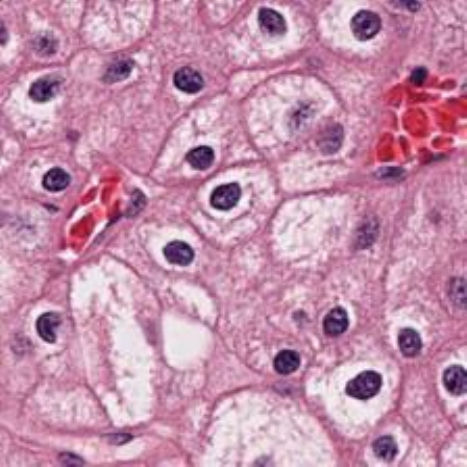} \hfill
\adjustimage{valign=c, width=0.18\linewidth}{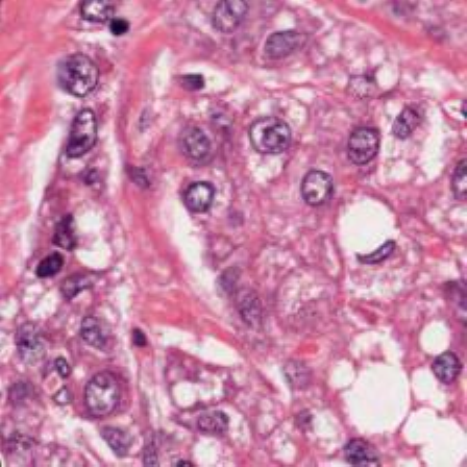} \hfill
\adjustimage{valign=c, width=0.18\linewidth}{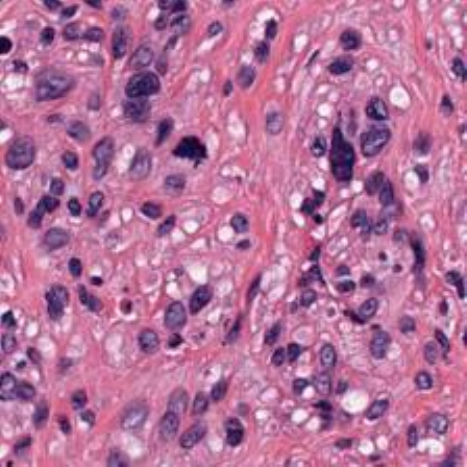} \hfill
\adjustimage{valign=c, width=0.18\linewidth}{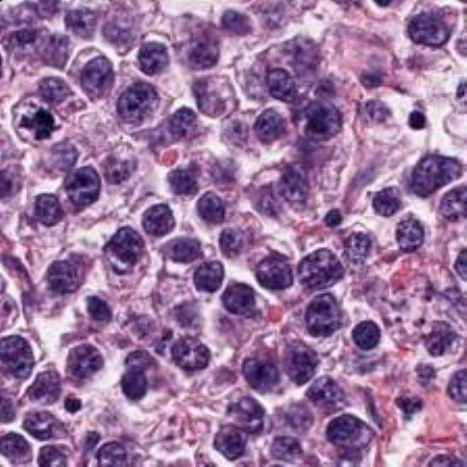} \\
\addlinespace
\midrule

RFMiD12 &
\adjustimage{valign=c, width=0.18\linewidth}{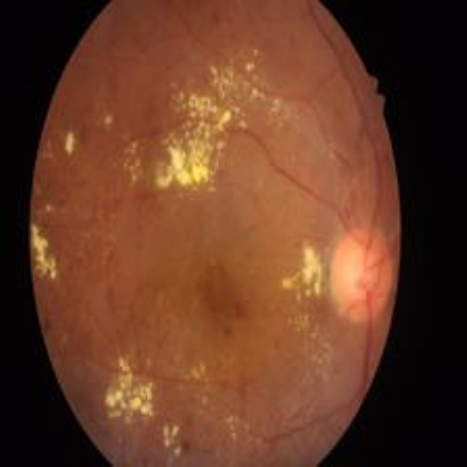} \hfill
\adjustimage{valign=c, width=0.18\linewidth}{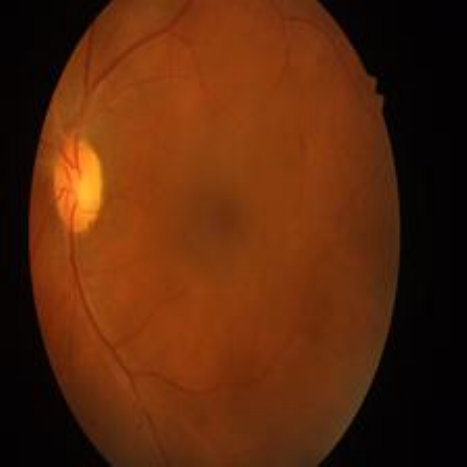} \hfill
\adjustimage{valign=c, width=0.18\linewidth}{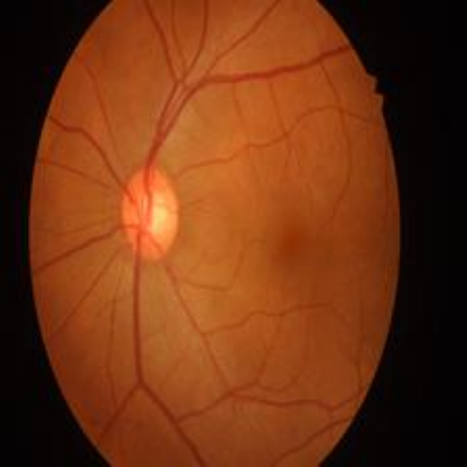} \hfill
\adjustimage{valign=c, width=0.18\linewidth}{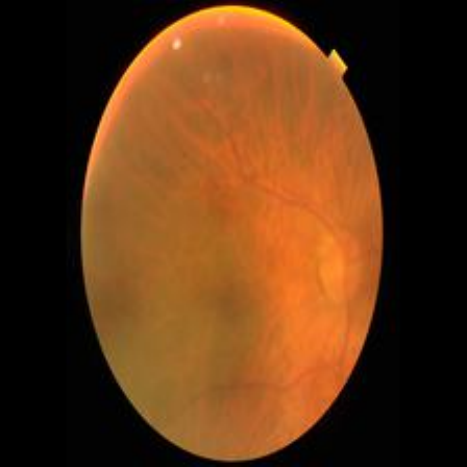} \hfill
\adjustimage{valign=c, width=0.18\linewidth}{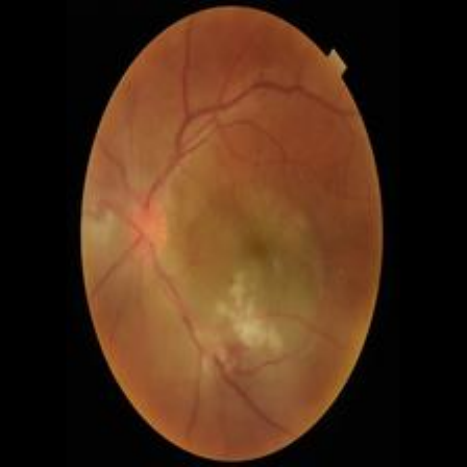} \\
\addlinespace
\midrule

NWPU-RESISC45 &
\adjustimage{valign=c, width=0.18\linewidth}{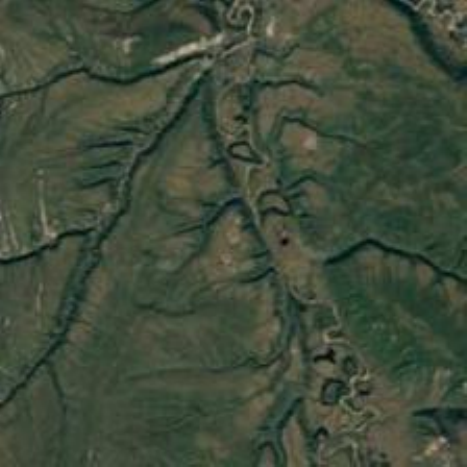} \hfill
\adjustimage{valign=c, width=0.18\linewidth}{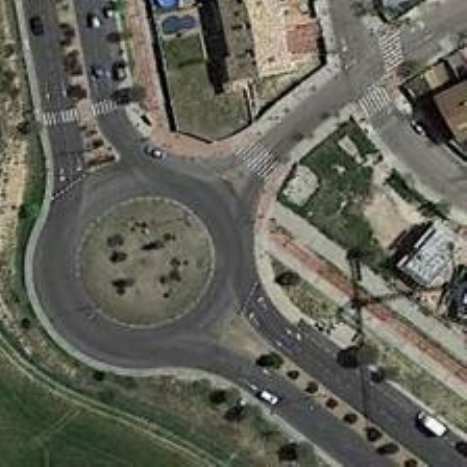} \hfill
\adjustimage{valign=c, width=0.18\linewidth}{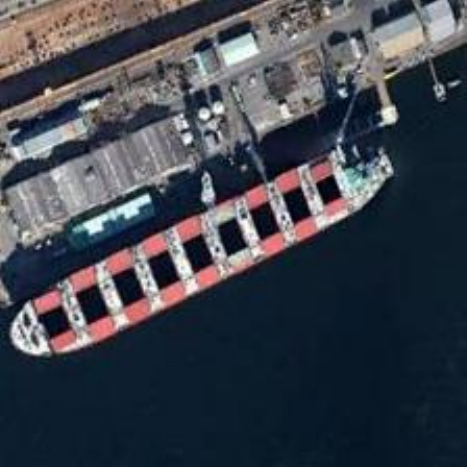} \hfill
\adjustimage{valign=c, width=0.18\linewidth}{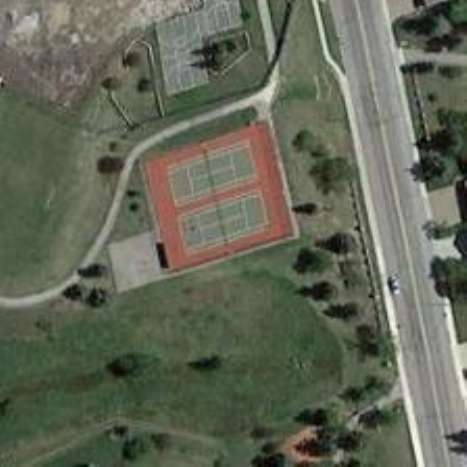} \hfill
\adjustimage{valign=c, width=0.18\linewidth}{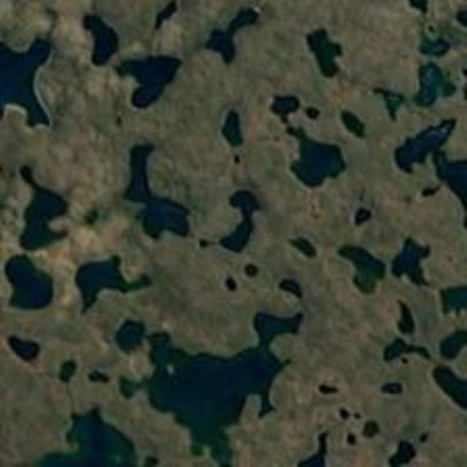} \\
\addlinespace
\midrule

NEU-CLS &
\adjustimage{valign=c, width=0.18\linewidth}{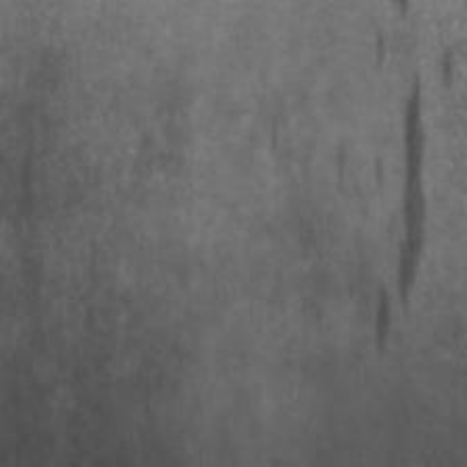} \hfill
\adjustimage{valign=c, width=0.18\linewidth}{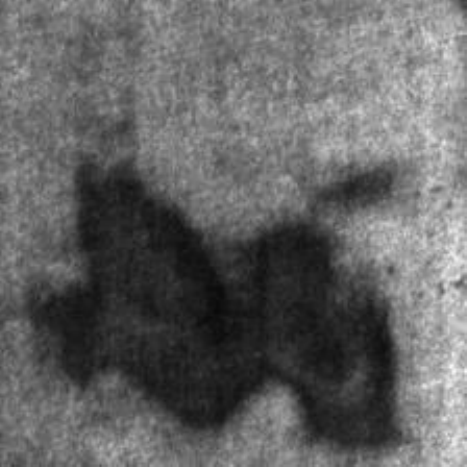} \hfill
\adjustimage{valign=c, width=0.18\linewidth}{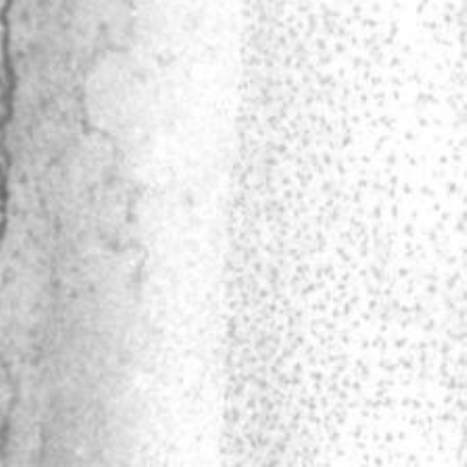} \hfill
\adjustimage{valign=c, width=0.18\linewidth}{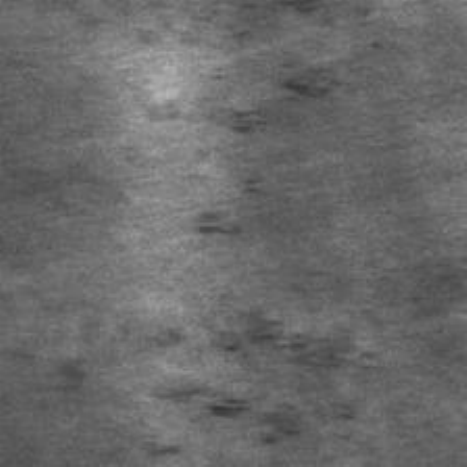} \hfill
\adjustimage{valign=c, width=0.18\linewidth}{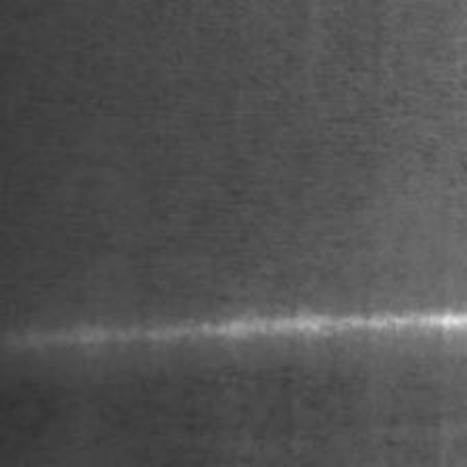} \\
\addlinespace
\midrule

IP102 &
\adjustimage{valign=c, width=0.18\linewidth}{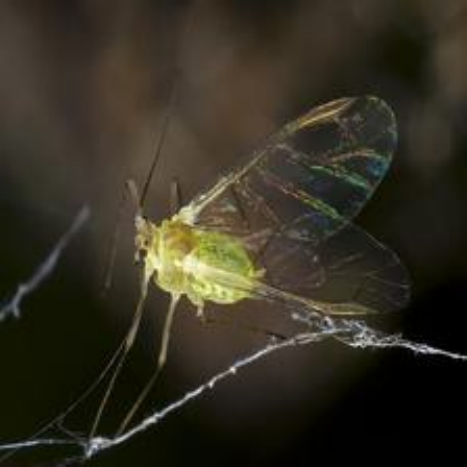} \hfill
\adjustimage{valign=c, width=0.18\linewidth}{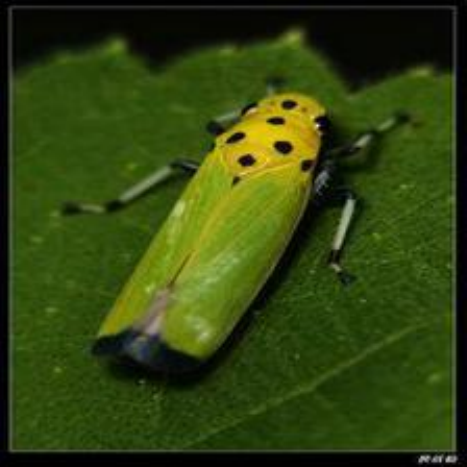} \hfill
\adjustimage{valign=c, width=0.18\linewidth}{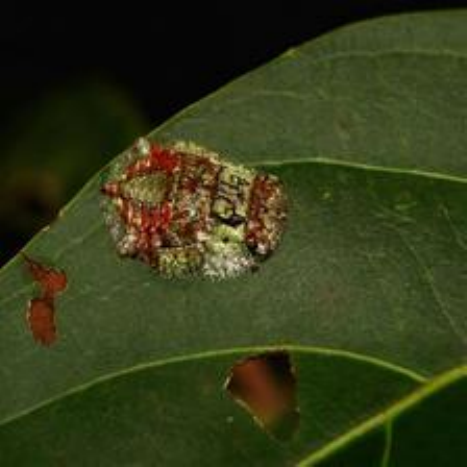} \hfill
\adjustimage{valign=c, width=0.18\linewidth}{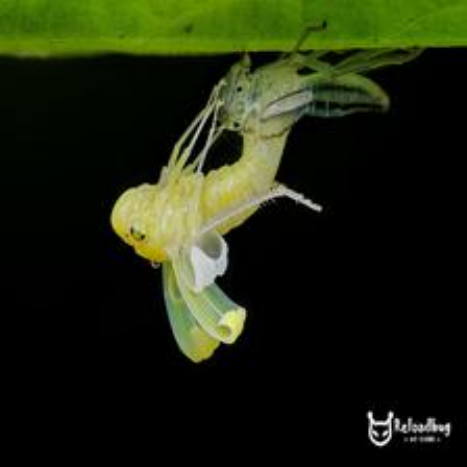} \hfill
\adjustimage{valign=c, width=0.18\linewidth}{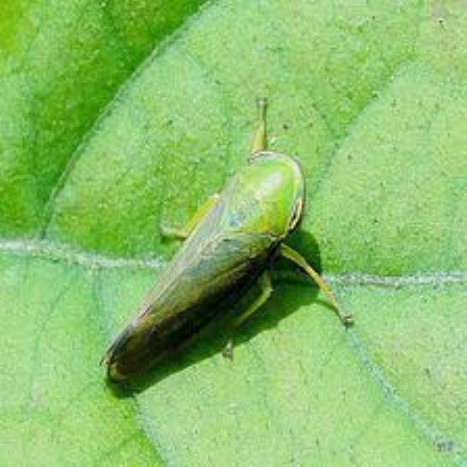} \\
\addlinespace
\midrule

Galaxy 10 DECaLS &
\adjustimage{valign=c, width=0.18\linewidth}{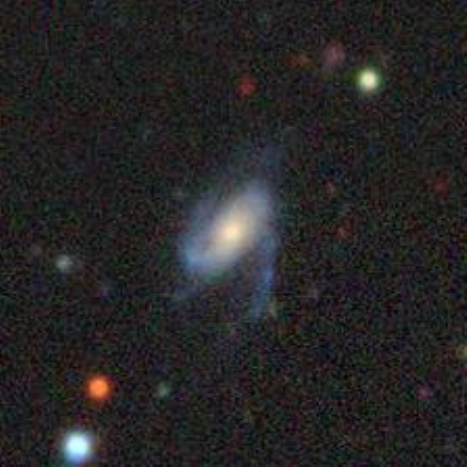} \hfill
\adjustimage{valign=c, width=0.18\linewidth}{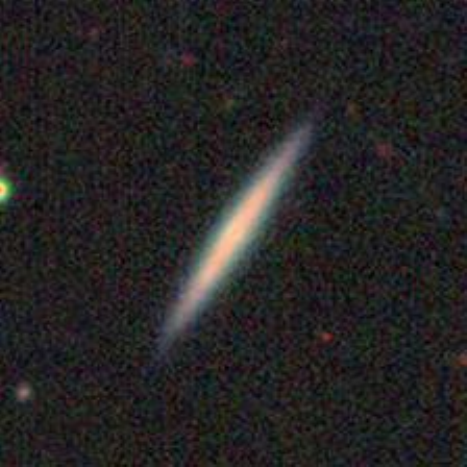} \hfill
\adjustimage{valign=c, width=0.18\linewidth}{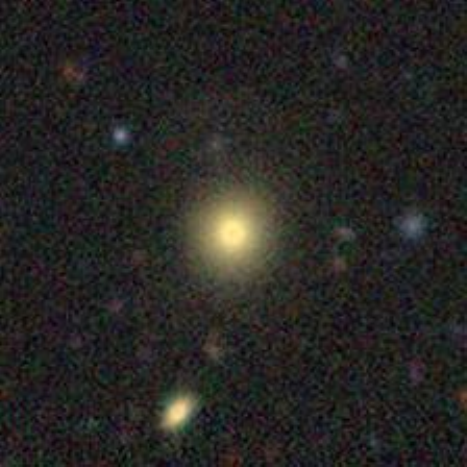} \hfill
\adjustimage{valign=c, width=0.18\linewidth}{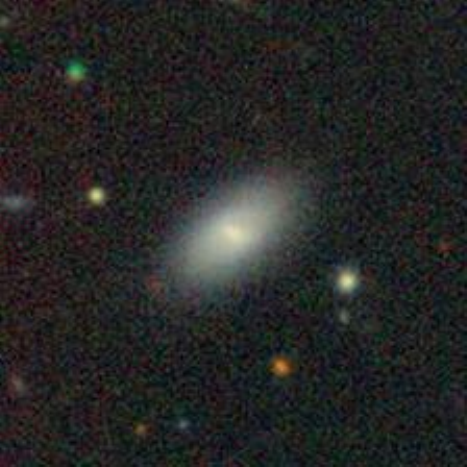} \hfill
\adjustimage{valign=c, width=0.18\linewidth}{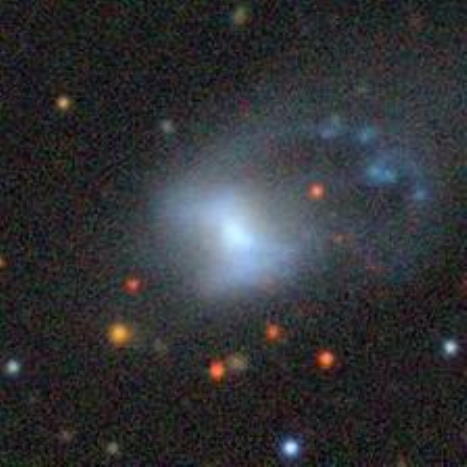} \\

\bottomrule
\end{tabularx}
\end{adjustbox}
\caption{Example images from each dataset used in our benchmark. Each dataset showcases different domain characteristics in terms of semantics and visual style.}
\label{tab:dataset_examples}
\end{table*}

\section{B. Datasets Details}
\label{app:datasets}

\subsection{B.1 Cross-Domain Few-Shot Benchmark}
\label{app:datasets_cross}
The proposed cross-domain few-shot benchmark includes 7 datasets that cover various domains. An overview of these datasets is presented in \cref{tab:dataset_summary_cross}, with example images shown in \cref{tab:dataset_examples}. For the cross-domain few-shot benchmark, we use the following datasets:
\subsubsection{Skin40}
The Skin40~\cite{skin40} dataset comprises 40 categories of dermatological conditions and is a subset of the SD-198~\cite{sd198} dataset. For our research, we utilize the training and testing sets of the Skin40 dataset to conduct training and evaluation in a few-shot learning setting. 

The specific categories included in the dataset are: Ichthyosis, Onychomycosis, Alopecia Areata, Actinic solar Damage (Actinic Keratosis), Actinic solar Damage (Pigmentation), Keratoacanthoma, Perioral Dermatitis, Allergic Contact Dermatitis, Congenital Nevus, Stasis Edema, Pityrosporum Folliculitis, Tinea Faciale, Tinea Corporis, Epidermoid Cyst, Seborrheic Keratosis, Tinea Manus, Compound Nevus, Cutaneous Horn, Nevus Incipiens, Sebaceous Gland Hyperplasia, Dermatofibroma, Eczema, Tinea Pedis, Dyshidrosiform Eczema, Rhinophyma, Psoriasis, Blue Nevus, Acne Vulgaris, Actinic solar Damage (Solar Elastosis), Seborrheic Dermatitis, Malignant Melanoma, Pyogenic Granuloma, Stasis Dermatitis, Steroid Use Abuse/Misuse Dermatitis, Dysplastic Nevus, Basal Cell Carcinoma, Tinea Versicolor, Stasis Ulcer, Skin Tag, and Inverse Psoriasis.

\subsubsection{TCGA12}
The TCGA12 dataset focuses on the recognition of pathological tissue images and is derived from the larger TCGA~\cite{tcga} (The Cancer Genome Atlas) dataset. 

This dataset primarily consists of histopathological images from twelve types of cancer: Thymoma, Sarcoma, Prostate Adenocarcinoma, Glioblastoma Multiforme, Kidney Renal Clear Cell Carcinoma, Thyroid Carcinoma, Uveal Melanoma, Testicular Germ Cell Tumors, Kidney Chromophobe, Adrenocortical Carcinoma, Brain Lower Grade Glioma, and Liver Hepatocellular Carcinoma.

\subsubsection{RFMiD 2.0}
The RFMiD12~\cite{RFMiD} (Retinal Fundus Multi-Disease Image Dataset 2.0) is an ophthalmic dataset consisting of approximately 860 retinal fundus images. Due to the presence of categories with insufficient sample sizes for 16-shot settings (where we require at least 10 test samples per class), our study utilized a subset of the 15 most populous classes for cross-domain benchmarking. 

These specific classes include: diabetic retinopathy, age-related macular degeneration, drusen, myopia, branch retinal vein occlusion, tessellation, laser scar, central serous retinopathy, optic disc cupping, optic disc pallor, optic disc edema, chorioretinitis, macular hole, retinitis, and healthy/unaffected retinal fundus images.

\subsubsection{NWPU-RESISC45}
The NWPU-RESISC45~\cite{nwpu} remote sensing dataset is a large-scale, publicly available dataset designed for scene classification in remote sensing imagery. This dataset encompasses 45 distinct scene categories, with each category containing 700 images. For our study, we randomly selected 100 images from each category to form the test set, while the remaining images were used for training. 

The specific categories included in the dataset are: airplane, airport, baseball diamond, basketball court, beach, bridge, chaparral, church, circular farmland, cloud, commercial area, dense residential, desert, forest, freeway, golf course, ground track field, harbor, industrial area, intersection, island, lake, meadow, medium residential, mobile home park, mountain, overpass, palace, parking lot, railway, railway station, rectangular farmland, river, roundabout, runway, sea ice, ship, snowberg, sparse residential, stadium, storage tank, tennis court, terrace, thermal power station, and wetland.

\subsubsection{NEU-CLS}
The Northeastern University (NEU) surface defect database~\cite{neucls} is a comprehensive dataset that collects six types of typical surface defects found in hot-rolled steel strips. The database consists of 1,800 grayscale images, with 300 samples for each of the six defect categories. For our study, we randomly selected 100 images from each category to form the test set, while the remaining images were used for training. 

These defects include rolled-in scale (RS), patches (Pa), crazing (Cr), pitted surface (PS), inclusion (In), and scratches (Sc).

\begin{table*}[!t]
    \centering
    \begin{adjustbox}{max width = 0.9\textwidth}
    \begin{tabular}{lccccccccc}
        \toprule
        Method & Setting & Skin40 & TCGA12 & RFMiD12 & NWPU-RESISC45 & NEU-CLS & IP102 & Galaxy10 DECaLS & Average \\
        \midrule
        CLIP~\cite{CLIP} & \multirow{7}{*}{1-shot} & 15.20 & 7.60 & 5.70 & 49.20 & 17.30 & 13.60 & 13.90 & 17.50 \\
        CoOp~\cite{CoOp} & & 25.00 & 22.40 & 17.40 & 52.33 & 77.57 & 12.27 & 23.13 & 32.87 \\
        Tip-Adapter-F~\cite{tip-adapter} & & 28.25 & 20.63 & 19.33 & 60.57 & 77.50 & \textbf{19.04} & 26.13 & 35.92 \\
        TaskRes~\cite{taskres} & & \textbf{29.43} & 22.50 & 21.43 & 60.43 & 77.23 & 17.80 & 25.00 & 36.26 \\
        LP++~\cite{huang2024lp++} & & 24.05 & 21.08 & 12.86 & 58.64 & 63.02 & 17.23 & 16.90 & 30.54 \\
        TCP~\cite{yao2024tcp} & & 29.40 & \textbf{24.57} & 22.63 & 61.80 & 80.37 & 17.60 & 24.67 & 37.29 \\
        \rowcolor{Gray}
        CoMuCo (Ours) & & 27.57 & 18.23 & \textbf{25.67} & \textbf{65.33} & \textbf{81.83} & 18.43 & \textbf{30.27} & \textbf{38.19} \\
        
        \midrule
        CLIP~\cite{CLIP} & \multirow{7}{*}{2-shot} & 15.20 & 7.60 & 5.70 & 49.20 & 17.30 & 13.60 & 13.90 & 17.50 \\
        CoOp~\cite{CoOp} & & 31.43 & 27.03 & 25.77 & 58.40 & 84.50 & 15.77 & 29.30 & 38.89 \\
        Tip-Adapter-F~\cite{tip-adapter} & & 32.50 & 30.12 & 32.00 & 63.81 & 87.28 & 20.86 & 29.07 & 42.23 \\
        TaskRes~\cite{taskres} & & 34.17 & 31.57 & 30.73 & 66.50 & 85.77 & \textbf{20.93} & 28.27 & 42.56 \\
        LP++~\cite{huang2024lp++} & & 33.40 & 24.33 & 21.09 & 63.26 & 85.08 & 19.69 & 21.38 & 38.32 \\
        TCP~\cite{yao2024tcp} & & 33.80 & \textbf{31.63} & 32.00 & 66.80 & \textbf{88.23} & 20.40 & 28.27 & 43.02 \\
        \rowcolor{Gray}
        CoMuCo (Ours) & & \textbf{36.10} & 29.70 & \textbf{36.57} & \textbf{69.97} & 86.77 & 20.53 & \textbf{37.23} & \textbf{45.27} \\
        
        \midrule
        CLIP~\cite{CLIP} & \multirow{7}{*}{4-shot} & 15.20 & 7.60 & 5.70 & 49.20 & 17.30 & 13.60 & 13.90 & 17.50 \\
        CoOp~\cite{CoOp} & & 40.73 & 40.00 & 25.37 & 64.93 & 91.03 & 22.97 & 30.47 & 45.07 \\
        Tip-Adapter-F~\cite{tip-adapter} & & 41.67 & \textbf{43.14} & 32.33 & 69.21 & 94.00 & 24.90 & 31.93 & 48.17 \\
        TaskRes~\cite{taskres} & & 42.60 & 40.43 & 32.63 & 71.10 & 91.90 & 24.83 & 30.13 & 47.66 \\
        LP++~\cite{huang2024lp++} & & 39.78 & 29.47 & 31.25 & 68.44 & 88.62 & 21.11 & 27.48 & 43.74 \\
        TCP~\cite{yao2024tcp} & & \textbf{45.07} & 38.17 & 30.93 & 71.23 & 92.37 & 25.87 & 28.93 & 47.51 \\
        \rowcolor{Gray}
        CoMuCo (Ours) & & 43.67 & 38.77 & \textbf{39.27} & \textbf{75.57} & \textbf{94.17} & \textbf{27.00} & \textbf{38.20} & \textbf{50.95} \\
        
        \midrule
        CLIP~\cite{CLIP} & \multirow{7}{*}{8-shot} & 15.20 & 7.60 & 5.70 & 49.20 & 17.30 & 13.60 & 13.90 & 17.50 \\
        CoOp~\cite{CoOp} & & 47.33 & 48.40 & 44.47 & 72.77 & 95.67 & 28.37 & 39.97 & 53.85 \\
        Tip-Adapter-F~\cite{tip-adapter} & & 46.58 & 50.28 & 48.67 & 74.70 & 96.44 & 30.77 & 40.90 & 55.48 \\
        TaskRes~\cite{taskres} & & 47.23 & 44.83 & 47.43 & 76.53 & 94.93 & 29.73 & 36.77 & 53.92 \\
        LP++~\cite{huang2024lp++} & & 43.88 & 38.90 & 36.14 & 73.00 & 92.18 & 30.96 & 33.55 & 49.80 \\
        TCP~\cite{yao2024tcp} & & 48.70 & 48.70 & 50.07 & 77.20 & 95.77 & 31.07 & 35.27 & 55.25 \\
        \rowcolor{Gray}
        CoMuCo (Ours) & & \textbf{53.47} & \textbf{55.03} & \textbf{57.43} & \textbf{81.23} & \textbf{97.60} & \textbf{31.87} & \textbf{48.63} & \textbf{60.75} \\
        
        \midrule
        CLIP~\cite{CLIP} & \multirow{7}{*}{16-shot} & 15.20 & 7.60 & 5.70 & 49.20 & 17.30 & 13.60 & 13.90 & 17.50 \\
        CoOp~\cite{CoOp} & & 53.23 & 54.90 & 46.07 & 78.03 & 96.60 & 33.70 & 44.33 & 58.12 \\
        Tip-Adapter-F~\cite{tip-adapter} & & 54.17 & 56.74 & 48.67 & 79.40 & 96.89 & 35.23 & 42.97 & 59.15 \\
        TaskRes~\cite{taskres} & & 52.83 & 52.83 & 46.27 & 79.83 & 95.83 & 34.03 & 41.07 & 57.53 \\
        LP++~\cite{huang2024lp++} & & 54.10 & 44.91 & 45.87 & 76.88 & 95.78 & 35.64 & 37.55 & 55.82 \\
        TCP~\cite{yao2024tcp} & & 54.73 & 54.73 & 47.90 & 80.30 & 96.60 & 36.10 & 40.60 & 58.71 \\
        \rowcolor{Gray}
        CoMuCo (Ours) & & \textbf{61.93} & \textbf{64.33} & \textbf{56.37} & \textbf{85.40} & \textbf{98.17} & \textbf{40.27} & \textbf{56.83} & \textbf{66.19} \\
        \bottomrule
    \end{tabular}
    \end{adjustbox}
    \caption{Performance comparison on cross-domain few-shot benchmark on ResNet50.
    }
    \label{tab:numerical_compare_cross}
\end{table*}

\subsubsection{IP102}
The IP102~\cite{IP102} dataset is a comprehensive collection that contains over 75,000 images spanning 102 categories. This dataset exhibits a natural long-tailed distribution, reflecting the varying frequencies of occurrence across different categories. Additionally, 19,000 images within the dataset have been annotated with bounding boxes to facilitate object detection tasks. The IP102 dataset features a hierarchical taxonomy, where insect pests that primarily affect a specific agricultural product are grouped together under the same upper-level category. For our study, we utilized the training, validation, and test sets provided by the IP102 dataset for our respective training, validation, and testing phases. 

The specific categories included in the dataset are: rice leaf roller, rice leaf caterpillar, paddy stem maggot, asiatic rice borer, yellow rice borer, rice gall midge, Rice Stemfly, brown plant hopper, white backed plant hopper, small brown plant hopper, rice water weevil, rice leafhopper, grain spreader thrips, rice shell pest, grub, mole cricket, wireworm, white margined moth, black cutworm, large cutworm, yellow cutworm, red spider, corn borer, army worm, aphids, Potosiabre vitarsis, peach borer, english grain aphid, green bug, bird cherry-oat aphid, wheat blossom midge, penthaleus major, longlegged spider mite, wheat phloeothrips, wheat sawfly, cerodonta denticornis, beet fly, flea beetle, cabbage army worm, beet army worm, Beet spot flies, meadow moth, beet weevil, sericaorient alismots chulsky, alfalfa weevil, flax budworm, alfalfa plant bug, tarnished plant bug, Locustoidea, lytta polita, legume blister beetle, blister beetle, therioaphis maculata Buckton, odontothrips loti, Thrips, alfalfa seed chalcid, Pieris canidia, Apolygus lucorum, Limacodidae, Viteus vitifoliae, Colomerus vitis, Brevipoalpus lewisi McGregor, oides decempunctata, Polyphagotars onemus latus, Pseudococcus comstocki Kuwana, parathrene regalis, Ampelophaga, Lycorma delicatula, Xylotrechus, Cicadella viridis, Miridae, Trialeurodes vaporariorum, Erythroneura apicalis, Papilio xuthus, Panonchus citri McGregor, Phyllocoptes oleiverus ashmead, Icerya purchasi Maskell, Unaspis yanonensis, Ceroplastes rubens, Chrysomphalus aonidum, Parlatoria zizyphus Lucus, Nipaecoccus vastalor, Aleurocanthus spiniferus, Tetradacus c Bactrocera minax, Dacus dorsalis (Hendel), Bactrocera tsuneonis, Prodenia litura, Adristyrannus, Phyllocnistis citrella Stainton, Toxoptera citricidus, Toxoptera aurantii, Aphis citricola Vander Goot, Scirtothrips dorsalis Hood, Dasineura sp, Lawana imitata Melichar, Salurnis marginella Guerr, Deporaus marginatus Pascoe, Chlumetia transversa, Mango flat beak leafhopper, Rhytidodera bowrinii white, Sternochetus frigidus, Cicadellidae.

\begin{table*}[h]
    \centering
    \begin{adjustbox}{max width = 0.9\textwidth}
    \begin{tabular}{lccccccccc}
        \toprule
        Method & Setting & Skin40 & TCGA12 & RFMiD12 & NWPU-RESISC45 & NEU-CLS & IP102 & Galaxy10 DECaLS & Average \\
        \midrule
        CLIP~\cite{CLIP} & \multirow{6}{*}{1-shot} & 18.20 & 17.80 & 11.00 & 60.30 & 17.20 & 13.00 & 9.10 & 20.94 \\
        MaPLe~\cite{maple} & & 27.57 & 29.27 & 22.40 & 63.27 & 78.97 & 13.97 & 30.67 & 38.01 \\
        TCP~\cite{yao2024tcp} & & 34.33 & 27.97 & 23.27 & 74.77 & 82.23 & \textbf{22.37} & 31.10 & 42.29 \\
        DePT~\cite{zhang2024dept} & & 33.67 & 29.57 & 24.20 & 68.27 & \textbf{87.33} & 18.87 & 31.40 & 41.90 \\
        TextRefiner~\cite{xie2024textrefiner} & & 27.00 & 26.70 & 18.60 & 63.47 & 77.37 & 17.77 & 28.93 & 37.12 \\
        SkipT~\cite{wu2025skip} & & 30.30 & 29.37 & 23.90 & 72.67 & 72.90 & 17.13 & 31.60 & 39.70 \\
        \rowcolor{Gray}
        CoMuCo (Ours) & & \textbf{34.37} & \textbf{33.83} & \textbf{27.57} & \textbf{74.97} & 87.10 & 20.90 & \textbf{34.13} & \textbf{44.55} \\
        
        \midrule
        CLIP~\cite{CLIP} & \multirow{6}{*}{2-shot} & 18.20 & 17.80 & 11.00 & 60.30 & 17.20 & 13.00 & 9.10 & 20.94 \\
        MaPLe~\cite{maple} & & 36.50 & 39.63 & 35.20 & 66.10 & 85.90 & 17.40 & 41.20 & 45.99 \\
        TCP~\cite{yao2024tcp} & & 42.10 & 36.53 & 32.03 & 77.83 & 86.77 & \textbf{24.97} & 36.93 & 48.17 \\
        DePT~\cite{zhang2024dept} & & 39.07 & 40.33 & 35.10 & 71.97 & 90.60 & 21.63 & 40.40 & 48.44 \\
        TextRefiner~\cite{xie2024textrefiner} & & 35.83 & 32.27 & 30.20 & 69.93 & 85.93 & 20.43 & 36.87 & 44.50 \\
        SkipT~\cite{wu2025skip} & & 39.40 & 38.17 & 32.57 & 75.00 & 85.77 & 20.70 & 39.77 & 47.34 \\
        \rowcolor{Gray}
        CoMuCo (Ours) & & \textbf{42.40} & \textbf{43.20} & \textbf{37.77} & \textbf{79.80} & \textbf{91.67} & 23.43 & \textbf{41.77} & \textbf{51.16} \\
        
        \midrule
        CLIP~\cite{CLIP} & \multirow{6}{*}{4-shot} & 18.20 & 17.80 & 11.00 & 60.30 & 17.20 & 13.00 & 9.10 & 20.94 \\
        MaPLe~\cite{maple} & & 45.83 & 50.17 & 40.20 & 76.10 & 94.07 & 24.83 & 42.00 & 53.31 \\
        TCP~\cite{yao2024tcp} & & 50.13 & 45.33 & 35.60 & 82.20 & 89.23 & \textbf{31.63} & 40.30 & 53.49 \\
        DePT~\cite{zhang2024dept} & & 47.67 & 53.00 & 39.63 & 78.30 & \textbf{96.67} & 27.67 & 45.77 & 55.53 \\
        TextRefiner~\cite{xie2024textrefiner} & & 47.17 & 41.70 & 25.63 & 77.60 & 88.57 & 27.63 & 38.17 & 49.50 \\
        SkipT~\cite{wu2025skip} & & 50.90 & 52.87 & 39.83 & 81.07 & 91.20 & 28.30 & 44.27 & 55.49 \\
        \rowcolor{Gray}
        CoMuCo (Ours) & & \textbf{52.13} & \textbf{54.53} & \textbf{43.73} & \textbf{83.20} & 96.23 & 31.47 & \textbf{45.93} & \textbf{57.96} \\
        
        \midrule
        CLIP~\cite{CLIP} & \multirow{6}{*}{8-shot} & 18.20 & 17.80 & 11.00 & 60.30 & 17.20 & 13.00 & 9.10 & 20.94 \\
        MaPLe~\cite{maple} & & 54.93 & 60.37 & 51.43 & 80.33 & 98.80 & 30.73 & 52.73 & 61.33 \\
        TCP~\cite{yao2024tcp} & & 57.27 & 51.60 & 45.90 & 85.37 & 95.03 & 37.33 & 47.63 & 60.02 \\
        DePT~\cite{zhang2024dept} & & 57.10 & 61.17 & 55.17 & 82.90 & 98.57 & 33.57 & \textbf{54.53} & 63.29 \\
        TextRefiner~\cite{xie2024textrefiner} & & 52.00 & 49.83 & 46.45 & 82.47 & 94.67 & 33.67 & 44.67 & 57.68 \\
        SkipT~\cite{wu2025skip} & & 58.77 & 60.73 & 56.27 & 85.23 & 97.77 & 34.63 & 54.93 & 64.05 \\
        \rowcolor{Gray}
        CoMuCo (Ours) & & \textbf{61.93} & \textbf{62.37} & \textbf{58.03} & \textbf{87.13} & \textbf{99.10} & \textbf{37.40} & 54.33 & \textbf{65.76} \\
        
        \midrule
        CLIP~\cite{CLIP} & \multirow{6}{*}{16-shot} & 18.20 & 17.80 & 11.00 & 60.30 & 17.20 & 13.00 & 9.10 & 20.94 \\
        MaPLe~\cite{maple} & & 63.83 & 65.53 & 55.10 & 85.00 & 98.80 & 37.50 & 59.30 & 66.44 \\
        TCP~\cite{yao2024tcp} & & 62.70 & 55.10 & 47.30 & 87.87 & 95.67 & 42.90 & 50.10 & 63.09 \\
        DePT~\cite{zhang2024dept} & & 66.57 & 66.27 & 58.00 & 86.93 & 99.17 & 41.10 & 60.80 & 68.40 \\
        TextRefiner~\cite{xie2024textrefiner} & & 60.00 & 54.00 & 46.50 & 85.90 & 95.33 & 39.40 & 50.37 & 61.64 \\
        SkipT~\cite{wu2025skip} & & 68.00 & 66.70 & 57.33 & 88.33 & 98.30 & 42.77 & 59.30 & 68.68 \\
        \rowcolor{Gray}
        CoMuCo (Ours) & & \textbf{68.90} & \textbf{68.33} & \textbf{59.63} & \textbf{90.10} & \textbf{99.40} & \textbf{45.80} & \textbf{60.83} & \textbf{70.43} \\
        \bottomrule
    \end{tabular}
    \end{adjustbox}
    \caption{Performance comparison on cross-domain few-shot benchmark on ViT-B/16.
    }
    \label{tab:numerical_compare_cross_vit}
\end{table*}

\subsubsection{Galaxy10 DECaLS}
The Galaxy10 DECaLS~\cite{galaxy10} dataset is a comprehensive collection that contains 17,736 color images of galaxies, each with a resolution of 256x256 pixels. These images are captured in the g, r, and z bands and are categorized into 10 distinct classes. For our study, we randomly selected 100 images from each class to form the test set, while the remaining images were used for training. The specific categories included in the dataset are: Disturbed Galaxies, Merging Galaxies, Round Smooth Galaxies, In-between Round Smooth Galaxies, Cigar Shaped Smooth Galaxies, Barred Spiral Galaxies, Unbarred Tight Spiral Galaxies, Unbarred Loose Spiral Galaxies, Edge-on Galaxies without Bulge, and Edge-on Galaxies with Bulge.

\subsection{B.2 CLIP Benchmark}
\label{app:datasets_clip}
In the main text, our method was assessed on the widely adopted CLIP Benchmark, in alignment with previous work~\cite{CoOp,tip-adapter,taskres}. The benchmark comprises 11 diverse datasets, including ImageNet~\cite{ImageNet}, Caltech101~\cite{Caltech}, Oxford Pets~\cite{oxfordpets}, Stanford Cars~\cite{stanfordcars}, Flowers102~\cite{flower102}, Food101~\cite{food101}, FGVCAircraft~\cite{FGVCAircraft}, SUN397~\cite{sun397}, DTD~\cite{DTD}, EuroSAT~\cite{eurosat}, and UCF101~\cite{ucf101}. These datasets span a broad range of image classification scenarios, encompassing general object recognition, fine-grained object recognition, scene recognition, texture recognition, and satellite imagery analysis. To ensure consistency with previous work~\cite{CoOp, tip-adapter, taskres}, the ``BACKGROUND Google" and ``Faces easy" classes were excluded from the Caltech101 dataset. 
Additionally, robustness under domain shift was analyzed using two ImageNet variants: ImageNet-V2~\cite{imagenetV2}, containing 200 overlapping classes, and ImageNet-Sketch~\cite{imagenetsketch}, encompassing 1,000 classes identical to ImageNet. Consistent with earlier works, ImageNet was used as the source dataset, while the two variants served as target datasets.
An overview of these datasets is presented in \cref{tab:dataset_summary}.

\begin{table}[ht]
\centering
\begin{adjustbox}{max width = 0.45\textwidth}
\begin{tabular}{lccc}
\toprule
\textbf{Type} & \textbf{Tier 1 (Low Shift)} & \textbf{Tier 2 (Medium Shift)} & \textbf{Tier 3 (High Shift)} \\
\midrule
Visually similar but semantically shifted &
NWPU-RESISC45 &
IP102 &
Galaxy 10 DECaLS \\
\addlinespace
Visually and semantically different &
-- &
Skin40 &
TCGA12, RFMiD12, NEU-CLS \\
\bottomrule
\end{tabular}
\end{adjustbox}
\caption{Datasets grouped by domain type and domain shift severity. Tier 1–3 correspond to increasing levels of domain shift, based on CLIP zero-shot performance. Two domain types are considered: (1) datasets visually similar to natural images but with semantic shift, and (2) datasets with significant differences in both texture and semantics.}
\label{tab:benchmark_categorization}
\end{table}

\subsection{B.3 Extended Stratification of the Cross-Domain Few-Shot Benchmark}
To further categorize the datasets, we divide them based on their semantic properties and the zero-shot classification performance of the CLIP model. Specifically, we classify datasets into different levels of domain shift and types of domain difference. The domain shift levels are divided into three tiers according to the zero-shot classification accuracy of the CLIP model. Meanwhile, the domain types are categorized into two groups: one consists of datasets whose images resemble natural images but have task-specific label semantics (e.g., classification tasks requiring expert knowledge), and the other includes datasets whose image styles significantly differ from natural images. The detailed categorization of domain shift levels and domain types is shown in \cref{tab:benchmark_categorization}, respectively.

\begin{table*}[t]
    \centering
    \begin{adjustbox}{max width = 0.9\textwidth}
    \begin{tabular}{lccccccccccccc}
        \toprule
        Method & Setting & ImageNet & Caltech101 & OxfordPets & StanfordCars & Flowers102 & Food101 & FGVCAircraft & SUN397 & DTD & EuroSAT & UCF101 & Average \\
        \midrule
        Zero-Shot CLIP & \multirow{7}{*}{1-shot} & 58.18 & 86.29 & 85.77 & 55.61 & 66.14 & \textbf{77.31} & 17.28 & 58.52 & 42.32 & 37.56 & 61.46 & 58.77  \\
        CoOp & & 57.15 & 87.53 & 85.89 & 55.59 & 68.12 & 74.32 & 9.64 & 60.29 & 44.39 & 50.63 & 61.92 & 59.59 \\
        Tip-Adapter-F & & 60.88 & 88.80 & 86.04 & 56.78 & \textbf{81.17} & 76.22 & 19.01 & 61.23 & \textbf{50.49} & 50.34 & 66.19 & 63.38 \\
        TaskRes & & 61.43 & 88.80 & 83.50 & 58.77 & 78.77 & 74.03 & 21.20 & 61.93 & 50.17 & 61.27 & 64.57 & 64.04 \\
        LP++ & & 61.18 & 88.56 & 84.24 & 57.20 & 78.21 & 76.61 & 19.69 & 62.47 & 46.97 & 57.23 & 65.41 & 63.43 \\
        TCP & & 57.13 & 88.60 & \textbf{87.20} & 55.40 & 77.77 & 75.9 & 19.8 & 61.93 & 47.37 & 61.17 & 65.57 & 63.44 \\
        \rowcolor{Gray}
        CoMuCo (Ours) & & \textbf{61.47} & \textbf{89.37} & 87.10 & \textbf{59.90} & 79.60 & 77.00 & \textbf{22.1} & \textbf{63.33} & 50.27 & \textbf{63.53} & \textbf{67.87} & \textbf{65.59} \\
        \midrule
        Zero-Shot CLIP & \multirow{7}{*}{2-shot} & 58.18 & 86.29 & 85.77 & 55.61 & 66.14 & {77.31} & 17.28 & 58.52 & 42.32 & 37.56 & 61.46 & 58.77  \\
        CoOp & & 57.81 & 87.93 & 82.64 & 58.28 & 77.51 & 72.49 & 18.68 & 59.48 & 45.15 & 61.50 & 64.09 & 62.32 \\
        Tip-Adapter-F & & 61.57 & 89.61 & 86.06 & 61.13 & 85.40 & 77.05 & 21.76 & 63.19 & \textbf{55.32} & 64.76 & 68.99 & 66.80 \\
        TaskRes & & \textbf{62.17} & 90.13 & 84.43 &	62.77 &	\textbf{85.63} & 75.30 &	23.07 &	64.33 &	54.53 &	65.77 &	69.10 &	67.02 \\
        LP++ & & 61.56 & 89.53 & 85.74 & 59.95 & 84.69 & 77.22 & 21.58 & 64.65 & 52.44 & 61.65 & 69.20 & 66.20 \\
        TCP & & 58.77 & 89.53 & 86.83 & 59.67 & 84.47 & 75.87 & 21.93 & 64.20 & 52.93 & 66.73 & 70.27 & 66.47 \\
        \rowcolor{Gray}
        CoMuCo (Ours) & & 61.87 & \textbf{90.67} & \textbf{87.67} & \textbf{63.53} & 84.97 & \textbf{77.47} & \textbf{26.77} & \textbf{65.33} & 54.43 & \textbf{69.97} & \textbf{71.67} & \textbf{68.58} \\
        \midrule
        Zero-Shot CLIP & \multirow{7}{*}{4-shot} & 58.18 & 86.29 & 85.77 & 55.61 & 66.14 & 77.31 & 17.28 & 58.52 & 42.32 & 37.56 & 61.46 & 58.77 \\
        CoOp & & 59.99 & 89.55 & 86.70 & 62.62 & 86.20 & 73.33 & 21.87 & 63.47 & 53.49 & 70.18 & 67.03 & 66.77 \\
        Tip-Adapter-F & & 62.62 & 90.87 & 86.46 & 64.86 & 89.53 & 77.46 & {26.39} & 65.88 & 60.25 & 69.66 & {72.71} & 69.70 \\
        TaskRes & & 62.93 & 90.63 & 86.27 & 66.50 & 89.50 & 76.23 & 24.83 & 66.67 & 59.50 & 72.97 & 69.70 & 69.61 \\
        LP++ & & 62.55 & 90.87 & 86.94 & 63.44 & 89.56 & \textbf{77.79} & 24.22 & 67.28 & 57.75 & 68.67 & 71.68 & 69.16 \\
        TCP & & 60.60 & 90.93 & 88.20 & 64.97 & 89.17 & 76.67 & 24.30 & 66.73 & 57.80 & 69.67 & 73.70 & 69.34 \\
        \rowcolor{Gray}
        CoMuCo (Ours) & & \textbf{62.97} & \textbf{91.63} & \textbf{89.20} & \textbf{69.33} & \textbf{90.17} & 77.17 & \textbf{33.77} & \textbf{67.60} & \textbf{60.77} & \textbf{80.37} & \textbf{74.70} & \textbf{72.52} \\
        \midrule
        Zero-Shot CLIP & \multirow{7}{*}{8-shot} & 58.18 & 86.29 & 85.77 & 55.61 & 66.14 & 77.31 & 17.28 & 58.52 & 42.32 & 37.56 & 61.46 & 58.77 \\
        CoOp & & 61.56 & 90.21 & 85.32 & 68.43 & 91.18 & 71.82 & 26.13 & 65.52 & 59.97 & 76.73 & 71.94 & 69.89 \\
        Tip-Adapter-F & & 64.15 & 91.70 & {88.28} & 69.51 & 91.00 & 77.90 & 30.62 & {69.23} & 62.93 & {79.33} & 74.76 & 72.67 \\
        TaskRes & & 64.03 & 92.23 & 87.07 & 70.57 & 94.30 & 76.90 & 29.50 & 68.70 & 64.23 & 78.07 & 74.77 & 72.76 \\
        LP++ & & 63.76 & 91.84 & 87.71 & 67.81 & 92.61 & \textbf{78.53} & 27.73 & 69.34 & 62.42 & 75.86 & 74.86 & 72.04 \\
        TCP & & 63.23 & 91.83 & 88.73 & 70.70 & 93.10 & 77.60 & 29.00 & 69.37 & 64.03 & 75.00 & 77.20 & 72.71 \\
        \rowcolor{Gray}
        CoMuCo (Ours) & & \textbf{64.37} & \textbf{92.80} & \textbf{90.43} & \textbf{76.33} & \textbf{94.47} & 78.13 & \textbf{43.60} & \textbf{70.20} & \textbf{65.73} & \textbf{83.90} & \textbf{79.30} & \textbf{76.30} \\
        \midrule
        Zero-Shot CLIP & \multirow{7}{*}{16-shot} &  58.18 & 86.29 & 85.77 & 55.61 & 66.14 & 77.31 & 17.28 & 58.52 & 42.32 & 37.56 & 61.46 & 58.77 \\
        CoOp & & 62.95 & 91.83 & 87.01 & 73.36 & 94.51 & 74.67 & 31.26 & 69.26 & 63.58 & 83.53 & 75.71 & 73.42 \\
        Tip-Adapter-F & & 65.44 & 92.63 & {88.18} & 75.75 & 94.23 & 78.11 & 35.86 & {71.00} & 66.94 & {84.94} & {79.03} & 75.65 \\
        TaskRes & & 64.75 & 92.90 & 88.10 & 74.93 & {96.10} & 78.23 & 33.73 & 70.30 & {67.57} & 82.57 & 76.87 & 75.10 \\
        LP++ & & 64.73 & 92.73 & 88.38 & 72.33 & 94.26 & 78.88 & 31.73 & 71.23 & 66.40 & 80.53 & 77.46 & 74.42 \\
        TCP & & 64.80 & 92.77 & 89.43 & 75.87 & 95.10 & 78.70 & 33.67 & 71.63 & 67.07 & 80.50 & 79.70 & 75.38 \\
        \rowcolor{Gray}
        CoMuCo (Ours) & & \textbf{66.27} & \textbf{94.17} & \textbf{90.87} & \textbf{85.07} & \textbf{97.60} & \textbf{79.43} & \textbf{55.63} & \textbf{72.33} & \textbf{69.83} & \textbf{89.40} & \textbf{82.87} & \textbf{80.32} \\
        \bottomrule
    \end{tabular}
    \end{adjustbox}
    \caption{Performance comparison on CLIP benchmark on ResNet50.
    }
    \label{tab:numerical_compare_CLIP}
\end{table*}

\begin{table*}[h]
    \centering
    \begin{adjustbox}{max width = 0.9\textwidth}
    \begin{tabular}{lccccccccccccc}
        \toprule
        Method & Setting & ImageNet & Caltech101 & OxfordPets & StanfordCars & Flowers102 & Food101 & FGVCAircraft & SUN397 & DTD & EuroSAT & UCF101 & Average \\
        \midrule
        CLIP~\cite{CLIP} & \multirow{6}{*}{1-shot} & 66.73 & 93.35 & 88.25 & 65.48 & 67.44 & 83.65 & 23.67 & 62.59 & 44.27 & 42.01 & 65.13 & 63.87  \\
        MaPLe~\cite{maple} & & 68.73 & 93.67 & 91.53 & 68.07 & 78.80 & 84.40 & 27.97 & 68.40 & 50.97 & 41.90 & 72.60 & 67.91 \\
        TCP~\cite{yao2024tcp} &  & 67.03 & 93.53 & 91.43 & 66.17 & \textbf{86.87} & 84.67 & 28.87 & 69.00 & 54.80 & 62.53 & 73.37 & 70.75 \\
        DePT~\cite{zhang2024dept} &  & 64.23 & 93.70 & 90.43 & 67.63 & 82.77 & 84.70 & \textbf{30.03} & 68.03 & 54.07 & 52.30 & 74.03 & 69.27 \\
        TextRefiner~\cite{xie2024textrefiner} &  & 69.00 & 92.13 & 88.17 & 65.07 & 70.10 & 85.40 & 25.63 & 67.97 & 44.83 & 53.93 & 69.93 & 66.56 \\
        SkipT~\cite{wu2025skip} &  & 69.20 & \textbf{93.87} & 91.60 & 69.63 & 83.63 & 85.67 & 29.93 & \textbf{69.10} & 54.50 & \textbf{72.23} & \textbf{75.30} & 72.24 \\
        \rowcolor{Gray}
        CoMuCo (Ours) &  & \textbf{69.97} & 93.70 & \textbf{91.87} & \textbf{69.93} & 86.53 & \textbf{86.03} & 29.00 & 68.93 & \textbf{56.53} & 69.70 & 75.20 & \textbf{72.49} \\
        
        \midrule
        CLIP~\cite{CLIP} & \multirow{6}{*}{2-shot} & 66.73 & 93.35 & 88.25 & 65.48 & 67.44 & 83.65 & 23.67 & 62.59 & 44.27 & 42.01 & 65.13 & 63.87  \\
        MaPLe~\cite{maple} &  & 69.47 & 94.20 & \textbf{92.63} & 69.80 & 84.47 & 85.03 & 30.93 & 70.53 & 55.63 & 72.30 & 74.30 & 72.66 \\
        TCP~\cite{yao2024tcp} &  & 68.30 & 94.13 & 90.57 & 71.00 & 90.87 & 85.17 & 32.23 & 71.03 & 58.43 & 70.63 & 77.70 & 73.64 \\
        DePT~\cite{zhang2024dept} &  & 65.97 & 94.27 & 90.90 & 71.77 & 88.80 & 85.30 & 32.43 & 70.37 & 59.83 & 68.57 & 77.60 & 73.25 \\
        TextRefiner~\cite{xie2024textrefiner} &  & 70.10 & 93.37 & 90.10 & 65.17 & 73.37 & 85.70 & 26.63 & 69.07 & 48.70 & 53.40 & 69.77 & 67.76 \\
        SkipT~\cite{wu2025skip} &  & 70.23 & \textbf{95.10} & 92.43 & 73.17 & 90.33 & 85.90 & \textbf{33.53} & 70.60 & 58.83 & \textbf{78.50} & 78.37 & 75.18 \\
        \rowcolor{Gray}
        CoMuCo (Ours) &  & \textbf{70.67} & 94.47 & 92.23 & \textbf{74.07} & \textbf{91.00} & \textbf{86.57} & 32.67 & \textbf{71.10} & \textbf{61.30} & 76.13 & \textbf{78.53} & \textbf{75.34} \\
        
        \midrule
        CLIP~\cite{CLIP} & \multirow{6}{*}{4-shot} & 66.73 & 93.35 & 88.25 & 65.48 & 67.44 & 83.65 & 23.67 & 62.59 & 44.27 & 42.01 & 65.13 & 63.87  \\
        MaPLe~\cite{maple} &  & 70.77 & 95.30 & 93.27 & 71.97 & 90.47 & 85.67 & 32.63 & 72.73 & 61.17 & 76.57 & 77.93 & 75.32 \\
        TCP~\cite{yao2024tcp} &  & 69.97 & 95.17 & 92.00 & 76.43 & 93.83 & 85.50 & 36.37 & 73.40 & 64.07 & 73.63 & 80.77 & 76.47 \\
        DePT~\cite{zhang2024dept} &  & 68.57 & 95.10 & 92.83 & 76.13 & 93.87 & 85.70 & 34.57 & 73.07 & 64.13 & 78.87 & 81.03 & 76.72 \\
        TextRefiner~\cite{xie2024textrefiner} &  & 70.70 & 93.13 & 92.57 & 67.90 & 76.17 & \textbf{87.03} & 29.43 & 70.80 & 51.37 & 56.00 & 71.87 & 69.72 \\
        SkipT~\cite{wu2025skip} &  & 71.40 & 95.60 & \textbf{93.33} & 77.60 & 94.27 & 86.00 & 39.90 & 73.07 & 65.70 & \textbf{83.40} & 82.53 & 78.44 \\
        \rowcolor{Gray}
        CoMuCo (Ours) &  & \textbf{71.70} & \textbf{95.97} & 93.17 & \textbf{79.90} & \textbf{95.30} & 86.80 & \textbf{39.93} & \textbf{73.50} & \textbf{67.33} & 82.67 & \textbf{82.57} & \textbf{78.98} \\
        
        \midrule
        CLIP~\cite{CLIP} & \multirow{6}{*}{8-shot} & 66.73 & 93.35 & 88.25 & 65.48 & 67.44 & 83.65 & 23.67 & 62.59 & 44.27 & 42.01 & 65.13 & 63.87  \\
        MaPLe~\cite{maple} & & 71.63 & 95.40 & 93.07 & 74.63 & 94.27 & 86.47 & 36.97 & 74.00 & 65.57 & 84.60 & 81.03 & 77.97 \\
        TCP~\cite{yao2024tcp} &  & 71.60 & 95.43 & 92.30 & 80.27 & 96.20 & 86.53 & 40.93 & 75.07 & 68.80 & 77.57 & 83.23 & 78.90 \\
        DePT~\cite{zhang2024dept} &  & 71.37 & 95.70 & 93.47 & 80.63 & 96.27 & 86.60 & 41.00 & 75.30 & 69.40 & 82.83 & 83.57 & 79.65 \\
        TextRefiner~\cite{xie2024textrefiner} &  & 70.77 & 95.17 & 92.70 & 69.87 & 84.07 & 86.97 & 31.00 & 72.17 & 55.23 & 59.57 & 75.37 & 72.08 \\
        SkipT~\cite{wu2025skip} &  & 72.40 & 95.90 & 93.40 & 82.33 & 96.60 & 86.73 & 46.50 & 74.77 & 69.77 & 86.57 & 85.60 & 80.96 \\
        \rowcolor{Gray}
        CoMuCo (Ours) &  & \textbf{73.20} & \textbf{96.00} & \textbf{93.50} & \textbf{85.77} & \textbf{97.37} & \textbf{87.47} & \textbf{52.00} & \textbf{75.87} & \textbf{71.30} & \textbf{86.80} & \textbf{85.93} & \textbf{82.29} \\
        
        \midrule
        CLIP~\cite{CLIP} & \multirow{6}{*}{16-shot} & 66.73 & 93.35 & 88.25 & 65.48 & 67.44 & 83.65 & 23.67 & 62.59 & 44.27 & 42.01 & 65.13 & 63.87  \\
        MaPLe~\cite{maple} &  & 72.57 & 95.67 & 93.30 & 78.00 & 95.80 & 87.37 & 40.63 & 75.47 & 70.50 & 89.03 & 82.70 & 80.09 \\
        TCP~\cite{yao2024tcp} &  & 72.40 & 95.83 & 92.77 & 84.00 & 97.43 & 87.23 & 46.13 & 76.80 & 72.70 & 85.00 & 85.40 & 81.43 \\
        DePT~\cite{zhang2024dept} &  & 73.35 & 96.27 & 94.13 & 85.03 & 97.83 & 87.30 & 47.00 & 77.30 & 74.03 & 91.43 & 85.57 & 82.66 \\
        TextRefiner~\cite{xie2024textrefiner} &  & 70.90 & 95.27 & 93.13 & 72.10 & 92.37 & 87.23 & 34.37 & 74.33 & 61.93 & 64.30 & 79.63 & 75.05 \\
        SkipT~\cite{wu2025skip} &  & 73.83 & 96.77 & 94.00 & 87.43 & 98.17 & 87.13 & 55.57 & 76.80 & 74.07 & \textbf{91.57} & 87.70 & 83.91 \\
        \rowcolor{Gray}
        CoMuCo (Ours) &  & \textbf{74.90} & \textbf{96.87} & \textbf{94.47} & \textbf{90.30} & \textbf{98.67} & \textbf{88.00} & \textbf{64.30} & \textbf{77.60} & \textbf{76.77} & 91.47 & \textbf{88.37} & \textbf{85.61} \\
        \bottomrule
    \end{tabular}
    \end{adjustbox}
    \caption{Performance comparison on CLIP benchmark on ViT-B/16.
    }
    \label{tab:numerical_compare_CLIP_vit}
\end{table*}

\begin{figure}[htb]
    \centering
    \begin{subfigure}{0.9\linewidth}
        \centering
        \includegraphics[width=1\linewidth]{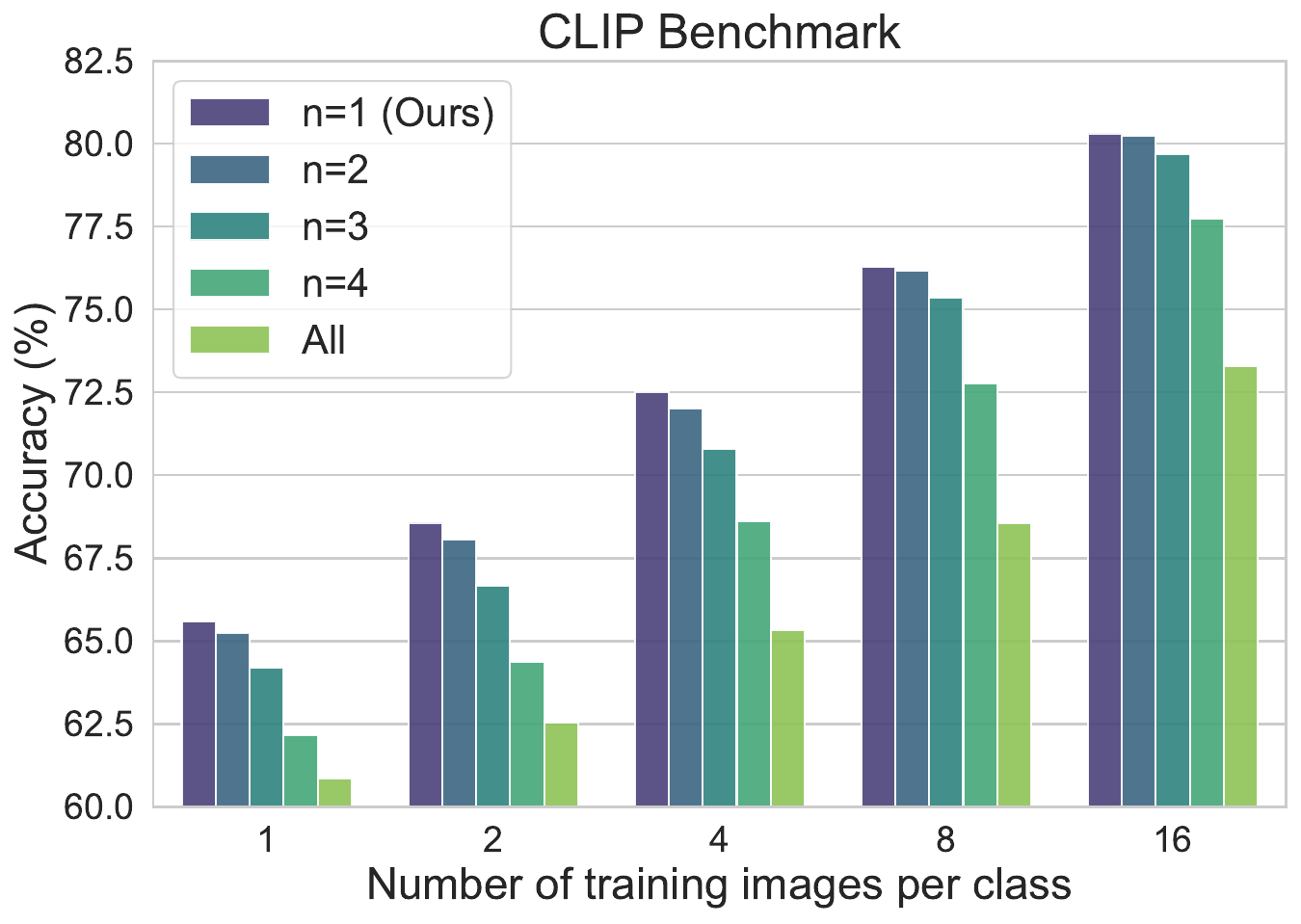}
        \label{fig:clip_fap}
    \end{subfigure}
    \begin{subfigure}{0.9\linewidth}
        \centering
        \includegraphics[width=1\linewidth]{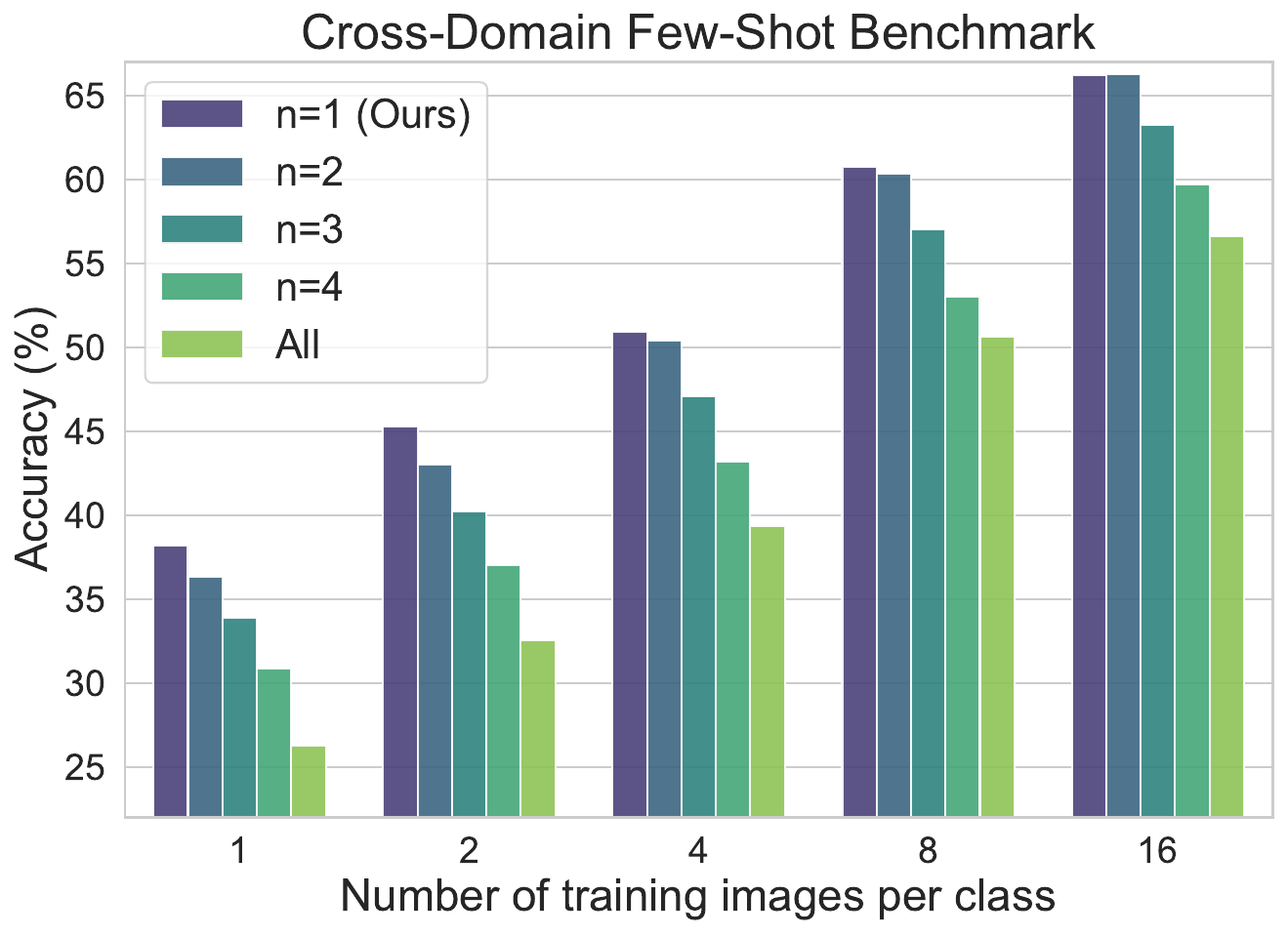}
        \label{fig:cross_fap}
    \end{subfigure}
    \caption{Average performance on the CLIP benchmark and cross-domain few-shot benchmark with various settings of fine-tuned layers for FR. `n' represents the number of tuned layers close to the model output, with `n=1' being the selected setup in the proposed CoMuCo and `All' fine-tuning the entire visual encoder.
    }
    \label{fig:fap_full}
\end{figure}

\section{C. More Results and Details}
\label{app:results}

\subsection{C.1 Software and hardware}
All methods are implemented in Pytorch2.3.1. We run all the experiments on NVIDIA RTX3090 GPU.

\subsection{C.2 Results on Cross-Domain Few-Shot Benchmark}
\label{app:results_cross}
The full numerical results on the cross-domain few-shot benchmark are presented in \cref{tab:numerical_compare_cross}. 
While zero-shot CLIP shows moderate success on NWPU-RESISC45 due to its natural-like elements (e.g., rivers, houses), it fails to generalize well on more semantically divergent datasets.
As shown in \cref{tab:numerical_compare_cross}, our method consistently outperforms all baselines on ResNet50 across [1, 2, 4, 8, 16]-shot settings, with the most notable gains in the 8- and 16-shot scenarios. Specifically, average improvements of 0.9\%, 2.25\%, 2.78\%, 5.27\%, and 7.04\% were observed over the best-performing alternatives.
Using ViT-B/16 as the visual encoder (\cref{tab:numerical_compare_cross_vit}), our method further demonstrates superior performance. Overall, the results confirm CoMuCo's strong generalization ability across diverse domains.

\subsection{C.3 Results on CLIP Benchmark}
\label{app:results_clip}
The complete numerical results of our method and the comparative methods evaluated on the CLIP benchmark are presented in \cref{tab:numerical_compare_CLIP} and \cref{tab:numerical_compare_CLIP_vit}.
It is evident that our method consistently surpasses the compared methods in terms of average performance across all [1, 2, 4, 8, 16]-shot settings, with the performance gap widening as the sample size increases on ResNet50. Specifically, compared to the best competing methods, average performance improvements of 1.55\%, 1.56\%, 2.82\%, 3.54\%, and 4.67\% were observed under different settings. 
In summary, the evaluation results on the CLIP benchmark suggest that CoMuCo exhibits robust learning and generalization capabilities in various scenarios, effectively addressing classification tasks in few-shot settings.

\subsection{C.4 Impact of Fine-tuning Layer Configurations in Feature Refiner}
\label{app:results_fap}
To evaluate the effects of the number of fine-tuning layers on FR, experiments were performed with various fine-tuning layer configurations across 
11 datasets in the CLIP Benchmark and 7 datasets in the Cross-Domain Few-Shot Benchmark. 
As shown in \cref{fig:fap_full}, more fine-tuning layers result in larger performance degradation, particularly in scenarios with limited samples or when adapting to cross-domain datasets. Specifically, under 4-shot conditions, additional layers reduced performance by 0.49\%, 1.73\%, 3.88\%, and 7.19\% on the CLIP Benchmark, and by 0.56\%, 3.86\%, 7.78\%, and 11.58\% on the cross-domain Benchmark. Under the more constrained 1-shot setting on cross-domain data, performance declines of 1.86\%, 4.28\%, 7.32\%, and 11.90\% were observed.
These results suggest that, with extremely limited data, shallow layers are more prone to overfitting, leading to decreased generalization performance. This effect is exacerbated with fewer samples and cross-domain learning, where model adjustments are more substantial and more likely to fit noise, resulting in further performance reduction. Therefore, fine-tuning only the deeper layers helps to retain valuable pre-trained knowledge while alleviating overfitting.

\subsection{C.5 Comparison with ensemble models}
Our framework is fundamentally distinct from simple model ensembles, with its inter-model interaction mechanism proven to significantly boost performance. 
Experiment were performed by ensembling TCP, TaskRes, and CLIP. The results in Tab.1 in manuscript (Rows 2, 4, 6 \& 9) and \cref{tab:ensemble} indicate that our method's superiority stems from more than model ensembling and outperforms the ensemble of competing methods.

\begin{table}[h]
\centering
\resizebox{0.45\textwidth}{!}{
\begin{tabular}{cccc}
\toprule
 & \textbf{ImageNet} & \textbf{Stanford Cars} & \textbf{FGVC Aircraft} \\
\midrule
\textbf{TCP+TaskRes+CLIP} & 64.54 & 75.00 & 32.34 \\
\textbf{CoMuCo (Ours)} & \textbf{66.27} & \textbf{85.07} & \textbf{55.63} \\
\rowcolor{Gray}
$\Delta$ & {+1.73} & {+10.07} & {+23.29} \\
\bottomrule
\end{tabular}
}
\caption{Performance comparison with ensemble models.}
\label{tab:ensemble}
\end{table}

\begin{table}[tb]
    \centering
    \small
    \begin{adjustbox}{max width = 0.45\textwidth}
    \begin{tabular}{lcccc}
        \toprule
        \multirow{2}{*}{Method} & \multicolumn{3}{c}{Datasets} \\
        \cmidrule(lr){2-4}
        & ImageNet & Stanford Cars & Galaxy10 DECaLS\\
        \midrule
        Zero-Shot & 66.10 & 86.30 & 14.30 \\
        Tip-Adapter-F & 70.18 & 90.90 & 52.70 \\
        TCP & 69.83 & 90.77 & 51.53 \\
        \rowcolor{Gray}
        CoMuCo (Ours) & \textbf{71.07} & \textbf{91.93} & \textbf{56.00} \\
        \rowcolor{Gray}
        $\Delta$ & \textbf{0.89} & \textbf{1.03} & \textbf{3.30} \\
        \bottomrule
    \end{tabular}
    \end{adjustbox}
    \caption{Performance on ConvNext-Base Model, with $\Delta$ indicating the superiority of the proposed CoMuCo framework over the strongest baseline method.}
    \label{tab:convnext}
\end{table}

\subsection{C.6 Extension to Non-transformer Model}
The architectural generalization of CoMuCo is demonstrated through its application to ConvNeXt-Base visual encoders from OpenCLIP~\cite{cherti2023reproducible}, pretrained on LAION-400M~\cite{laion400m}. Given the absence of attention pooling in ConvNeXt-Base, the final block of its fourth stage is designated for FI training, while the entire fourth stage is used for FR training. 
The results (\cref{tab:convnext}) demonstrate consistent performance advantages, with accuracy improvements of 0.89\% on ImageNet, 1.03\% on Stanford Cars, and a substantial 3.3\% gain on Galaxy10 DECaLS under 16-shot conditions.
These results confirm CoMuCo’s effectiveness across diverse architectures and its strong learning performance in cross-domain applications.

\begin{table}[htbp]
  \centering
  \resizebox{0.45\textwidth}{!}{
    \begin{tabular}{lcccc}
      \toprule
      \multirow{2}{*}{Method} & \multicolumn{2}{c}{FPS} & \multirow{2}{*}{\begin{tabular}[c]{@{}c@{}}Average Accuracy (\%)\\ on CLIP Benchmark\end{tabular}} \\
      \cmidrule(r){2-3}
      & Training & Inference & \\
      \midrule
      CoCoOp & 4 & 13 & 74.9 \\
      TCP & 121 & 622 & 81.43 \\
      TextRefiner & 85 & 534 & 75.05 \\
      \rowcolor{Gray}
      CoMuCo (Ours) & 251 & 542 & 84.33 \\
      \bottomrule
    \end{tabular}
  }
  \caption{Training and Inference Efficiency Comparison. Inference time tested on SUN397 dataset using ViT-B/16 as the visual encoder.}
  \label{tab:efficiency_comparison}
\end{table}

\subsection{C.7 Training and Inference Efficiency}
As shown in \cref{tab:efficiency_comparison}, CoMuCo significantly enhances classification accuracy while maintaining high training and inference efficiency. Specifically, CoMuCo demonstrates a substantial advantage in training speed (251 FPS), outperforming all three baselines by a large margin, which highlights its scalability and practical applicability across various training scenarios. In terms of inference efficiency, CoMuCo approaches the performance of TCP (542 vs. 622 FPS), matches that of TextRefiner (534 FPS), and significantly surpasses CoCoOp (13 FPS), while achieving markedly higher accuracy than all three. These comparisons suggest that CoMuCo achieves a robust compromise between model expressiveness and computational viability across both training and inference phases.

\section{D. Visualization Analysis}
\label{app:results_vis}
\subsection{D.1 Visualization of Attention Regions}

To further investigate CoMuCo, a visual analysis was conducted along its dual modules. Specifically, several example images were randomly selected from the test sets of the ImageNet, Stanford Cars, and Galaxy datasets. Using GradCAM~\cite{selvaraju2017grad}, the attention regions of the models were observed for each test image and its corresponding class text. As illustrated in \cref{fig:vis_imagenet}, \cref{fig:vis_cars} and \cref{fig:vis_galaxy}, when the original CLIP model failed to correctly locate the target corresponding to the textual category, the adapted FI and FR often succeeded in identifying the object of interest, as demonstrated by the examples of ``mongoose" and ``meerkat" in \cref{fig:vis_imagenet}. Conversely, when CLIP successfully located the target, FI and FR tended to capture more comprehensive attention regions, as shown in the examples of ``fig" and ``pillow" in \cref{fig:vis_imagenet}.

Comparison of FI and FR reveals that, when dealing with natural images, FI sometimes better disregards distracting elements in the image. For example, in the cases of ``2012 Hyundai Accent Sedan" and ``2012 Rolls-Royce Phantom Sedan" (\cref{fig:vis_cars}), FI focused more precisely on the main body of the car compared to FR. In contrast, when handling cross-domain data, FR was more effective at identifying regions associated with the textual category. For instance, in the examples of ``Edge-on Galaxies with Bulge" and ``Edge-on Galaxies without Bulge" (\cref{fig:vis_galaxy}), FR's attention is more concentrated on the ``bulge" regions than FI.

In summary, the fine-tuned FI and FR demonstrated an enhanced ability to focus on relevant aspects of the target datasets compared to the original CLIP vision encoder. FI exhibited reduced overfitting by effectively ignoring distractions in natural images, while FR demonstrated superior discriminative capacity when addressing cross-domain data.

\subsection{D.2 Visualization of Learned Features}

Feature visualization was conducted using ImageNet, Stanford Cars, and Galaxy10 DECaLS to examine the feature extraction behaviors of the model’s dual modules. From ImageNet and Stanford Cars, 20 categories were randomly chosen, while all 10 categories from Galaxy10 DECaLS were included. Test images were processed through the original CLIP encoder, FI, and FR, with extracted features visualized via t-SNE (\cref{fig:tsne}).

The results show that FI and FR consistently outperform the CLIP encoder in intra-class compactness and classification boundary clarity. However, their behaviors vary across datasets. In ImageNet, both FI and FR produce similar feature distributions. For Stanford Cars, FR further tightens intra-class clusters and increases inter-class separation, whereas FI improves feature discriminability but retains a distribution pattern closer to CLIP. In Galaxy10 DECaLS, where CLIP fails to distinguish categories, FI successfully clusters intra-class samples but struggles with inter-class separation, while FR excels in both aspects.

These results suggest that both FI and FR enhance intra-class clustering, yet FI tends to preserve the original feature distribution, making it more suitable for domains similar to the pre-training dataset. Conversely, FR exhibits superior performance in cross-domain scenarios by improving intra-class clustering and inter-class separation. This highlights the distinct functionalities of these two modules: FI extracts and refines knowledge relevant to the downstream task from pre-trained representations, whereas FR facilitates deep domain adaptation by further fine-tuning the feature extractor, enabling the acquisition of novel knowledge beyond pre-training through downstream data.

\begin{figure}[!h]
    \centering
    \includegraphics[width=\linewidth]{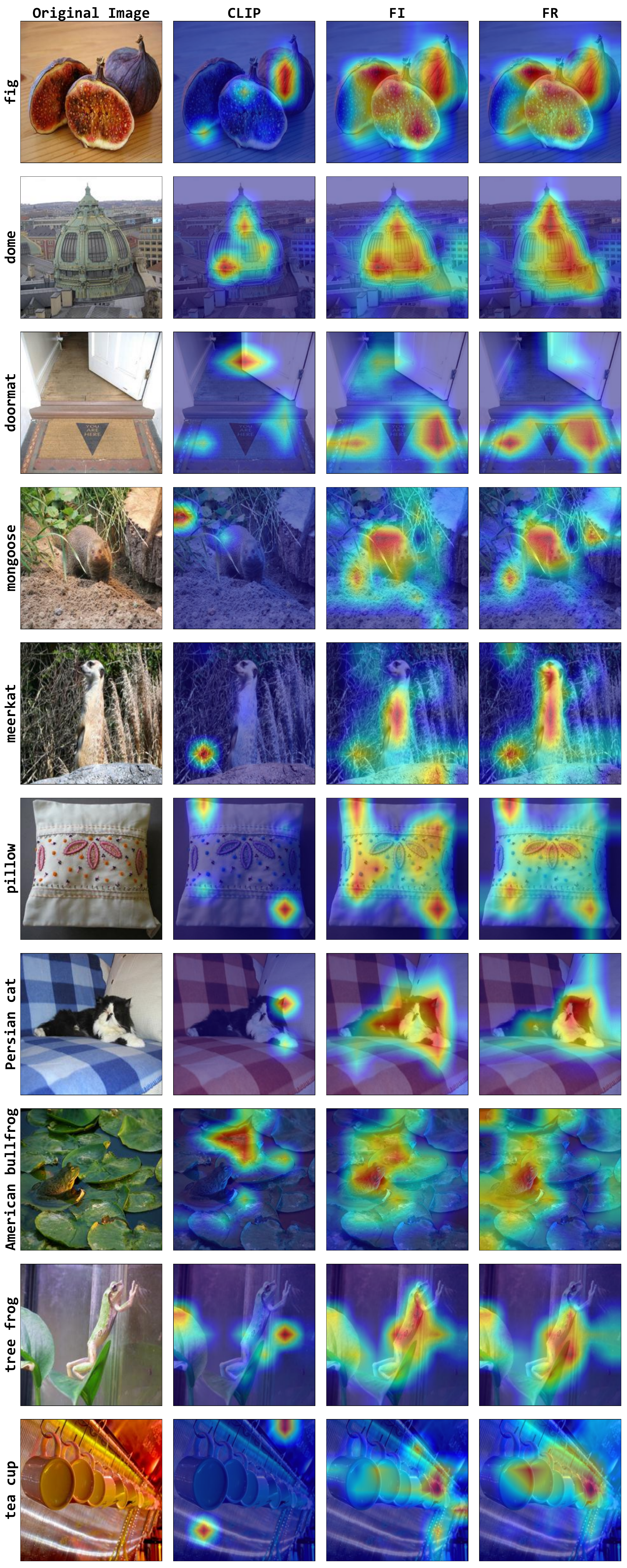}
            
    \caption{Visualization of visual features for exemplar images from ImageNet by GradCAM. From left to right: original images, GradCAM heatmaps overlaid on input images respectively from the CLIP visual encoder, the FI, and the FR. Warmer colors indicate higher attention.
    }
    \label{fig:vis_imagenet}
\end{figure}

\begin{figure}[!h]
    \centering
    \includegraphics[width=\linewidth]{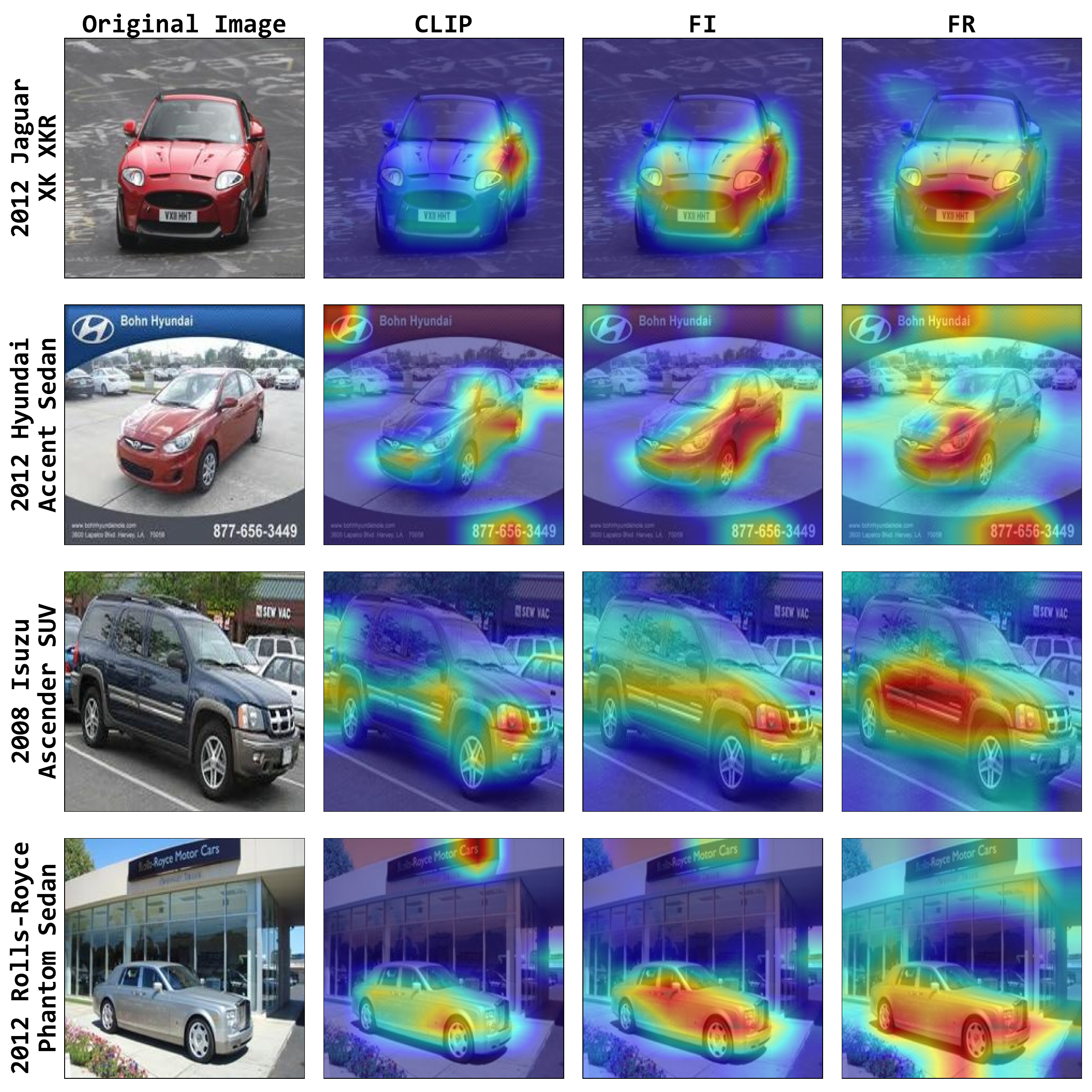}
            
    \caption{Visualization of visual features for exemplar images from StanfordCars by GradCAM. From left to right: original images, GradCAM heatmaps overlaid on input images respectively from the CLIP visual encoder, the FI, and the FR. Warmer colors indicate higher attention.
    }
    \label{fig:vis_cars}
\end{figure}

\begin{figure}[!h]
    \centering
    \includegraphics[width=\linewidth]{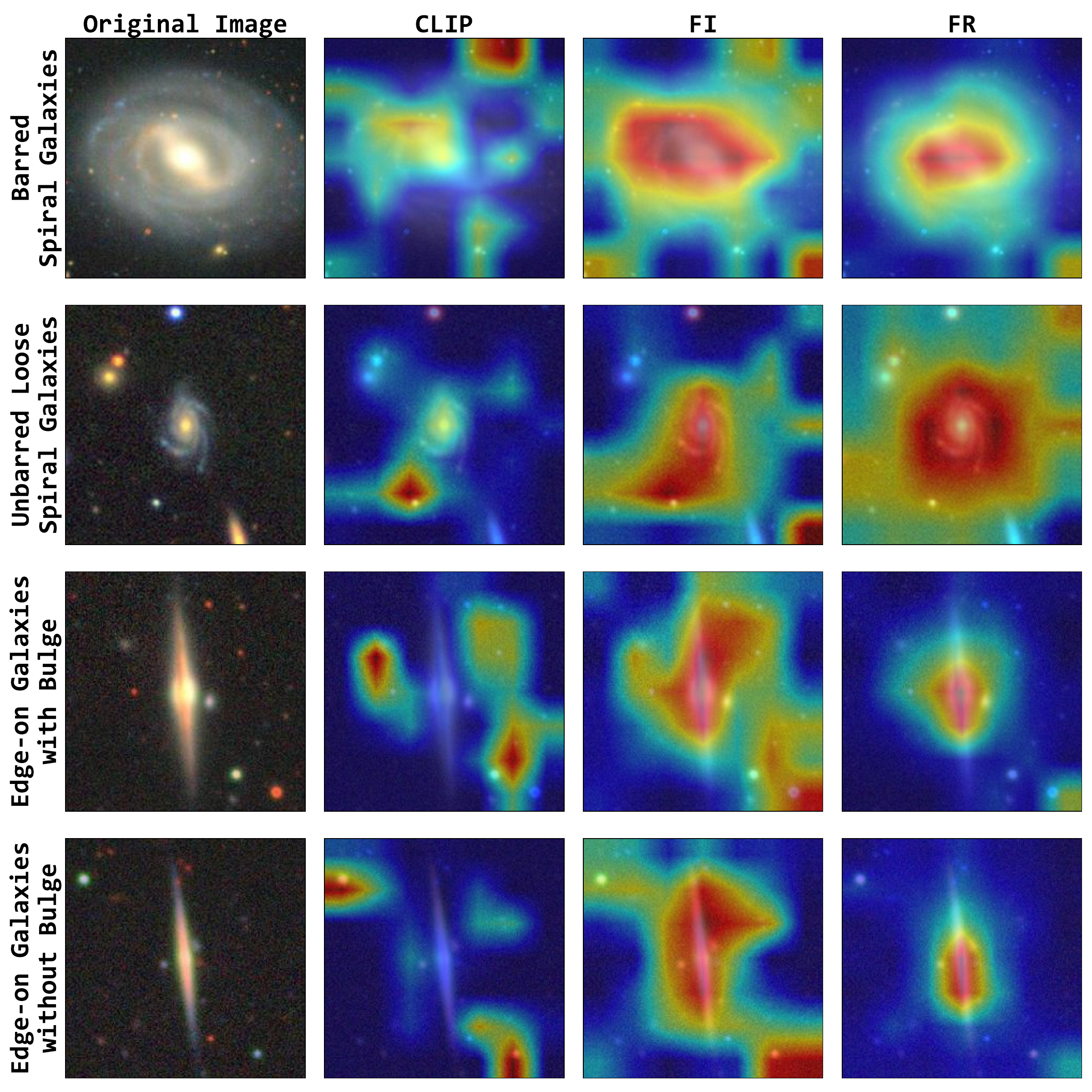}
            
    \caption{Visualization of visual features for exemplar images from Galaxy10 DECaLS by GradCAM. From left to right: original images, GradCAM heatmaps overlaid on input images respectively from the CLIP visual encoder, the FI, and the FR. Warmer colors indicate higher attention.
    }
    \label{fig:vis_galaxy}
\end{figure}

\begin{figure*}[!htb]
    \centering
    \begin{subfigure}[b]{\linewidth}
         \centering
         \includegraphics[width=\linewidth]{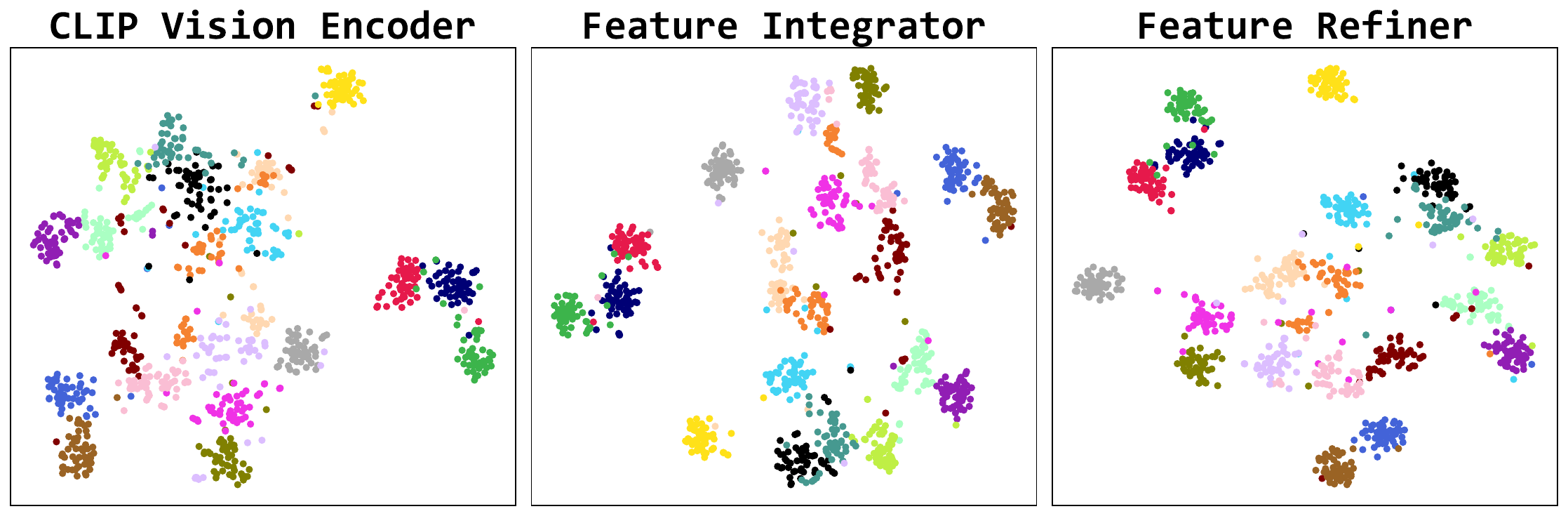}
         \caption{ImageNet}
         \label{fig:tsne-imagenet}
     \end{subfigure}

     \begin{subfigure}[b]{\linewidth}
         \centering
         \includegraphics[width=\linewidth]{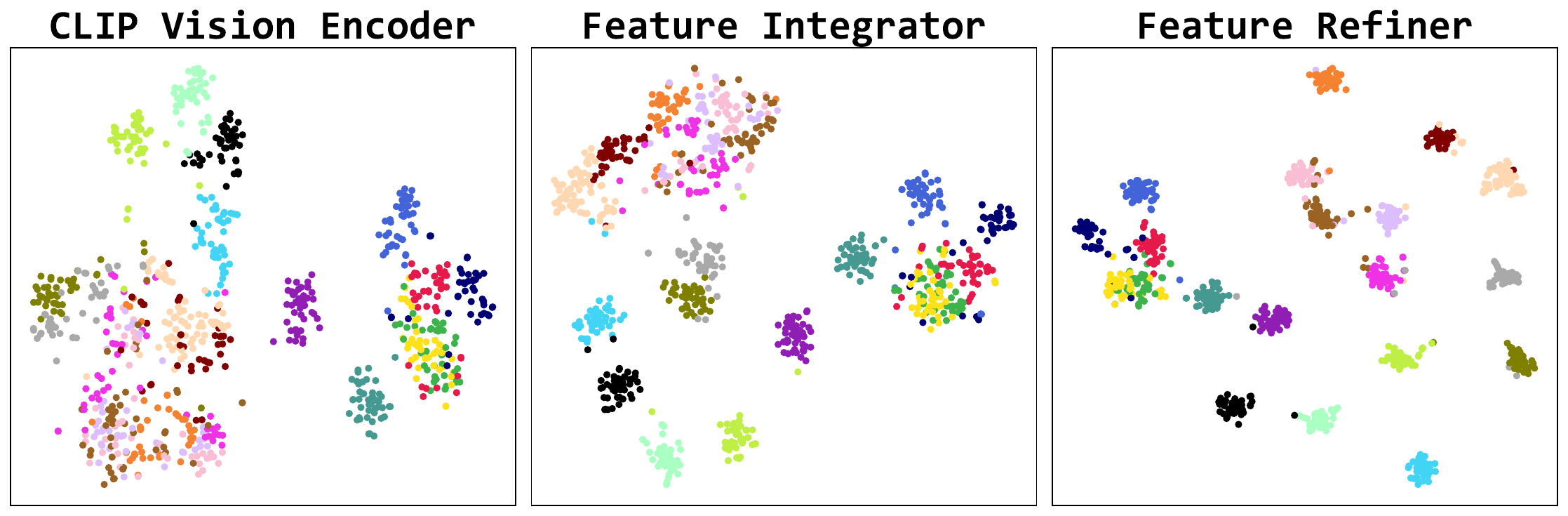}
         \caption{Stanford Cars}
         \label{fig:tsne-cars}
     \end{subfigure}

     \begin{subfigure}[b]{\linewidth}
         \centering
         \includegraphics[width=\linewidth]{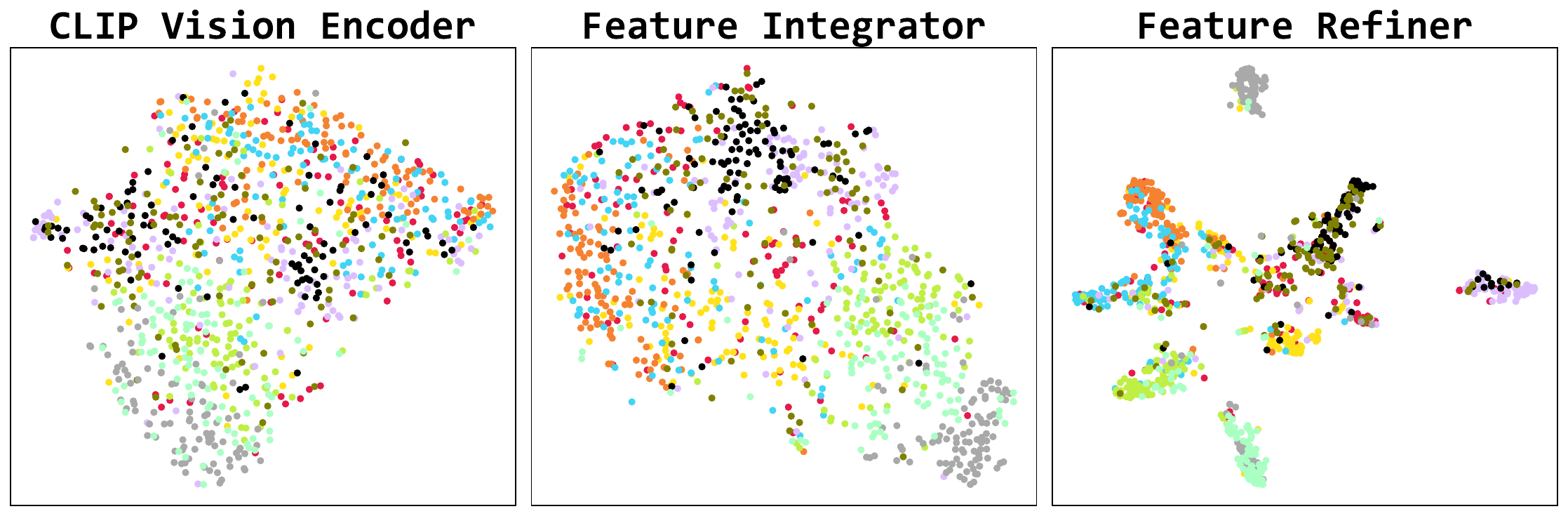}
         \caption{Galaxy10 DECaLS}
         \label{fig:tsne-galaxy}
     \end{subfigure}
            
    \caption{t-SNE visualization of vision feature. Dots in different colors stand for embeddings of different categories. From left to right, three distributions indicate the feature of original CLIP vision encoder, FI and FR, respectively.
    }
    \label{fig:tsne}
\end{figure*}

\end{document}